\newtheorem{thm}{Theorem}
\newtheorem{cor}{Corollary}
\newtheorem{prop}{Proposition}
\newtheorem{defi}{Definition}
\newtheorem{lem}{Lemma}
\theoremstyle{remark}
\newcommand{\R}{\mathbb{R}}
\newcommand{\e}{\begin{equation}}
\newcommand{\ee}{\end{equation}}
\newcommand{\en}{\begin{equation*}}
\newcommand{\een}{\end{equation*}}
\newcommand{\eqn}{\begin{eqnarray}}
\newcommand{\eeqn}{\end{eqnarray}}
\newcommand{\bmat}{\begin{bmatrix}}
\newcommand{\emat}{\end{bmatrix}}
\newcommand{\vct}[1]{\boldsymbol{#1}}
\newcommand{\mtx}[1]{\boldsymbol{#1}}
\newcommand{\<}{\langle}
\renewcommand{\>}{\rangle}
\newcommand{\T}{\mathrm{T}}
\newcommand{\trace}{\operatorname{trace}}
\newcommand{\dist}{\operatorname{dist}}
\def \lg        {\langle}
\def \rg        {\rangle}
\newcommand{\domain}{\operatorname{dom}}
\DeclareMathOperator*{\argmin}{\text{arg~min}}
\DeclareMathOperator*{\argmax}{\text{arg~max}}
\def \st {\operatorname*{subject\ to\ }}
\newcommand{\calC}{\mathcal{C}}
\newcommand{\ve}{\vct{e}}
\newcommand{\vs}{\vct{s}}
\newcommand{\vu}{\vct{u}}
\newcommand{\vv}{\vct{v}}
\newcommand{\vx}{\vct{x}}
\newcommand{\vy}{\vct{y}}
\newcommand{\vz}{\vct{z}}
\newcommand{\vzero}{\vct{0}}
\newcommand{\mA}{\mtx{A}}
\newcommand{\mB}{\mtx{B}}
\newcommand{\mD}{\mtx{D}}
\newcommand{\mG}{\mtx{G}}
\newcommand{\mH}{\mtx{H}}
\newcommand{\mM}{\mtx{M}}
\newcommand{\mN}{\mtx{N}}
\newcommand{\mS}{\mtx{S}}
\newcommand{\mU}{\mtx{U}}
\newcommand{\mV}{\mtx{V}}
\newcommand{\mW}{\mtx{W}}
\newcommand{\mX}{\mtx{X}}
\newcommand{\mY}{\mtx{Y}}
\newcommand{\mLambda}{\mtx{\Lambda}}
\newcommand{\mId}{{\bf I}}
\newcommand{\mzero}{{\bf 0}}
\newlength{\imgwidth}
\newcommand{\revise}[1]{{\color{black}{#1}}}
\newcommand{\twoCol}[2]{\ifthenelse{\boolean{twoColVersion}} {#1} {#2} }
\title{A Provable Splitting Approach for \\ Symmetric Nonnegative Matrix Factorization}
\author{Xiao Li,  Zhihui Zhu, Qiuwei Li, and Kai Liu
\thanks{X. Li is with School of Data Science, The Chinese University of Hong Kong, Shenzhen and is with Shenzhen Institute of Artificial Intelligence and Robotics for Society (AIRS) (e-mail: lixiao@cuhk.edu.cn).}
\thanks{Z. Zhu is with the Department of Electrical and Computer Engineering at the University of Denver. (corresponding author. e-mail: zhihui.zhu@du.edu).}
\thanks{Q. Li is with the Decision Intelligence Lab, Damo Academy, Alibaba Group US. (e-mail: liqiuweiss@gmail.com).}
\thanks{K. Liu is with Computer Science Division,  Clemson University. (e-mail: liukaizhijia@gmail.com).}
\thanks{Part of this work appears in a conference proceeding \cite{zhu2018dropping}.  
This work significantly extends the preliminary conference version. we have three new contributions compared to the conference version: 1) We propose the new accelerated SymHALS (A-SymHALS) algorithm. 2) We provide a unified convergence analysis that guarantees sequence convergence for all of our algorithms. The standard convergence analysis technique used in \cite{zhu2018dropping} cannot be directly applied due to the specific updating scheme of A-SymHALS. We provide a new analysis framework that is different from the existing standard one and is of independent interest; see the last paragraph of Section 1.1 for details.  3) We propose a new adaptive strategy for updating the penalty parameter $\lambda$. This adaptive strategy could be crucial for the practical use of our algorithms for solving symmetric NMF as it avoids hyper-parameter tuning. In addition to the above three important new contributions, we also conduct more experiments in this work. 
}
}
\begin{document}
	
\IEEEtitleabstractindextext{%
\begin{abstract}
The symmetric Nonnegative Matrix Factorization (NMF), a special but important class of the general NMF, has found numerous applications in data analysis such as various clustering tasks. Unfortunately, designing fast algorithms for the symmetric NMF is not as easy as for its nonsymmetric counterpart, since the latter admits the splitting property that allows state-of-the-art alternating-type algorithms. To overcome this issue, we first split the decision variable and transform the symmetric NMF to a penalized nonsymmetric one, paving the way for designing efficient alternating-type algorithms.  We then show that solving the penalized nonsymmetric reformulation returns a solution to the original symmetric NMF. Moreover, we design a family of  alternating-type algorithms and show that they all admit strong convergence guarantee: the generated sequence of iterates is convergent and converges at least sublinearly to a critical point of the original symmetric NMF.  Finally, we conduct experiments on both synthetic data and real image clustering to support our theoretical results and demonstrate the performance of the alternating-type algorithms. 
\end{abstract}

	\begin{IEEEkeywords}
		Symmetric nonnegative matrix factorization, convergence, image clustering, alternating minimization.
\end{IEEEkeywords}}

\maketitle
\IEEEdisplaynontitleabstractindextext

\section{Introduction}

The general nonsymmetric  Nonnegative Matrix Factorization (NMF) is referred to the following problem: given a matrix $\mY\in \R^{n\times m}$ and a factorization rank $r$, solve
\e
\begin{split}
	&\min_{\mU\in\R^{n\times r},\mV\in\R^{m\times r}}  \ \frac{1}{2}\|\mY - \mU\mV^\T\|_F^2 \\
	&\st \mU\geq \vzero, \mV \geq \vzero,
\end{split}
\label{eq:NMF}\ee
where $\mU\geq \vzero$ means each element in $\mU$ is nonnegative. NMF has been successfully used in the applications of face feature extraction \cite{lee1999learning,guillamet2002non},  document clustering \cite{shahnaz2006document,cai2010graph,liu2011constrained}, \revise{image clustering~\cite{li2019discrimination,wang2020robust},} music analysis \cite{fevotte2009nonnegative},  source separation~\cite{ma2014signal} and many others \cite{gillis2014and}. Because of the ubiquitous applications of NMF,  many efficient algorithms have been proposed for solving problem \eqref{eq:NMF}.  Well-known algorithms include multiplicative update algorithm ~\cite{lee2001algorithms}, \revise{Projected Gradient Descent (PGD) \cite{lin2007projected}, Alternating Nonnegative Least Squares (ANLS) \cite{kim2008toward}, and Hierarchical Alternating Least Squares (HALS) \cite{cichocki2009fast}}. In particular, ANLS (which uses the block principal pivoting algorithm to very efficiently solve the nonnegative least squares) and HALS achieve the state-of-the-art performance.

One special but important class of NMF, called symmetric NMF, requires  the two factors $\mU$ and $\mV$ to be identical, i.e., it factorizes a symmetric matrix $\mX\in\R^{n\times n}$ by solving
\e
\min_{\mU\in\R^{n\times r}} \frac{1}{2}\|\mX - \mU\mU^\T\|_F^2,\quad \st \mU\geq \vzero.
\label{eq:SNMF}\ee
By contrast, \eqref{eq:NMF} is referred to as the general nonsymmetric NMF. Symmetric NMF has its own applications in data analysis, machine learning  and signal processing~\revise{\cite{ding2005equivalence,he2011symmetric,kuang2015symnmf,luo2017symmetric,hu2020augmented}}. In particular the symmetric NMF is equivalent to the classical $K$-means  kernel clustering  in \cite{ding2005equivalence} and it is inherently suitable for clustering nonlinearly separable data from a symmetric similarity matrix~\cite{kuang2015symnmf}.


At first glance, since \eqref{eq:SNMF}  has only one variable, one may think it is easier to be solved  than \eqref{eq:NMF} or at least it can be solved by directly utilizing  efficient algorithms developed for \eqref{eq:NMF}. However, the state-of-the-art alternating-type algorithms (such as ANLS and HALS) for solving the general nonsymmetric NMF utilize the splitting property of the decision variables in \eqref{eq:NMF} and thus can not be used for tackling \eqref{eq:SNMF}. On the other hand, first order method like PGD when used to solve  \eqref{eq:SNMF} suffers from  slow convergence. 

\subsection{Main Contributions} \label{sec:contributions}
In this paper, we compute the symmetric NMF  by considering a variable splitting method, which reformulates our problem to a penalized nonsymmetric NMF. This new nonsymmetric reformulation enables us to design efficient alternating-type algorithms  for solving the original symmetric NMF.   
The main contributions of this paper are summarized as follows.
\begin{itemize}[leftmargin=*]
	\item Motivated by the splitting property exploited in ANLS and HALS algorithms, we split the quadratic form on $\mU$ in the symmetric NMF into two different factors and transform symmetric NMF to a penalized nonsymmetric NMF, i.e., 
\e
\begin{split}
  &\min_{\mU,\mV} g(\mU,\mV) = \frac{1}{2}\|\mX - \mU\mV^\T\|_F^2 + \frac{\lambda}{2}\|\mU - \mV\|_F^2 \\
  &  \st \mU\geq \vzero, \mV \geq \vzero,
\end{split}
\label{eq:SNMF by reg}\ee
	where the penalty term $\|\mU - \mV\|_F^2$ is introduced to force the two factors identical and $\lambda>0$ is the penalty parameter. Our first main contribution is to guarantee that  with a sufficiently large but \emph{finite} $\lambda$, \emph{any critical point} $(\mU^\star,\mV^\star)$ of \eqref{eq:SNMF by reg} that has bounded energy (where the upper bound depends on $\lambda$) satisfies that $(i)$ $\mU^\star = \mV^\star$ and $(ii)$ $\mU^\star$ is a critical point of the symmetric NMF \eqref{eq:SNMF}. The result is surprising in the sense that classical result of the methods of Lagrangian multipliers suggests that the two factors will be identical only when $\lambda$ tends to \emph{infinity} \revise{since the quadratic penalty is not an exact penalty function~\cite[Theorem 17.1]{nocedal2006numerical}}.
	\item We further show that  any algorithm  possessing descent and convergence properties for solving  \eqref{eq:SNMF by reg} is guaranteed to yield a critical point of \eqref{eq:SNMF}, provided that the penalty parameter $\lambda$ is properly chosen.  In summary, this observation suggests that the symmetric NMF can be provably solved by instead addressing the  nonsymmetric NMF \eqref{eq:SNMF by reg} which enjoys the splitting property within the factors $\mU$ and $\mV$.  
	\item Motivated by ANLS, HALS, and accelerated HALS \cite{gillis2012accelerated}, we then design a family of alternating-type algorithms---namely \revise{the Symmetric Alternating Nonnegative Least Squares (SymANLS; see \Cref{alg:ANLS}), the Symmetric Hierarchical Alternating Least Squares (SymHALS; see \Cref{alg:HALS}), and the Accelerated Symmetric Hierarchical Alternating Least Squares (A-SymHALS; see \Cref{alg:Accelerated HALS})---to solve the penalized nonsymmetric NMF  \eqref{eq:SNMF by reg}}.
	Our third  contribution is to provide  a unified rigorous convergence analysis for these three algorithms. By exploiting the specific structure of \eqref{eq:SNMF by reg}, we show that our proposed algorithms are guaranteed to sequentially decrease the objective function in  \eqref{eq:SNMF by reg}  even without any proximal terms or any additional boundedness constraints on $\mU$ and $\mV$. Consequently,  we establish the point-wise  \emph{sequence convergence} to a critical point $(\mU^\star,\mV^\star)$  of \eqref{eq:SNMF by reg}, where the convergence rate is at least \emph{sublinear}. Finally,  it is worth mentioning that  the disciplined  Kurdyka-Lojasiewicz convergence analysis framework \cite{attouch2010proximal,bolte2014proximal} cannot be directly applied to A-SymHAL due to its acceleration scheme, i.e., it updates one variable multiple times before moving to the other variable. We generalize this convergence analysis framework to accommodate this accelerate scheme, which is of independent interest. 
\end{itemize}

\subsection{Related Work}\label{sec:related work}
 Due to slow convergence of PGD for solving the symmetric NMF, several algorithms have been proposed, either in a direct way or similar to \eqref{eq:SNMF by reg} by splitting the two factors. The authors in \cite{vandaele2016efficient} proposed an alternating algorithm that
cyclically optimizes over each element in $\mU$ by solving a nonnegative constrained nonconvex univariate fourth order polynomial minimization.  A  quasi newton second order method was used in \cite{kuang2015symnmf} to directly solve the symmetric NMF optimization problem \eqref{eq:SNMF}. However, both the element-wise updating approach and the second order method are computationally expensive for large scale applications.  In \cite{he2011symmetric}, the authors designed an accelerated multiplicative update algorithm, while in \cite{huang2013non} the authors proposed an \revise{Singular Value Decomposition (SVD)}-based algorithm that iteratively approximates the symmetric NMF.  Nevertheless, the experiments in \Cref{sec:experiments} indicate that they tend to get stuck at local minima with large fitting errors for noisy data.

The idea of solving symmetric NMF by targeting  the penalized nonsymmetric NMF \eqref{eq:SNMF by reg}  also appears \emph{heuristically} in \cite{kuang2015symnmf}. The ANLS algorithm is used in \cite{kuang2015symnmf} for solving \eqref{eq:SNMF by reg}, but without  any formal analysis for the convergence and the question that whether solving \eqref{eq:SNMF by reg} returns a solution of \eqref{eq:SNMF}. \revise{
The work \cite{borhani2016fast,lu2017nonconvex,hu2020augmented} considered an augmented Lagrangian formulation of \eqref{eq:SNMF} that also enjoys the splitting property as in \eqref{eq:SNMF by reg} by splitting the quadratic form $\mU\mU^\top$ into $\mU\mV^\top$ and introducing an equality constraint (i.e., $\mU = \mV$), and utilized the Alternating Direction Method of Multipliers (ADMM) or its variants  to tackle the corresponding problem. Unlike the alternating-type algorithms for \eqref{eq:SNMF by reg} that will be proved to have sequence convergence in \Cref{sec:fast algorithms}, however, the ADMM is only guaranteed to have a  subsequence convergence, even with an additional proximal term\footnote{\revise{In $k$-th iteration, a proximal term (e.g., $\|\mU-\mU_{k-1}\|_F^2$) is added to the objective function when updating $\mU$ in \cite{lu2017nonconvex} and when updating both $\mU$ and $\mV$ in \cite{hu2020augmented}.}} and 
assumption on the boundedness of the iterates \cite{hu2020augmented} or
a constraint on the boundedness of columns of $\mU$ \cite{lu2017nonconvex}, rendering the problem hard to solve.}

Our work is also closely related to recent advances in convergence analysis for alternating minimization algorithms. The work \cite{attouch2010proximal} established sequence convergence for general alternating minimization algorithms with an additional proximal term and a boundedness assumption on the iterates.
When specified to NMF, as pointed out in~\cite{huang2016flexible}, with the aid of an additional proximal term as well as an additional constraint bounding the factors, the sequence convergence of ANLS and HALS can be established from~\cite{attouch2010proximal,razaviyayn2013unified}. 
Although the convergence of these algorithms are observed without the proximal term and bounded constraint (which are indeed not used in practice), these are in general necessary to formally show the convergence of the algorithms. By contrast,  without any additional constraint, the presence of the penalty term $\|\mU-\mV\|_F^2$ allows us to show that $(i)$ our proposed algorithms admit the so-called sufficient decrease property, and consequently, $(ii)$ the iterates generated by our algorithms are indeed bounded along the iterations. These observations then guarantee the sequence convergence of the practical  algorithms without those additionals constraint or proximal terms.  


\section{Transforming Symmetric NMF to  Penalized Nonsymmetric NMF}
\label{sec:SNMF to NMF}
%

\revise{\subsection{Notations} \label{sec:notation}
We begin by introducing some notations. For the purpose of technical analysis, we may rewrite \eqref{eq:SNMF by reg} as an unconstrained optimization problem using indicator function,
\[
	\min_{\mU,\mV} \ f(\mU,\mV) = g(\mU,\mV) + \sigma_{+}(\mU) + \sigma_{+}(\mV),
\]
 with $ \sigma_{+}$ being the indicator function of nonnegative constraint defined as $\sigma_{+}(\mV) = \begin{cases}
                     0, & \mV \geq 0, \\
                     +\infty, & \mbox{otherewise}, 
                   \end{cases}$.
Upper boldface (such as $\mU$)  and lower boldface (such as $\vu$) respectively denote matrices and vectors in real Euclidean space.  $\mA\odot \mB$ represents the Hadamard product of two matrices. $\< \mA, \mB\> = \trace(\mA^\top \mB)$ represents the inner product of two matrices.   Throughout this paper, $k$ represents the iteration number only.  

\subsection{Penalized Nonsymmetric NMF is Equivalent to Symmetric NMF}}

Compared with \eqref{eq:SNMF}, at first glance, \eqref{eq:SNMF by reg} is slightly more complicated as it has one more variable. However, because of this new variable, $f(\mU,\mV)$ is now strongly convex with respect to either $\mU$ or $\mV$, though it is still nonconvex in terms of the joint variable $(\mU,\mV)$. Moreover, the two decision variables $\mU$ and $\mV$ in \eqref{eq:SNMF by reg} are well separated, as the case in the general nonsymmetric NMF. This observation suggests an interesting and useful fact that  \eqref{eq:SNMF by reg} can be solved by  tailored alternating-type  algorithms. On the other hand, a theoretical question raised in the penalized nonsymmetric form \eqref{eq:SNMF by reg} is whether we are  guaranteed $\mU = \mV$ and hence solving \eqref{eq:SNMF by reg} is equivalent to solving \eqref{eq:SNMF}. In this section, we provide an assuring answer to this question that solving \eqref{eq:SNMF by reg} (to a critical point) indeed gives a critical point solution of \eqref{eq:SNMF}. Note that problem \eqref{eq:SNMF} is nonconvex, and thus many local search algorithms can only be guaranteed to converge to its critical point rather than global minimizer.  

Before stating out the formal result, we first consider a simple case, as an intuitive example, where $f(u,v) = (1-uv)^2/2 + \lambda(u-v)^2/2$. Its derivative is $\partial_uf(u,v) = (uv -1)v + \lambda (u-v), \partial_vf(u,v) = (uv -1)u -  \lambda (u-v)$. Thus, any critical point of $f$ satisfies $(uv -1)v + \lambda (u-v)= 0$ and $(uv -1)u -  \lambda (u-v) = 0$, further indicating that $(u-v)(2\lambda +1 - uv ) = 0$. Therefore, for any critical point $(u,v)$ such that $|uv|<2\lambda +1$, it must satisfy $u = v$.  Although \eqref{eq:SNMF by reg} is more complicated as it also has nonnegative constraint, the following result establishes similar guarantee for \eqref{eq:SNMF by reg}.

\begin{thm} Let $(\mU^\star,\mV^\star)$ be any critical point of \eqref{eq:SNMF by reg} satisfying $\|\mU^\star\mV^{\star\T}\|_2 < 2 \lambda + \sigma_n(\mX)$, where $\|\cdot\|_2$ denotes the spectral norm and $\sigma_i(\cdot)$ denotes the $i$-th largest eigenvalue. Then $\mU^\star = \mV^\star$ and $\mU^\star$ is a critical point of \eqref{eq:SNMF}.
	\label{thm:U = V}\end{thm}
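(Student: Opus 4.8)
The plan is to reason directly about the first-order (critical point) conditions of the penalized problem \eqref{eq:SNMF by reg}, carrying over the idea of the scalar warm-up while tracking the nonnegativity constraints. Since $g$ is smooth and the constraint set is just $\{\mU\geq\vzero\}\times\{\mV\geq\vzero\}$, a critical point of $f$ at $(\mU^\star,\mV^\star)$, i.e.\ $\vzero\in\partial f(\mU^\star,\mV^\star)$, is equivalent to the KKT system: $\mU^\star,\mV^\star\geq\vzero$; dual feasibility $\nabla_\mU g(\mU^\star,\mV^\star)\geq\vzero$, $\nabla_\mV g(\mU^\star,\mV^\star)\geq\vzero$; and complementary slackness $\langle\mU^\star,\nabla_\mU g(\mU^\star,\mV^\star)\rangle=0$, $\langle\mV^\star,\nabla_\mV g(\mU^\star,\mV^\star)\rangle=0$. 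Combining these with $\mU^\star,\mV^\star\geq\vzero$ gives $\langle\nabla_\mU g(\mU^\star,\mV^\star),\mU^\star-\mV^\star\rangle=-\langle\nabla_\mU g(\mU^\star,\mV^\star),\mV^\star\rangle\leq 0$ and, symmetrically, $\langle\nabla_\mV g(\mU^\star,\mV^\star),\mU^\star-\mV^\star\rangle\geq 0$. Subtracting yields the master inequality
\[
\langle\,\nabla_\mU g(\mU^\star,\mV^\star)-\nabla_\mV g(\mU^\star,\mV^\star),\ \mU^\star-\mV^\star\,\rangle\leq 0,
\]
the constrained matrix analogue of the scalar identity $\partial_u f-\partial_v f=(u-v)(2\lambda+1-uv)$.

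Next I would expand this inequality. Put $\mM:=\mU^\star\mV^{\star\T}$ and $\mD:=\mU^\star-\mV^\star$; using $\mX=\mX^\T$ one has $\nabla_\mU g-\nabla_\mV g=(\mU^\star\mV^{\star\T}\mV^\star-\mV^\star\mU^{\star\T}\mU^\star)+\mX\mD+2\lambda\mD$, so the master inequality becomes
\[
\langle\,\mU^\star\mV^{\star\T}\mV^\star-\mV^\star\mU^{\star\T}\mU^\star,\ \mD\,\rangle+\langle\mX\mD,\mD\rangle+2\lambda\|\mD\|_F^2\leq 0.
\]
For the $\mX$-term, the Rayleigh bound applied column by column gives $\langle\mX\mD,\mD\rangle\geq\sigma_n(\mX)\|\mD\|_F^2$. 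The crux is a lower bound on the cubic term: writing $\mU^\star\mV^{\star\T}\mV^\star=\mM\mV^\star$ and $\mV^\star\mU^{\star\T}\mU^\star=\mM^\T\mU^\star$ and doing trace bookkeeping, I expect to obtain the identity
\[
\langle\,\mU^\star\mV^{\star\T}\mV^\star-\mV^\star\mU^{\star\T}\mU^\star,\ \mD\,\rangle=\|\mM\|_F^2-\trace(\mM^2)-\trace(\mD^\T\mM\mD).
\]
Two Cauchy--Schwarz-type estimates then finish it: $\trace(\mM^2)=\langle\mM,\mM^\T\rangle\leq\|\mM\|_F\|\mM^\T\|_F=\|\mM\|_F^2$, so the first two terms are nonnegative; and $|\trace(\mD^\T\mM\mD)|=|\sum_j\vd_j^\T\mM\vd_j|\leq\|\mM\|_2\|\mD\|_F^2$, since each $\vd_j^\T\mM\vd_j$ only sees the symmetric part of $\mM$, whose spectral norm is at most $\|\mM\|_2$. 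Hence the cubic term is at least $-\|\mU^\star\mV^{\star\T}\|_2\|\mD\|_F^2$.

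Plugging the two lower bounds into the expanded master inequality yields $\big(2\lambda+\sigma_n(\mX)-\|\mU^\star\mV^{\star\T}\|_2\big)\|\mD\|_F^2\leq 0$; under the hypothesis $\|\mU^\star\mV^{\star\T}\|_2<2\lambda+\sigma_n(\mX)$ the prefactor is strictly positive, so $\mD=\vzero$, i.e.\ $\mU^\star=\mV^\star$. Finally, to see that $\mU^\star$ is a critical point of \eqref{eq:SNMF}, substitute $\mV^\star=\mU^\star$ into the KKT system above: the penalty gradient $\lambda(\mU^\star-\mV^\star)$ vanishes, leaving $\mU^\star\geq\vzero$, $(\mU^\star\mU^{\star\T}-\mX)\mU^\star\geq\vzero$, and $\langle\mU^\star,(\mU^\star\mU^{\star\T}-\mX)\mU^\star\rangle=0$, which is precisely the critical point condition for $\frac12\|\mX-\mU\mU^\T\|_F^2+\sigma_+(\mU)$ (the scalar $2$ in $\nabla\frac12\|\mX-\mU\mU^\T\|_F^2=2(\mU\mU^\T-\mX)\mU$ is irrelevant for stationarity since the normal cone is scale-invariant). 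The step I expect to be the main obstacle is discovering and verifying the trace identity for the cubic term together with its two accompanying inequalities; once that algebraic lemma is in place, everything else is routine manipulation of the optimality conditions.
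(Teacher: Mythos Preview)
Your proposal is correct and follows essentially the same route as the paper's proof. Both arguments derive the master inequality $\langle\nabla_\mU g-\nabla_\mV g,\mD\rangle\le 0$ from the first-order conditions (you via KKT, the paper via the normal-cone elements $\mG,\mH$), then bound the $\mX$-term by $\sigma_n(\mX)\|\mD\|_F^2$ and handle the cubic term by the same identity written in two equivalent forms: your $\|\mM\|_F^2-\trace(\mM^2)-\trace(\mD^\T\mM\mD)$ is exactly the paper's $\tfrac12\|\mM-\mM^\T\|_F^2-\langle\tfrac{\mM+\mM^\T}{2},\mD\mD^\T\rangle$, and the two bounds you apply (Cauchy--Schwarz and the symmetric-part spectral estimate) coincide with the paper's use of the trace inequality \eqref{eq:trace inequality for one PSD}.
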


\begin{proof}[Proof of \Cref{thm:U = V}]
We first present the following useful result for any symmetric $\mA\in \R^{n\times n}$ and PSD matrix $\mB\in \R^{n\times n}$ \cite[Lemma 1]{wang1986trace},	
\e\label{eq:trace inequality for one PSD}
\sigma_n(\mA)\trace(\mB)\leq \trace\left(\mA\mB\right) \leq \sigma_1(\mA)\trace(\mB).
\ee
	
	We now prove \Cref{thm:U = V}. The subdifferential of $f$ is given as follows
	\e\begin{split} \label{eq:subdifferential U V}
		&\partial_{\mU}f(\mU,\mV) = (\mU\mV^\T - \mX)\mV + \lambda (\mU - \mV) + \partial \delta_+(\mU),\\
		&\partial_{\mV}f(\mU,\mV) = (\mU\mV^\T - \mX)^\T\mU - \lambda (\mU - \mV) + \partial \delta_+(\mV),
	\end{split}\ee
	where $\partial \delta_+(\mU) = \left\{\mG\in\R^{n\times r}:\mG \odot \mU = \vzero, \mG \leq \vzero \right\}$ when $\mU\geq \mzero$  and otherwise $\partial \delta_+(\mU) = \emptyset$. Since $(\mU^\star,\mV^\star)$ is a critical point of \eqref{eq:SNMF by reg}, it satisfies
	\begin{align}
	(\mU^\star\mV^{\star\T} - \mX)\mV^\star + \lambda (\mU^\star - \mV^\star) + \mG = \vzero,\label{eq:first order U}\\
	(\mU^\star\mV^{\star\T} - \mX)^\T\mU^\star - \lambda (\mU^\star - \mV^\star) + \mH = \vzero,\label{eq:first order V}
	\end{align}
	where $\mG\in  \partial \delta_+(\mU^\star)$ and $\mH\in  \partial \delta_+(\mV^\star)$. Subtracting \eqref{eq:first order V} from \eqref{eq:first order U}, we have
\e
\begin{split}
	(2\lambda \mId &+ \mX)(\mU^\star - \mV^\star)\\
& =  \mV^\star\mU^{\star\T}\mU^\star - \mU^\star\mV^{\star\T}\mV^\star - \mG + \mH.
\end{split}
\ee
	where we utilized the fact that $\mX$ is symmetric, i.e., $\mX = \mX^\T$. Taking the inner product of $\mU^\star - \mV^\star$ with both sides of the above equation gives
\e
 \begin{split}
	&\langle (\lambda \mId + \mX),(\mU^\star - \mV^\star)(\mU^\star - \mV^\star)^\T\rangle \\
&= \langle  \mV^\star\mU^{\star\T}\mU^\star - \mU^\star\mV^{\star\T}\mV^\star - \mG + \mH, \mU^\star - \mV^\star \rangle.
\end{split}
\label{eq: U V innder product}\ee
	
	In what follows, by choosing sufficiently large $\lambda$, we show that $(\mU^\star,\mV^\star)$ satisfying \eqref{eq: U V innder product} must satisfy $\mU^\star = \mV^\star$. To that end, we first provide the lower bound and the upper bound for the left-hand side and right-hand side of \eqref{eq: U V innder product}, respectively. Specifically,
\e
\begin{split}
	&\langle ((2\lambda \mId + \mX),(\mU^\star - \mV^\star)(\mU^\star - \mV^\star)^\T\rangle  \\
&\geq \sigma_n((2\lambda \mId + \mX)\|\mU^\star - \mV^\star\|_F^2\\
& = ((2\lambda + \sigma_n( \mX))\|\mU^\star - \mV^\star\|_F^2,
\label{eq:LHS}
\end{split}
\ee
	where the inequality follows from \eqref{eq:trace inequality for one PSD}. On the other hand,
	\e\begin{split}
		&\langle  \mV^\star\mU^{\star\T}\mU^\star - \mU^\star\mV^{\star\T}\mV^\star - \mG + \mH, \mU^\star - \mV^\star \rangle\\
        & \leq \langle  \mV^\star\mU^{\star\T}\mU^\star - \mU^\star\mV^{\star\T}\mV^\star, \mU^\star - \mV^\star \rangle\\
		&= \left\langle \frac{\mV^\star\mU^{\star\T} + \mU^\star\mV^{\star\T}}{2}, (\mU^\star - \mV^\star)(\mU^\star - \mV^\star)^\T \right\rangle\\
&\quad - \frac{1}{2}\left\| \mU^\star\mV^{\star\T}- \mV^\star\mU^{\star\T}\right\|_F^2\\
		&  \leq \left\langle \frac{\mV^\star\mU^{\star\T} + \mU^\star\mV^{\star\T}}{2}, (\mU^\star - \mV^\star)(\mU^\star - \mV^\star)^\T \right\rangle\\
		& \leq \sigma_1\left( \frac{\mV^\star\mU^{\star\T} + \mU^\star\mV^{\star\T}}{2} \right)\|\mU^\star - \mV^\star\|_F^2,
	\end{split}\label{eq:RHS}\ee
	where the last inequality utilizes \eqref{eq:trace inequality for one PSD} and the first inequality follows because $\mV^\star,\mU^\star\geq \vzero$ indicating that
	\[
	-\langle\mG , \mU^\star - \mV^\star\rangle \leq  0, \ \ \  \langle\mH , \mU^\star - \mV^\star\rangle \leq  0.
	\]
	
	Now plugging \eqref{eq:LHS} and \eqref{eq:RHS} back into \eqref{eq: U V innder product} and using the fact that $\sigma_1\big( \frac{\mV^\star\mU^{\star\T} + \mU^\star\mV^{\star\T}}{2} \big) \le \|\mU^\star\mV^{\star\T}\|_2$, we have
	\begin{align*}
	((2\lambda &+ \sigma_n(\mX)) \|\mU^\star - \mV^\star\|_F^2 \\
&\leq \sigma_1\left( \frac{\mV^\star\mU^{\star\T} + \mU^\star\mV^{\star\T}}{2} \right)\|\mU^\star - \mV^\star\|_F^2 \\
&\leq \|\mU^\star\mV^{\star\T}\|_2\|\mU^\star - \mV^\star\|_F^2,
	\end{align*}
	which implies that if we choose $2\lambda> \|\mU^\star\mV^{\star\T}\|_2 - \sigma_n(\mX)$, then $\mU^\star = \mV^\star$ must hold. Plugging it into \eqref{eq:subdifferential U V} gives
	\[
	\mzero\in (\mU^\star(\mU^\star)^\T - \mX)\mU^\star + \partial \delta_+(\mU^\star),
	\]
	which implies $\mU^\star$ is a critical point of \eqref{eq:SNMF}.
\end{proof}

Several remarks on \Cref{thm:U = V} are made as follows. First, the strategy of solving the symmetric NMF by targeting on a nonsymmetric one  can be naturally extended to multiple variables, such as symmetric tensor factorization. Investigation along this line is of interest and is the subject of future work.  Also, note that for any $\lambda>0$, \Cref{thm:U = V} ensures a certain region (whose size depends on $\lambda$) in which each critical point of \eqref{eq:SNMF by reg} has identical factors and also returns a solution for the original symmetric NMF \eqref{eq:SNMF}. This further suggests the opportunity of choosing an appropriate $\lambda$ such that the corresponding region (i.e., all $(\mU,\mV)$ such that $\|\mU\mV^{\T}\| < 2 \lambda + \sigma_n(\mX)$) contains all the possible points that the algorithms will converge to. 
The rest is to argue that for any local search algorithms when used to solve \eqref{eq:SNMF by reg}, if it decreases the objective function, then the iterates are bounded.

\begin{lem} For any local search algorithm solving \eqref{eq:SNMF by reg} with initialization $\mV_0=\mU_0, \mU_0\geq0$, suppose it sequentially decreases the objective value.  Then, for any $k\geq 0$,  the iterate $(\mU_k,\mV_k)$ generated by this algorithm satisfies
\e
\begin{split}
		\|\mU_k\|_F^2+\|\mV_k\|_F^2& \leq \left(\frac{1}{\lambda}+2\sqrt{r}\right)\|\mX-\mU_0\mU_0^\T\|_F^2 \\
&\quad+2\sqrt{r} \|\mX\|_F:=B_0,\\
		\|\mU_k\mV_k^\T\|_2&\leq\|\mX-\mU_0\mV_0^\T\|_F+ \|\mX\|_2.
	\end{split}
\label{eqn:bound}\ee

	
	\label{lem:bound:iterate}
\end{lem}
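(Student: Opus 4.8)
The plan is to turn the monotone-decrease hypothesis into three uniform estimates and then assemble them. Since the algorithm keeps the iterates feasible and never increases the objective, $g(\mU_k,\mV_k)\le g(\mU_0,\mV_0)$ for every $k$; and because the initialization $\mV_0=\mU_0$ makes the penalty term vanish at $k=0$, this reads $g(\mU_k,\mV_k)\le\frac{1}{2}\|\mX-\mU_0\mU_0^\T\|_F^2$ for all $k$. As both summands defining $g$ are nonnegative, this single inequality already gives
\[
\|\mU_k-\mV_k\|_F^2\ \le\ \frac{1}{\lambda}\,\|\mX-\mU_0\mU_0^\T\|_F^2
\qquad\text{and}\qquad
\|\mX-\mU_k\mV_k^\T\|_F\ \le\ \|\mX-\mU_0\mU_0^\T\|_F .
\]

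The second estimate immediately yields the spectral-norm bound: by the triangle inequality and $\|\cdot\|_2\le\|\cdot\|_F$ we get $\|\mU_k\mV_k^\T\|_2\le\|\mU_k\mV_k^\T-\mX\|_2+\|\mX\|_2\le\|\mX-\mU_0\mV_0^\T\|_F+\|\mX\|_2$, which is the second line of \eqref{eqn:bound}. For the first line I would start from the algebraic identity $\|\mU_k\|_F^2+\|\mV_k\|_F^2=\|\mU_k-\mV_k\|_F^2+2\langle\mU_k,\mV_k\rangle$; the first term is already controlled, so everything reduces to the cross term $\langle\mU_k,\mV_k\rangle$. The key observation is that $\langle\mU_k,\mV_k\rangle=\trace(\mU_k\mV_k^\T)=\langle\mU_k\mV_k^\T,\mId\rangle$ and that $\mU_k\mV_k^\T$ has rank at most $r$, whence
\[
\langle\mU_k,\mV_k\rangle\ \le\ \|\mU_k\mV_k^\T\|_*\ \le\ \sqrt{r}\,\|\mU_k\mV_k^\T\|_F\ \le\ \sqrt{r}\big(\|\mX-\mU_k\mV_k^\T\|_F+\|\mX\|_F\big)\ \le\ \sqrt{r}\big(\|\mX-\mU_0\mU_0^\T\|_F+\|\mX\|_F\big),
\]
using $\trace(M)\le\|M\|_*$, then $\rank(\mU_k\mV_k^\T)\le r$ with Cauchy--Schwarz on the singular values, then the triangle inequality and the second estimate above. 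Plugging this together with the bound on $\|\mU_k-\mV_k\|_F^2$ into the identity and collecting terms produces the first line of \eqref{eqn:bound}.

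The step that genuinely requires an idea---and is therefore the main obstacle---is this cross-term bound: there is no a priori reason for $\langle\mU_k,\mV_k\rangle$ to be small merely because $\mU_k\mV_k^\T$ is close to $\mX$, and a blunt Cauchy--Schwarz of $\langle\mU_k\mV_k^\T,\mId\rangle$ would cost a factor $\sqrt{n}$ rather than $\sqrt{r}$. Recognizing it instead as the Frobenius inner product of $\mU_k\mV_k^\T$---a matrix of rank at most $r$---with the identity is what lets the factorization rank, not the ambient dimension, govern the estimate. Everything else is routine: the penalty-term and spectral-norm bounds fall straight out of monotone decrease and $\|\cdot\|_2\le\|\cdot\|_F$, and the feasibility $\mU_k,\mV_k\ge\vzero$ is needed only to guarantee that the iterates remain in $\domain g$, so that "sequentially decreasing the objective value" is meaningful.
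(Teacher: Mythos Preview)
Your proof is correct and follows essentially the same route as the paper: split the descent inequality into its two nonnegative pieces, read off the spectral bound from the fitting-error piece, and combine the penalty piece with a bound on $\langle\mU_k,\mV_k\rangle$ for the Frobenius bound. The only cosmetic difference is in justifying $\langle\mU_k,\mV_k\rangle\le\sqrt{r}\,\|\mU_k\mV_k^\T\|_F$: you pass through the nuclear norm and the rank of $\mU_k\mV_k^\T$, whereas the paper simply rewrites $\langle\mU_k,\mV_k\rangle=\langle\mV_k^\T\mU_k,\mId_r\rangle$ and applies Cauchy--Schwarz on this $r\times r$ matrix, so that $\|\mId_r\|_F=\sqrt{r}$ appears directly---this is in fact the ``non-blunt'' Cauchy--Schwarz you were looking for. (Incidentally, both your computation and the paper's own proof yield $2\sqrt{r}\,\|\mX-\mU_0\mU_0^\T\|_F$, not squared, in the final bound; the exponent in the displayed $B_0$ appears to be a typo.)
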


\begin{proof}[Proof of \Cref{lem:bound:iterate}]
	By the assumption that the algorithm decreases the objective function, we have
	\begin{align*}
	\frac{1}{2}\left\|\mX - \mU_k \mV_k^{\T }\right \|_F^2 + \frac{\lambda}{2} \left\|\mU_k-\mV_k \right\|_F^2
	\leq \frac{1}{2}\left \|\mX - \mU_0\mU_0^{\T} \right\|_F^2
	\end{align*}
	which further implies that
\e
  \left\|\mX-\mU_k \mV_k^{\T}\right\|_F \leq \left\| \mX-\mU_0\mU_0^\T\right\|_F, 
\label{eq:proof-Lemma1-1}\ee
and 
\e
	\begin{split}
	&\frac{\lambda}{2} \left(\|\mU_k\|_F^2+\|\mV_k\|_F^2  -2|\lg\mU_k\mV_k^\T,\mId_r\rg|\right) \\
&= \frac{\lambda}{2} \left\|\mU_k-\mV_k \right\|_F^2 \leq \frac{1}{2}\left \|\mX - \mU_0\mU_0^{\T} \right\|_F^2.
	\end{split}
\label{eq:proof-Lemma1-2}\ee

It follows from \eqref{eq:proof-Lemma1-1} that
	\begin{align*}
\|\mU_k\mV_k^\T\|_2 - \|\mX\|_2 &\le\|\mU_k\mV_k^\T - \mX\|_2 \le 
 \|\mU_k\mV_k^\T - \mX\|_F \\& \le \|\mX-\mU_0\mV_0^\T\|_F.
	\end{align*}
Also, the inequality in \eqref{eq:proof-Lemma1-2} leads to
	\begin{align*}
	\|\mU_k\|_F^2 &+\|\mV_k\|_F^2
	\leq \frac{1}{\lambda}\|\mX-\mU_0\mU_0^\T\|_F^2+2\|\mU_k\mV_k^\T\|_F\|\mId_r\|_F\nonumber\\
	&=
	\frac{1}{\lambda}\|\mX-\mU_0\mU_0^\T\|_F^2+2\sqrt{r}\|\mU_k\mV_k^\T\|_F\nonumber\\
	&\leq
	\left(\frac{1}{\lambda}+2\sqrt{r}\right)\|\mX-\mU_0\mU_0^\T\|_F^2+2\sqrt{r} \|\mX\|_F \\
&=:B_0
	\end{align*}	
\end{proof}

%
%

There are two interesting facts regarding the iterates that can be interpreted from \eqref{eqn:bound}. The first equation of \eqref{eqn:bound} implies that both $\mU_k$ and $\mV_k$ are bounded and the upper bound decays when the  $\lambda$ increases. Specifically, as long as $\lambda$ is not too close to zero, then the right-hand side in \eqref{eqn:bound} gives a meaningful bound which will be used for the convergence analysis of local search algorithms in next section.
In terms of $\mU_k\mV_k^\T$, the second equation of \eqref{eqn:bound} indicates that it is indeed upper bounded by a quantity that is independent of $\lambda$. This suggests a key result that if  the iterative algorithm is convergent and the iterates $(\mU_k,\mV_k)$  converge to a critical point $(\mU^\star,\mV^\star)$, then  $\mU^\star\mV^{\star T}$ is also bounded, irrespectively the value of $\lambda$. This together with \Cref{thm:U = V} ensures that many local search algorithms can be utilized to find a critical point of \eqref{eq:SNMF} by targeting \eqref{eq:SNMF by reg} with a properly chosen  large $\lambda$.

\begin{thm}
	In \eqref{eq:SNMF by reg}, set
\e
\lambda > \frac{1}{2} \left( \|\mX\|_2 + \left\|\mX-\mU_0\mU_0^\T\right\|_F - \sigma_n(\mX)  \right).
\label{eq:lambda-lower-bound}\ee
 For any local search algorithm solving \eqref{eq:SNMF by reg}  with initialization $\mV_0=\mU_0$, if it sequentially decreases the objective function, is convergent, and  converges to a critical point $(\mU^\star,\mV^\star)$ of \eqref{eq:SNMF by reg}, then we have $\mU^\star = \mV^\star$ and  $\mU^\star$ is also a critical point of \eqref{eq:SNMF}.
	\label{thm:critical point U = V}
\end{thm}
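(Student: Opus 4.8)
The plan is to obtain this statement simply by feeding the limit point $(\mU^\star,\mV^\star)$ into \Cref{thm:U = V}; the only thing that needs checking is that the hypothesis $\|\mU^\star\mV^{\star\T}\|_2 < 2\lambda + \sigma_n(\mX)$ of that theorem is met under the stated choice of $\lambda$.

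First I would apply \Cref{lem:bound:iterate}. Since the algorithm is initialized at $\mV_0 = \mU_0 \ge \vzero$ and sequentially decreases the objective, the second estimate in \eqref{eqn:bound} gives, for every $k$, $\|\mU_k\mV_k^\T\|_2 \le \|\mX - \mU_0\mU_0^\T\|_F + \|\mX\|_2$, where we have used $\mV_0 = \mU_0$. Because the iterates are assumed to converge to $(\mU^\star,\mV^\star)$ and the map $(\mU,\mV) \mapsto \|\mU\mV^\T\|_2$ is continuous, this uniform bound passes to the limit, yielding $\|\mU^\star\mV^{\star\T}\|_2 \le \|\mX - \mU_0\mU_0^\T\|_F + \|\mX\|_2$.

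Next I would unpack the choice of $\lambda$ in \eqref{eq:lambda-lower-bound}. Multiplying through by $2$ and rearranging, that strict inequality is equivalent to $2\lambda + \sigma_n(\mX) > \|\mX\|_2 + \|\mX - \mU_0\mU_0^\T\|_F$. Chaining this with the bound from the previous paragraph gives $\|\mU^\star\mV^{\star\T}\|_2 \le \|\mX - \mU_0\mU_0^\T\|_F + \|\mX\|_2 < 2\lambda + \sigma_n(\mX)$, which is exactly the condition required by \Cref{thm:U = V}. Since $(\mU^\star,\mV^\star)$ is a critical point of \eqref{eq:SNMF by reg} by assumption and now also satisfies this spectral-norm bound, \Cref{thm:U = V} immediately yields $\mU^\star = \mV^\star$ and that $\mU^\star$ is a critical point of \eqref{eq:SNMF}, which completes the argument.

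There is no serious obstacle here: the proof is essentially a chaining of the two previously established results, \Cref{lem:bound:iterate} and \Cref{thm:U = V}. The only point requiring a (routine) justification is that the uniform bound on $\|\mU_k\mV_k^\T\|_2$ survives the passage to the limit, which follows from continuity of the spectral norm together with the assumed convergence of the iterates; and the strictness built into \eqref{eq:lambda-lower-bound} is precisely what supplies the strict inequality demanded by \Cref{thm:U = V}.
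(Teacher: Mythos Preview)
Your proposal is correct and follows essentially the same route as the paper: apply \Cref{lem:bound:iterate} to bound $\|\mU^\star\mV^{\star\T}\|_2$, then use the choice of $\lambda$ in \eqref{eq:lambda-lower-bound} to verify the hypothesis of \Cref{thm:U = V}. The only difference is that you make explicit the passage to the limit via continuity, whereas the paper states the bound at $(\mU^\star,\mV^\star)$ directly.
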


\begin{proof}[Proof of \Cref{thm:critical point U = V}]
Since the assumptions of \Cref{lem:bound:iterate} are satisfied, it follows from \eqref{eqn:bound} that
\begin{align*}
\|\mU^\star\mV^{\star\T}\|_2&\leq\|\mX-\mU_0\mV_0^\T\|_F+\|\mX\|_2\\
&< 2 \lambda + \sigma_n(\mX),
\end{align*}
where the second line utilizes \eqref{eq:lambda-lower-bound}. We complete the proof by invoking \Cref{thm:U = V}.
\end{proof}

\remark{
\Cref{thm:critical point U = V} indicates that instead of directly solving the symmetric NMF \eqref{eq:SNMF}, one can turn to solve \eqref{eq:SNMF by reg} with a properly chosen penalty parameter $\lambda$. The latter has  similar form to the general nonsymmetric NMF \eqref{eq:NMF}  obeying  splitting property, which paves the way for designing efficient alternating-type algorithms.}

\section{Fast Algorithms for Symmetric NMF}
\label{sec:fast algorithms}

In the last section, we have shown that the symmetric NMF \eqref{eq:SNMF} can be transformed to problem \eqref{eq:SNMF by reg} which admits splitting property, enabling us to design efficient alternating-type algorithms to solve the original symmetric NMF. In this section, we exploit the splitting property and design fast algorithms for solving problem \eqref{eq:SNMF by reg} by adopting ANLS, HALS, and accelerated HALS. Moreover, we provide strong  convergence guarantees that the sequence of iterates generated by our algorithms is convergent and converges to a critical point of the original symmetric NMF \eqref{eq:SNMF}. This is obtained by exploiting \Cref{thm:critical point U = V} and  the property  that the objective function $f$ in \eqref{eq:SNMF by reg} is strongly convex with respect to $\mU$ (or $\mV$) when the other variable $\mV$ (or $\mU$) is fixed.

\subsection{ANLS-type Method for Symmetric NMF}
\begin{algorithm}[htb]
	\caption{SymANLS}
	\label{alg:ANLS}
	{\bf Initialization:}  $k=1$ and  $\mU_0 = \mV_0$.
	
	\begin{algorithmic}[1]
		\WHILE{stop criterion not meet}
		\STATE $
		\mU_{k} = \arg\min\limits_{\mU\geq 0}  \frac{1}{2}\|\mX - \mU\mV_{k-1}^\T\|_F^2   + \frac{\lambda}{2}\|\mU - \mV_{k-1}\|_F^2$;
		\STATE
		$\mV_{k} = \arg\min\limits_{\mV\geq 0}  \frac{1}{2}\|\mX - \mU_{k}\mV^\T\|_F^2 + \frac{\lambda}{2}\|\mU_k - \mV\|_F^2 $;
		\STATE
		$k = k+1$.
		\ENDWHILE
	\end{algorithmic}
	{\bf Output:} factorization $(\mU_k, \mV_k)$.
\end{algorithm}

ANLS is an alternating-type algorithm customized for nonsymmetric NMF \eqref{eq:NMF} and its main idea is  to keep one factor fixed and update another one  via solving a nonnegative least squares.  We use a similar idea for solving \eqref{eq:SNMF by reg} and refer to the corresponding algorithm as SymANLS; see \Cref{alg:ANLS}. Specifically,  at the $k$-th iteration, SymANLS first updates $\mU_k$ by
\e
\mU_k =\argmin_{\mU\in\R^{n\times r}, \mU\geq 0} \|\mX - \mU\mV_{k-1}^\T\|_F^2   + \frac{\lambda}{2}\|\mU - \mV_{k-1}\|_F^2.
\label{eq:Uk ANLS}\ee
$\mV_k$ is then updated in a similar way.  For solving the subproblem~\eqref{eq:Uk ANLS}, we first note that there exists a unique minimizer (i.e., $\mU_k$) for \eqref{eq:Uk ANLS} as it involves a strongly objective function as well as a convex constraint. Unlike least squares, however, in general there is no closed-form solution for \eqref{eq:Uk ANLS} (unless $r =1$) due to the nonnegative constraint. Fortunately, there exist many feasible methods to solve the nonnegative least squares, such as projected gradient descent, active set method and projected Newton's method. Among these methods,  a  block principal pivoting method is remarkably efficient for tackling the subproblem \eqref{eq:Uk ANLS} (and also the one for updating $\mV$) \cite{kim2008toward}.

\subsection{HALS-type Method for Symmetric NMF}
As we stated before,  due to the nonnegative constraint,  there is no closed-from solution for \eqref{eq:Uk ANLS}, although one may utilize some efficient algorithms for solving it. However, there does exist a closed-form solution when $r =1$. HALS \cite{cichocki2007hierarchical} exploits this observation by splitting the pair of variables $(\mU,\mV)$ into  columns $(\vu_1,\cdots,\vu_r,\vv_1,\cdots,\vv_r)$ and then optimizing over \emph{column by column}. We borrow this idea for solving \eqref{eq:SNMF by reg}. Specifically, rewrite $\mU\mV^\T = \vu_i\vv_i^\T + \sum_{j\neq i}\vu_j\vv_j^\T$ and denote
\[
\overline \mX = \mX - \sum_{j\neq i}\vu_j\vv_j^\T
\]
the factorization residual $\mX-\mU\mV^\T$ excluding the contribution of $\vu_i\vv_i^\T$. Now if we minimize the objective function $f$ in \eqref{eq:SNMF by reg} only with respect to $\vu_i$, then it is equivalent to
\begin{equation}
\label{eq:HALS:update}
\begin{aligned}
   \vu_i^\natural &= \argmin_{\vu_i\in\R^{n}} \frac{1}{2}\|\overline \mX - \vu_i\vv_i^\T\|_F^2 + \frac{\lambda}{2}\|\vu_i - \vv_i\|_2^2 \\
    &= \max\left(\frac{( \overline \mX +\lambda \mId)\vv_i}{\|\vv_i\|_2^2+\lambda },0\right).
\end{aligned}
\end{equation}
Similar closed-form solution also holds when optimizing in terms of $\vv_i$. The leads to our second algorithm, namely SymHALS (depicted in \Cref{alg:HALS}) which is an alternating-type minimization algorithm that at each time minimizes \eqref{eq:SNMF by reg} only with respect to one column in $\mU$ or $\mV$.

\begin{algorithm}[htb]
	\caption{SymHALS}
	\label{alg:HALS}
	{\bf Initialization:}  $k=1$ and  $\mU_0 = \mV_0$.
	
	\begin{algorithmic}[1]
		\STATE precompute residual $\overline \mX_k = \mX - \mU_{k-1}\mV_{k-1}^\T$.
		\WHILE{stop criterion not meet}
		\FOR{$i=1:r$}
		\STATE $\overline \mX_k \leftarrow \overline \mX_k + \vu_{i,k-1}(\vv_{i,k-1})^\T$
		\STATE $\vu_{i,k}   = \max\left(\frac{( \overline \mX_k +\lambda \mId)\vv_{i,k-1}}{\|\vv_{i,k-1}\|_2^2+\lambda },0\right)$;
		\STATE Update residual $\overline \mX_k \leftarrow \overline \mX_k - \vu_{i,k}(\vv_{i,k-1})^\T$.
		\ENDFOR
		\STATE Update $\mV_{k}$ using the same steps.
		\STATE$\overline \mX_{k+1}  = \overline \mX_k$, $k = k+1$.
		\ENDWHILE
	\end{algorithmic}
	{\bf Output:} factorization $(\mU_k, \mV_k)$.
\end{algorithm}

\subsection{Accelerated HALS-type Method for Symmetric NMF} \label{sec:A-SymHALS}
Compared with ANLS, HALS may need more iterations to converge since in each step it only updates one column. One effective approach \cite{gillis2012accelerated} to accelerate HALS is by updating one block variable (say $\mU$) several times before processing another block variable (say $\mV$), i.e., cyclically updating $\vu_1,\cdots, \vu_r$ by \eqref{eq:HALS:update} multiple times  before updating $\mV$. We adopt this strategy and denote the corresponding algorithm by A-SymHALS, which is depicted in \Cref{alg:Accelerated HALS}.

\begin{algorithm}[htb]
	\caption{A-SymHALS}
	\label{alg:Accelerated HALS}
	{\bf Initialization:}  $k=1$ and  $\mU_0 = \mV_0$, and inner iteration number $L$.
	
	\begin{algorithmic}[1]
		\WHILE{stop criterion not meet}
		\STATE Inner initialization: $\mU_{k}^0 = \mU_{k}$
		\FOR{$j = 1:L$}
		\STATE Update $\mU_{k}^j$ by performing steps 3-7 in \Cref{alg:HALS} with $\mV_k$ and $\mU_{k}^{j-1}$.
		\ENDFOR
		\STATE Set $\mU_{k+1} = \mU_{k}^L$.
		\STATE Repeat the above process (steps 2-6) to update $\mV_{k+1}$.
		\STATE $k = k+1$.
		\ENDWHILE
	\end{algorithmic}
	{\bf Output:} factorization $(\mU_k, \mV_k)$.
\end{algorithm}

\remark{ SymHALS updates each column of $\mU$ and $\mV$ one time during each iteration, while A-SymHALS updates each column multiple  times by refining previous solutions. \revise{Thus, on the one hand,  A-SymHALS has higher computational complexity than SymHALS in each iteration, but less than SymANLS since the latter requires to solve relatively computationally expensive nonnegative least squares in each iteration. On the other hand, A-SymHALS is supposed to converge faster than SymHALS since the former decreases more function value in each iteration, while SymANLS can decrease the most function value among them in each iteration. Therefore, A-SymHALS can be viewed as an effective approach to balance the trade-off between convergence speed and computational complexity of each iteration within SymHALS and SymANLS. We will compare these three algorithms in \Cref{sec:experiments}. But before this, in the next Section we provide convergence analysis for the three algorithms and show that all of them will converge to a critical point of the original symmetric NMF problem \eqref{eq:SNMF}. 
  } 

 }

\subsection{Convergence Results}

By exploiting the strong convexity of the objective function in \eqref{eq:SNMF by reg} when restricted to block $\mU$ (or $\mV$)  and the guarantee of \Cref{thm:critical point U = V}, we establish the convergence result of the three proposed alternating algorithms.

\begin{thm}[Convergence  of the proposed algorithms to a critical point of the symmetric NMF \eqref{eq:SNMF}] In \eqref{eq:SNMF by reg}, set
	\e
	\lambda > \frac{1}{2} \left( \|\mX\|_2 + \left\|\mX-\mU_0\mU_0^\T\right\|_F - \sigma_n(\mX)  \right)=:\overline\lambda.
	\label{eq:lambda-lower-bound 2}\ee
	Suppose \Cref{alg:ANLS}, \Cref{alg:HALS}, and \Cref{alg:Accelerated HALS} are initialized  with $\mV_0=\mU_0$. Let $\{(\mU_k,\mV_k)\}_{k\geq0}$ be the sequence of iterates generated by any of the three algorithms.  Then,
	\[
	\lim_{k\rightarrow \infty} (\mU_k,\mV_k) = (\mU^\star,\mV^\star),
	\]
where the limit point $(\mU^\star,\mV^\star)$ satisfies $\mU^\star = \mV^\star $
	and $\mU^\star$	is a critical point of \eqref{eq:SNMF}. Furthermore, the convergence rate is at least sublinear.
	\label{thm:convergence}
\end{thm}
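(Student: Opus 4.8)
The plan is to deduce \Cref{thm:convergence} from \Cref{thm:critical point U = V}: that result already converts ``the algorithm decreases $f$, is convergent, and converges to a critical point of \eqref{eq:SNMF by reg}'' into the desired conclusion $\mU^\star=\mV^\star$ with $\mU^\star$ critical for \eqref{eq:SNMF}, provided $\lambda$ obeys \eqref{eq:lambda-lower-bound 2}. So the whole task reduces to proving, for each of \Cref{alg:ANLS}, \Cref{alg:HALS}, and \Cref{alg:Accelerated HALS}: (a) monotone decrease of $f$; (b) sequence convergence to some critical point of \eqref{eq:SNMF by reg}; (c) an at-least-sublinear rate. Since SymHALS is A-SymHALS with $L=1$ and every SymANLS step is exact minimization over a whole block, I would run the argument for A-SymHALS and point out where the other two are (easier) special cases.

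Step 1, \textbf{sufficient decrease.} Each elementary step updates exactly one column of $\mU$ or $\mV$ through the closed form \eqref{eq:HALS:update} (or, for SymANLS, a whole block), and this step is the \emph{exact} minimizer of $f$ over that coordinate. Because the penalty $\tfrac{\lambda}{2}\|\mU-\mV\|_F^2$ makes $f$ strongly convex in $\mU$ alone, and in $\mV$ alone, with modulus at least $\lambda$, each such exact minimization drops $f$ by at least $\tfrac{\lambda}{2}$ times the squared norm of the incurred change. Summing these drops over the $r$ columns and the $L$ inner passes, and using $\|\sum_j \mD_j\|_F^2\le L\sum_j\|\mD_j\|_F^2$ to relate the inner increments to the outer increment, I get a constant $c=c(\lambda,L)>0$ with
\[
f(\mU_k,\mV_k)-f(\mU_{k+1},\mV_{k+1})\ \ge\ c\big(\|\mU_{k+1}-\mU_k\|_F^2+\|\mV_{k+1}-\mV_k\|_F^2\big).
\]
Hence $\{f(\mU_k,\mV_k)\}$ is non-increasing (so the hypothesis of \Cref{lem:bound:iterate} holds, and all iterates lie in a fixed bounded set $\calB$), it converges, and $\sum_k\big(\|\mU_{k+1}-\mU_k\|_F^2+\|\mV_{k+1}-\mV_k\|_F^2\big)<\infty$, so successive differences tend to $\vzero$.

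Step 2, \textbf{relative error and KL.} Writing the first-order optimality condition of each column (or block) subproblem and telescoping the ``residual gradient'' contributions across the inner passes, then bounding the differences of $\nabla g$ by its Lipschitz modulus on the bounded set $\calB$ (boundedness from \Cref{lem:bound:iterate} is crucial, as $\nabla g$ is only locally Lipschitz), I would obtain a constant $D>0$ with $\dist\big(\vzero,\partial f(\mU_{k+1},\mV_{k+1})\big)\le D\big(\|\mU_{k+1}-\mU_k\|_F+\|\mV_{k+1}-\mV_k\|_F\big)$. Finally, $f=g+\sigma_+(\mU)+\sigma_+(\mV)$ is semialgebraic ($g$ a polynomial, $\sigma_+$ the indicator of a polyhedron) and therefore satisfies the Kurdyka--{\L}ojasiewicz inequality at every point. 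Feeding the sufficient-decrease inequality, the relative-error bound, the boundedness of the iterates, and the KL property into the abstract convergence scheme of \cite{attouch2010proximal,bolte2014proximal} shows $\{(\mU_k,\mV_k)\}$ is Cauchy and converges to a single critical point $(\mU^\star,\mV^\star)$ of \eqref{eq:SNMF by reg}, with rate governed by the KL exponent $\theta\in[0,1)$ of $f$ --- finite, linear, or at least sublinear in the worst case. Combining with \Cref{thm:critical point U = V} under \eqref{eq:lambda-lower-bound 2} finishes the proof.

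\textbf{Where the difficulty is.} The framework of \cite{attouch2010proximal,bolte2014proximal} is built for cyclic block updates that touch each block once per sweep, while A-SymHALS updates all of $\mU$ ($L$ inner passes over its columns) before it touches $\mV$. The nontrivial part is to phrase both the sufficient-decrease and the relative-error inequalities \emph{in terms of the outer increments} $\mU_{k+1}-\mU_k$ and $\mV_{k+1}-\mV_k$, not the many intermediate ones --- this forces a careful telescoping of the intermediate function drops and gradient residuals and the introduction of $L$-dependent constants. This is the technical heart of the argument and the piece that does not follow verbatim from the existing KL machinery.
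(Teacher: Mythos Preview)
Your overall plan---sufficient decrease, relative error (safeguard), KL property, then invoke \Cref{thm:critical point U = V}---is exactly the paper's route, and your Step~1 and your diagnosis of the difficulty are both on target. There is, however, a concrete gap in Step~2 for A-SymHALS with $L>1$: the bound
\[
\dist\big(\vzero,\partial f(\mW_{k+1})\big)\ \le\ D\,\|\mW_{k+1}-\mW_k\|_F
\]
purely in terms of the \emph{outer} increment is not attainable. The optimality condition of the \emph{last} update of column $\vu_i$ compares $\nabla_{\vu_i}g$ at $(\mU_{k+1},\mV_{k+1})$ with its value at a state where columns $\vu_{i+1},\dots,\vu_r$ still carry their values from the $(L{-}1)$-th inner pass; Lipschitzness therefore yields only a bound by $C\|\mW_{k+1}-\mW_k^{L-1}\|_F$, the \emph{last inner} increment. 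This quantity cannot in general be controlled by the outer increment $\|\mW_{k+1}-\mW_k\|_F$: the $L$ inner passes may partially cancel, leaving the outer step small while each inner step---in particular the last one---is not. No ``telescoping of gradient residuals'' repairs this, because the subdifferentials $\partial\sigma_+(\vu_{i,k}^j)$ for $j<L$ are based at the wrong points and cannot be substituted into $\partial_{\vu_i}f(\mW_{k+1})$.

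The paper does \emph{not} compress everything into outer increments. Instead it (i)~strengthens sufficient decrease to retain both the outer increment \emph{and} the full sum of inner increments,
\[
f(\mW_k)-f(\mW_{k+1})\ \ge\ \tfrac{\lambda}{4L}\Big(\|\mW_{k+1}-\mW_k\|_F^2+\textstyle\sum_{j=0}^{L-1}\|\mW_k^{j+1}-\mW_k^j\|_F^2\Big);
\]
(ii)~states the safeguard in the form $\dist(\vzero,\partial f(\mW_{k+1}))\le C\|\mW_{k+1}-\mW_k^{L-1}\|_F$; and (iii)~runs the KL descent recursion with the \emph{mixed} quantity $\sqrt{\|\mW_{k+1}-\mW_k\|_F^2+\|\mW_{k+1}-\mW_k^{L-1}\|_F^2}$, which simultaneously dominates the safeguard term and the outer increment and telescopes via the elementary inequality $a/t\ge 2\sqrt{a}-t$. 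Summability of this mixed quantity then delivers summability of $\|\mW_{k+1}-\mW_k\|_F$, hence the Cauchy property and the rate. So your instinct that the standard KL framework of \cite{attouch2010proximal,bolte2014proximal} must be modified was correct; the resolution is to \emph{enlarge} the tracked quantity to carry the last inner increment, not to force everything down to outer increments.
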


The proof of \Cref{thm:convergence} consists of showing the convergence requirement of algorithms in \Cref{thm:critical point U = V}; i.e., the  decent and convergence properties. We defer the detailed proof to  Section \ref{Appendix}.

\remark{\label{remark:convergence} \revise{First note that the algorithm can be proved to converge to $(\mU^\star,\mV^\star)$ for any positive $\lambda$ with the argument in Section \ref{Appendix}, but we need $\lambda$ to be relatively large as in \eqref{eq:lambda-lower-bound 2} to ensure $\mU^\star = \mV^\star$.}  We emphasize that the specific structure within \eqref{eq:SNMF by reg} enables \Cref{thm:convergence} to get rid of both the assumption on the boundedness of the iterates $\{(\mU_{k},\mV_k)\}_{k\geq0}$ and  the requirement of an additional proximal term, which are usually required for convergence analysis though are not necessary in practice \cite{attouch2009convergence,attouch2010proximal}.  For example, the previous work \cite{huang2016flexible} provides  convergence guarantee for the standard ANLS when used to solve the general nonsymmetric NMF \eqref{eq:NMF} by adding an additional proximal term as well as  an additional constraint to force the factors bounded. To establish the convergence for the standard HALS  for solving \eqref{eq:NMF} \cite{cichocki2009fast,gillis2012accelerated}, one needs the assumption that every column in $(\mU_k,\mV_k)$ is away from zero through all iterations. Though such an assumption can be satisfied by explicitly imposing additional constraints, it leads to a slightly different problem. By contrast,  our convergence result when applied to SymHALS and A-SymHALS overcomes this issue because of the additional penalty term in \eqref{eq:SNMF by reg}.} 

\remark{ \label{remark: proof of A-SymHALS} The convergence of SymANLS and SymHALS can be established by following the  Kurdyka-Lojasiewicz (KL) convergence analysis framework \cite{attouch2010proximal,bolte2014proximal} since the additional penalty term in \eqref{eq:SNMF by reg} can be used for establishing the so-called sufficient decrease property of the two algorithms.  However, the multiple  update scheme of    A-SymHALS makes its convergence analysis more complicated than the previous two algorithms. As a result, one cannot directly apply this framework for A-SymHALS.  To overcome this technical difficulty, we generalize the KL analysis framework such that it becomes compatible with the multiple update scheme used in A-SymHALS, which is of independent interest; see Section \ref{Appendix} for detailed analysis. }

\revise{ \subsection{An Adaptive Updating Formula for $\lambda$} 
	\Cref{thm:convergence} guarantees convergence of the three algorithms  for any relatively large penalty parameter $\lambda$. Though $\lambda$ is fixed in Algorithms 1-3 for simplicity, it can be updated through the entire process. In this subsection, we  provide an adaptive strategy for updating the parameter $\lambda$  along with the iterations of our algorithms. Towards that end, first note that $
	\|\mU_k \pm \mV_k \|_F^2 \geq 0$ always holds, which gives
	\[
	\|\mU_k\|_F^2 + \|\mV_k\|_F^2 \geq 2 \left|\< \mU_k, \mV_k \> \right|. 
	\]
	This further implies
	\e\label{eq:motivate}
	\frac{ \|\mU_k\|_F^2 + \|\mV_k\|_F^2}{2 \left| \< \mU_k, \mV_k \> \right|} \geq 1.
	\ee
	The relation in \eqref{eq:motivate} motivates us to use the following \emph{adaptive} strategy  for updating $\lambda$:
	\e\label{eq:update lambda}
	\boxed{ \quad 
		\lambda_{k+1} = \lambda_{k} \cdot \frac{ \|\mU_k\|_F^2 + \|\mV_k\|_F^2}{ 2 \left| \< \mU_k, \mV_k \> \right|}, \quad k \geq0,
		\quad 	}
	\ee
	where the initial regularization parameter $\lambda_0>0$ can be selected as a very small number, e.g., $\lambda_0= 10^{-5}$. Note that $\mU_k $ and $\mV_k$  will not tend to be orthogonal as we are minimizing $\|\mU-\mV\|_F^2$. Therefore, the formula \eqref{eq:update lambda} is well defined as the denominator is bounded alway from zero as the iteration proceeds.  
	
To understand this adaptive strategy, we note that by \eqref{eq:motivate}, the parameter $\lambda_k$ keeps increasing at each iteration until 
	\[\frac{ \|\mU_k\|_F^2 + \|\mV_k\|_F^2}{ 2 \left| \< \mU_k, \mV_k \> \right| } = 1.\]
The above happens only when\footnote{ \revise{
It can also happen when $\mU_k = -\mV_k$, but since the algorithm minimizes $\|\mU - \mV\|_F^2$, this case in practice is likely impossible.}} $\mU_k = \mV_k$, which indeed is the goal we want to achieve. On the one hand, the  two factors  can reach consensus without requiring $\lambda_k$ to go to infinitely large, as guaranteed by \Cref{thm:convergence}. On the other hand,  we note that the algorithm may already converge and $\mU^\star = \mV^\star$ even with  $\lambda_k\le \overline\lambda$, where $\overline\lambda$ is the lower bound in \eqref{eq:lambda-lower-bound 2}. There is no contradiction to \Cref{thm:convergence} as \eqref{eq:lambda-lower-bound 2} is a sufficient but not necessary condition to guarantee convergence and consensus. In fact, this is one of the advantages of using this adaptive strategy since a smaller penalty term can allow the algorithms focus more on the data fidelity term.  We refer to \Cref{fig:lambda} in next Section for an illustration of the practical performance of the adaptive strategy \eqref{eq:update lambda}.   
	
It is  worth mentioning that our idea of updating the penalty parameter $\lambda$  has the potential to be used widely in other Augmented Lagrangian-based algorithms and splitting algorithms since these type of methods usually use squared Euclidean distance to penalize  violation of the constraints. }

\section{Numerical Experiments}
\label{sec:experiments}
In this section, we conduct experiments on both synthetic data and real image clustering to illustrate the performance of our proposed algorithms and compare it with other state-of-the-art ones, in terms of both  convergence property and  clustering accuracy.

For comparison in terms of solving the original symmetric NMF  \eqref{eq:SNMF}, we define
\[
E^k = \frac{\|\mX-\mU_k(\mU_k)^\T\|_F^2}{\|\mX\|_F^2}
\]
as the normalized fitting error at the $k$-th iteration.

Besides SymANLS, SymHALS and A-SymHALS \revise{for which we set the inner iteration number $L = 2$ in \Cref{alg:Accelerated HALS}},  we list several state-of-the-art algorithms to compare: 1) ADMM  \cite{lu2017nonconvex}, which solves an equality constrained nonsymmetric NMF using a primal dual method;  2) \revise{truncated SVD (\textbf{tSVD})} \cite{huang2013non}, which utilizes SVD to iteratively approximate the solution to the original symmetric NMF  \eqref{eq:SNMF}; 3) \revise{beta Symmetric Nonnegative Matrix Factorization (\textbf{beta-SNMF})} \cite{he2011symmetric}, which is an accelerated version of the well-known multiplicative update algorithm  in NMF literature;  4) PGD \cite{lin2007projected}.  

\subsection{Experiments with Synthetic Data}
We randomly generate a matrix $\mU\in\R^{n\times r}$ with each entry independently following a standard Gaussian distribution. To enforce nonnegativity, we then take absolute value on each entry of $\mU$ to get $\mU^\star$.  Data matrix $\mX$ is constructed as $\mX = \mU^\star \mU^{\star\T} + \sigma|\mN|$, where $\mN$ represents the noise and $\sigma$ is the noise level. 
Unless explicitly specified,  each entry of the noisy matrix $\mN$ follows an \emph{i.i.d.} standard  Gaussian distribution.  We initialize all  algorithms with  $\mU_0$, whose entries are \emph{i.i.d.} uniformly distributed  between 0 and 1.

\revise{
	
	\begin{figure}[!t]
	\centering
	
	\begin{minipage}{0.49\linewidth}
		\includegraphics[width=1\textwidth]{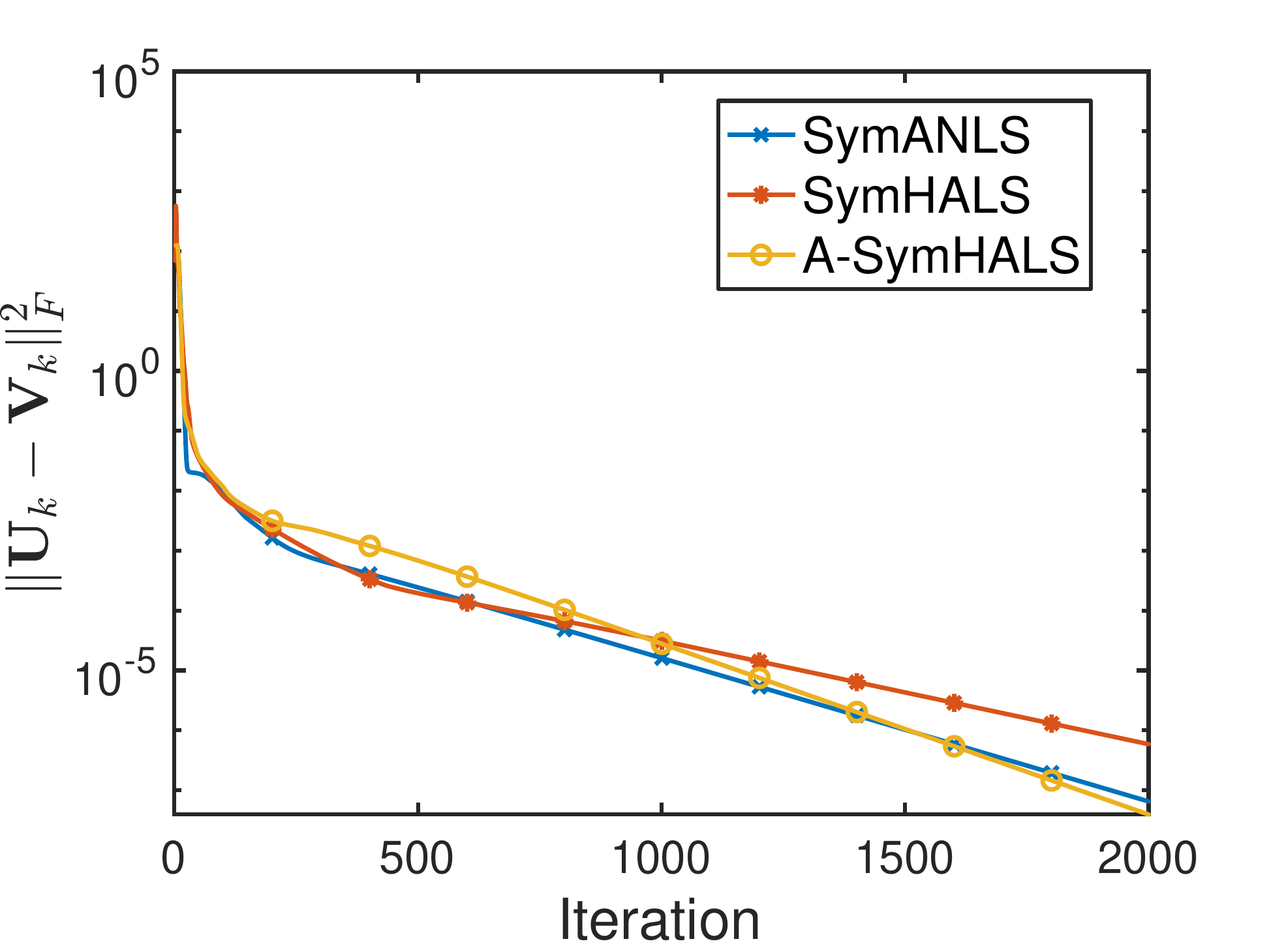}\\
		\centering{(a1)}
	\end{minipage}
	\hfill
	\begin{minipage}{0.49\linewidth}
		\includegraphics[width=1\textwidth]{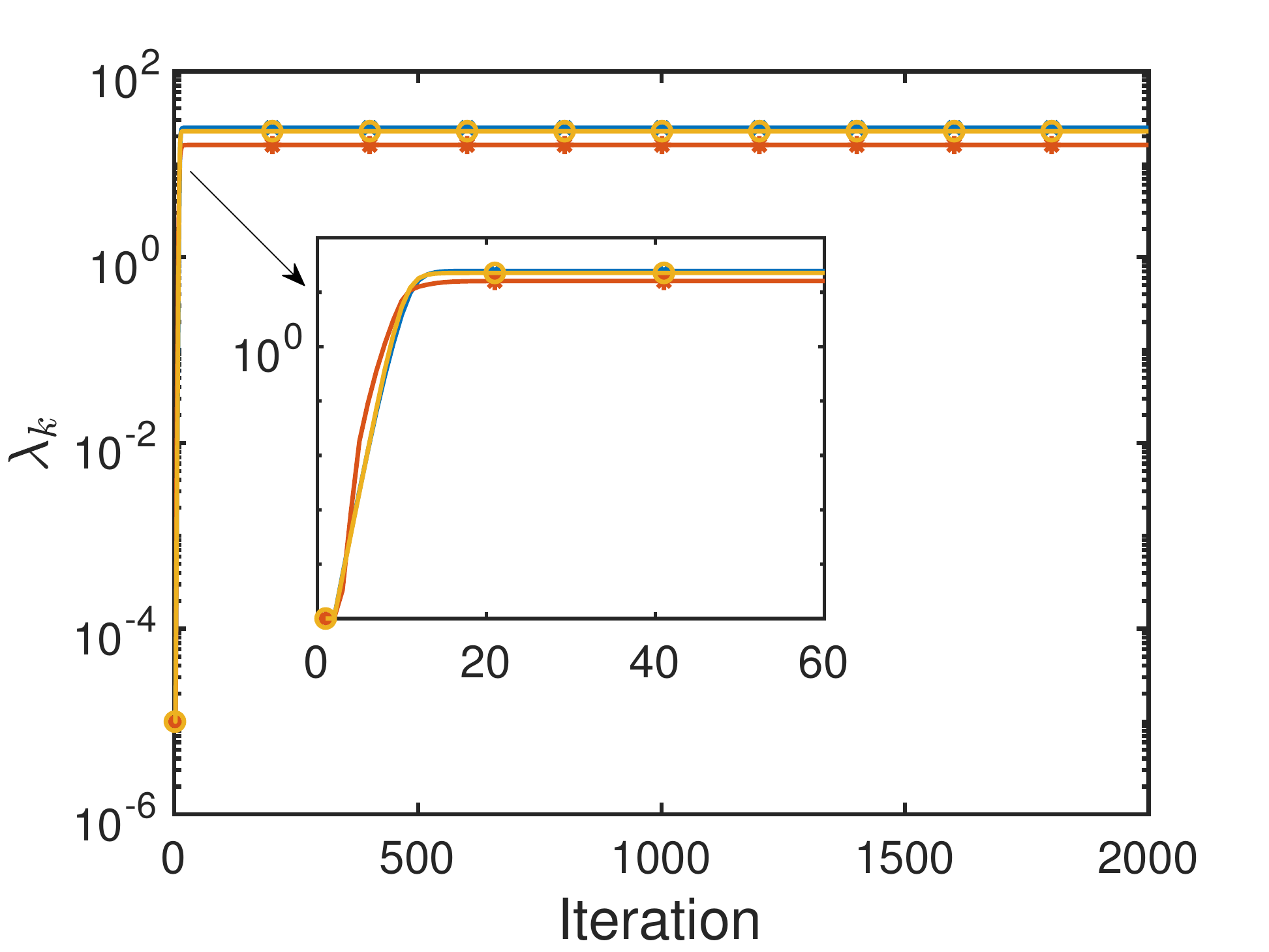}\\
		\centering{(a2)}
	\end{minipage}
	\caption{ \revise{(a1): The penalty term $\|\mU_k - \mV_k\|_F^2$ versus iteration number. (a2): The penalty parameter $\lambda_k$ versus iteration number.  Here, $\lambda_0 = 10^{-5}, n = 50, r = 5, \sigma = 0$.}}  \label{fig:lambda}
\end{figure}

We first verify the adaptive formula \eqref{eq:update lambda} for updating the parameter $\lambda$. \Cref{fig:lambda} displays the penalty term $\|\mU_{k}-\mV_{k}\|_F^2$ and the penalty parameter $\lambda_k$ updated using \eqref{eq:update lambda} versus iteration count $k$. One can observe that the term $\|\mU_{k}-\mV_{k}\|_F^2$  converges to 0 and $\lambda_{k}$ converges to a finite value for our proposed algorithms. In the sequel, we utilize \eqref{eq:update lambda} to update the parameter $\lambda$ with $\lambda_0 = 10^{-5}$ for all experiments.}


\begin{figure}[!t]
	\centering
	
	\begin{minipage}{0.49\linewidth}
		\includegraphics[width=1\textwidth]{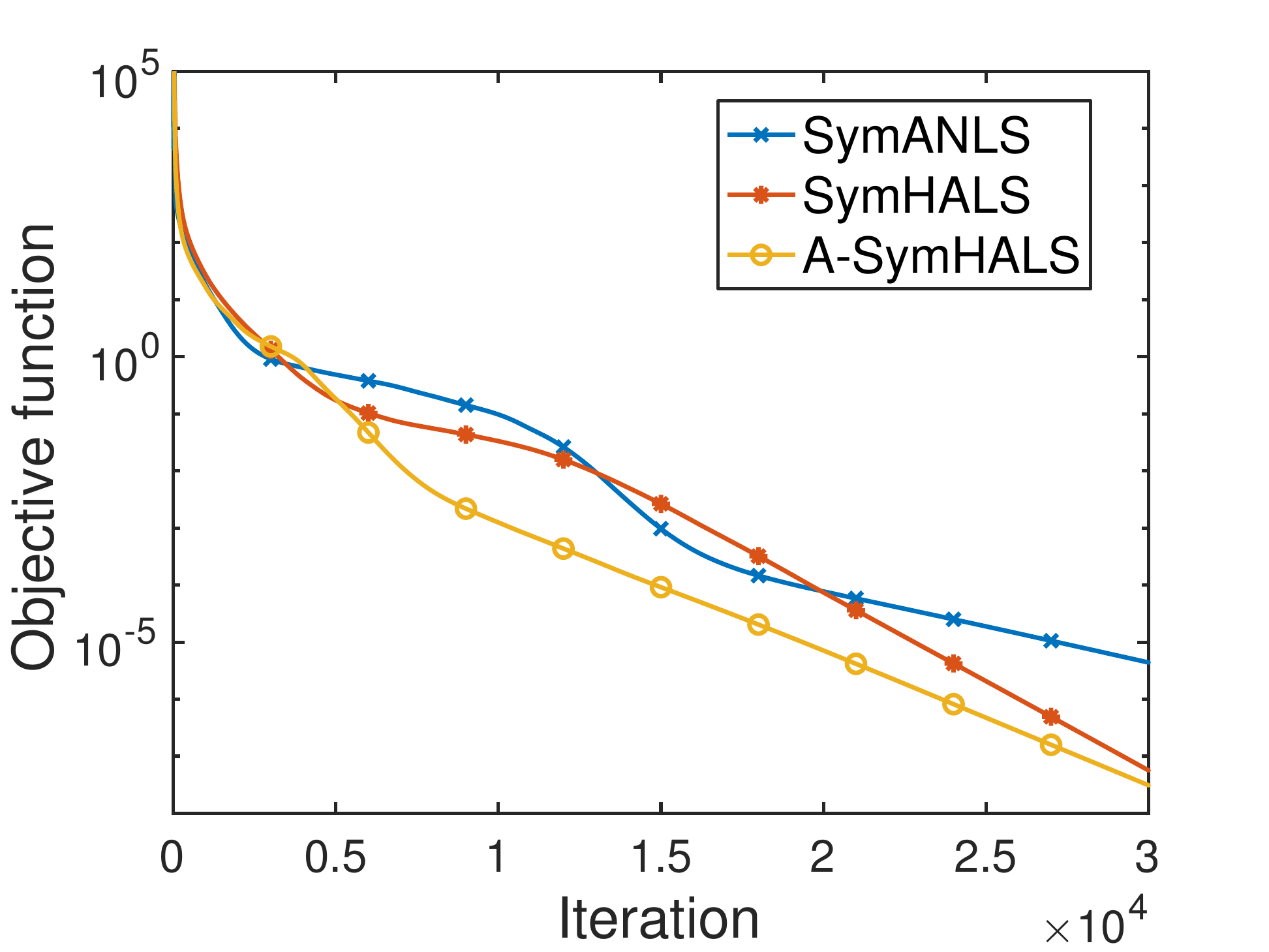}\\
		\centering{(a1) $\sigma = 0$}
	\end{minipage}
	\hfill
	\begin{minipage}{0.49\linewidth}
		\includegraphics[width=1\textwidth]{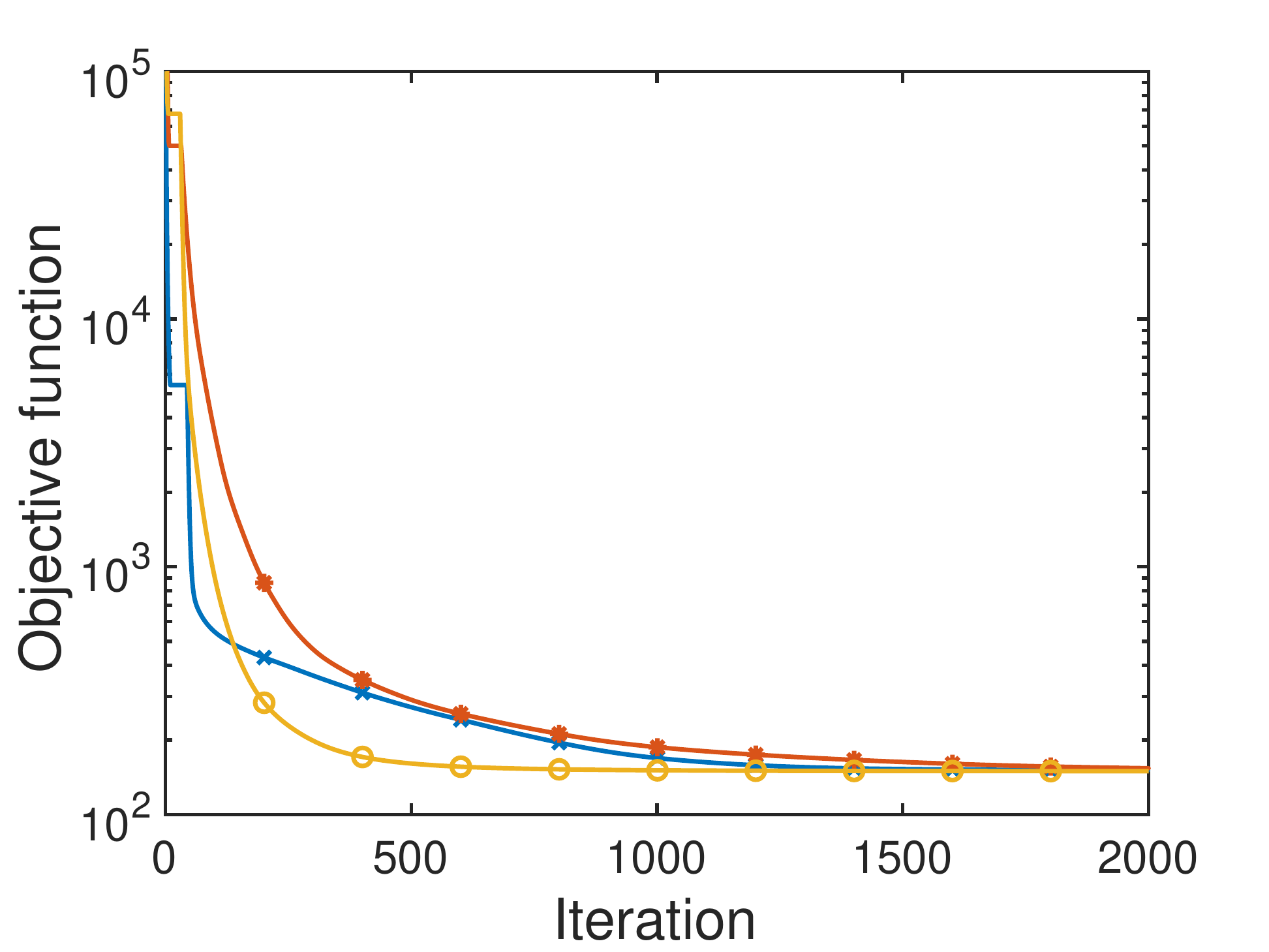}\\
		\centering{(a2) $\sigma = 0.1$.}
	\end{minipage}
	\caption{Convergence of the proposed algorithms on synthetic data with different noise level $\sigma$, where $n = 300, r = 20$. }  \label{fig:convergence f}
\end{figure}

\begin{figure}[!t]
	\centering
	
	\begin{minipage}{0.49\linewidth}
		\includegraphics[width=1\textwidth]{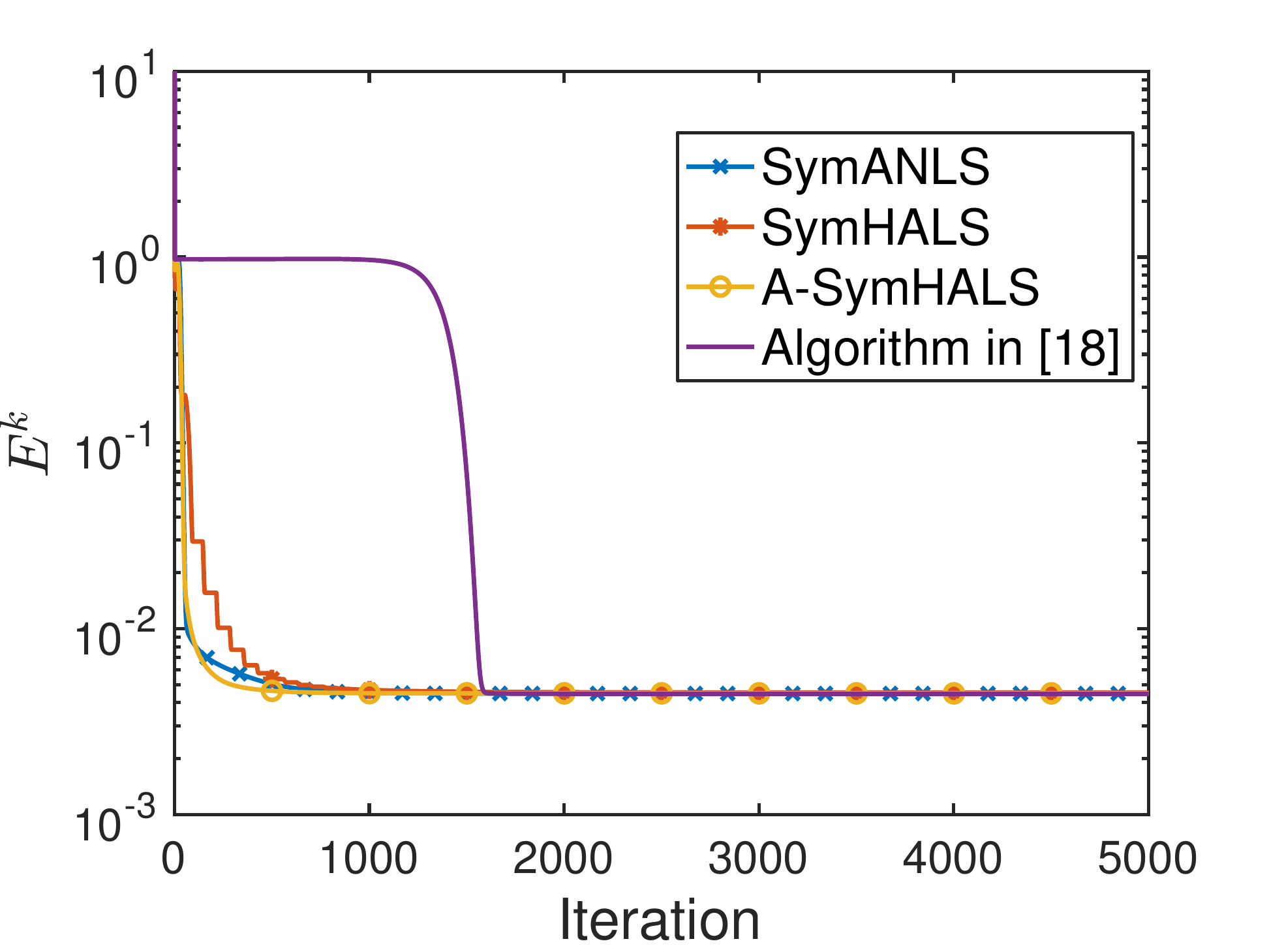}\\
		\centering{(a1) $r= 20$}
	\end{minipage}
	\hfill
	\begin{minipage}{0.49\linewidth}
		\includegraphics[width=1\textwidth]{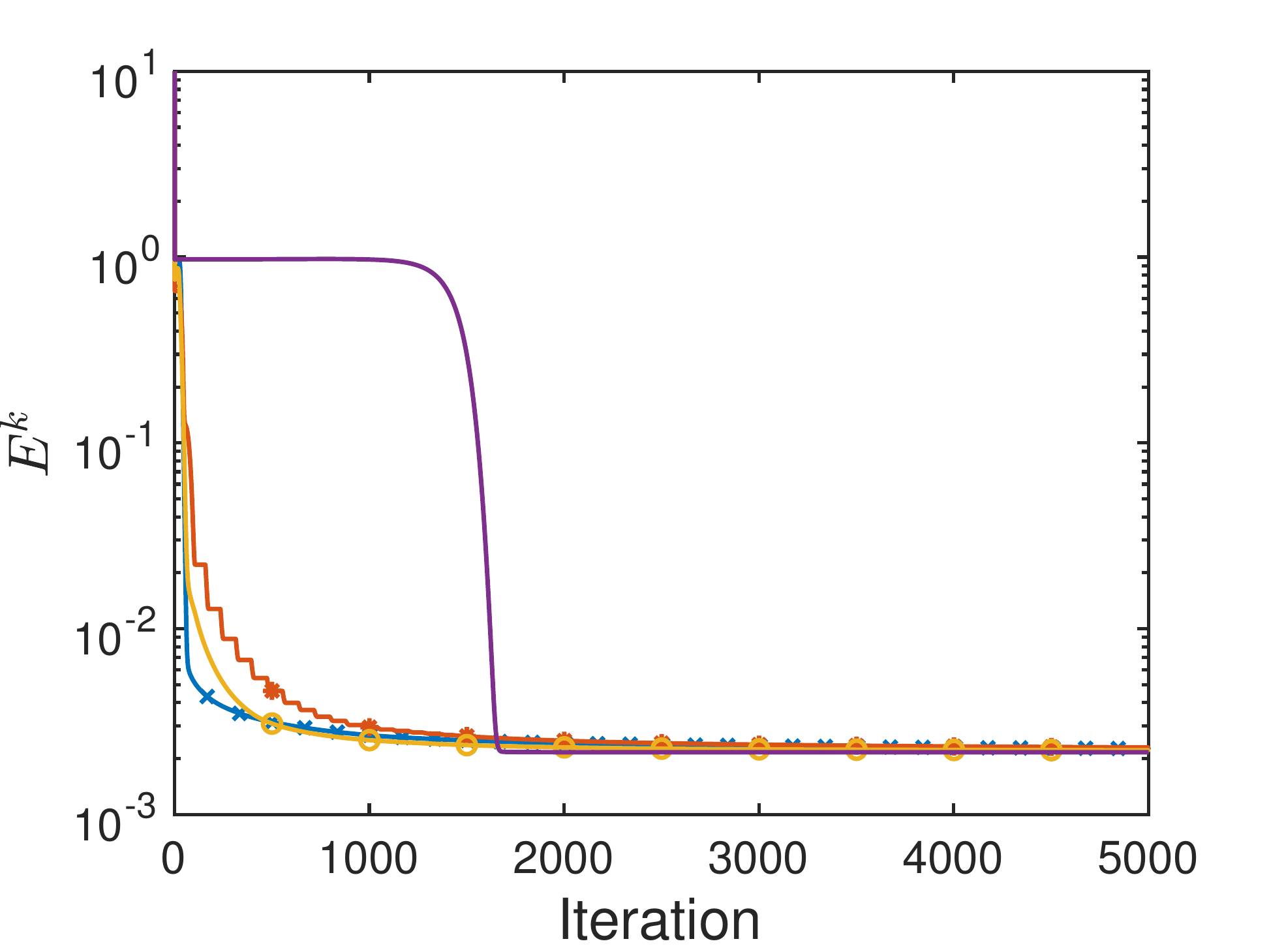}\\
		\centering{(a2) $r = 40$.}
	\end{minipage}
	\caption{\revise{Normalized fitting error $E^k$ versus iteration number on synthetic data with $n=300, ~\sigma = 0.1$, and varied factorization rank $r$.}}  \label{fig:new_vs_old}
\end{figure}

We now verify the convergence behaviors of our proposed algorithms when utilized to solve problem \eqref{eq:SNMF by reg} and   display the result in \Cref{fig:convergence f}. One can observe that  our algorithms  converge  in both noise-free ($\sigma = 0$) and noisy ($\sigma = 0.1$) cases, which corroborates our theoretical results.  It is also worth noting that A-SymHALS converges slightly faster than  SymHALS, which supports the acceleration technique used in A-SymHALS.    Interestingly, in the noise-free case, all the three proposed algorithms have a linear rate of convergence and can  find a nearly globally optimal minimizer of problem \eqref{eq:SNMF by reg} (asserted by achieving nearly zero function value)  even if the problem is  nonconvex.

\revise{ As mentioned  in \Cref{sec:related work}, the idea of solving symmetric NMF by the penalized nonsymmetric NMF \eqref{eq:SNMF by reg}  has also appeared heuristically in \cite{kuang2015symnmf}.  The authors proposed to solve  problem \eqref{eq:SNMF by reg} using an ANLS-type method, where the penalty parameter $\lambda$ is updated as $\lambda_{k+1} = 1.01\times \lambda_{k}$ until $\|\mU_k - \mV_k \|_F / \|\mV_k\|_F < 10^{-8}$. Thus, their algorithm is the same as our SymANLS except for the updating  of the penalty parameter $\lambda$.  In \Cref{fig:new_vs_old}, we compare our proposed algorithms with theirs in terms of the normalized fitting error $E^k$. Though they all eventually converge to the same $E_k$, our algorithms converge much faster than theirs. The relatively slow convergence of their algorithm is due to the fact that their updating of the penalty parameter $\lambda$ is not adaptive to the iterations of the algorithm. Using \Cref{fig:new_vs_old} (a1) as an example,  $\lambda_k$ in our algorithms is increased to around $97$ after only $85$ iterations and then remains this value for the following iterations (i.e., the adaptive updating formula \eqref{eq:update lambda} nearly converges after $85$ iterations), while their algorithm needs $1619$ iterations  to update $\lambda_{k}$ to around $97$ and keeps increasing $\lambda_k$ to $7\times 10^7$ after $3000$ iterations. 
Increasing the penalty parameter to infinity is a common strategy utilized in the analysis of methods of Lagrangian multipliers for general problems~\cite[Theorem 17.1]{nocedal2006numerical}. We avoid such a requirement by exploiting the specific structures within  problem \eqref{eq:SNMF by reg}. Thus, it is instructive to emphasize that the difference between our work and \cite{kuang2015symnmf} does not only lie in the update of the penalty parameter $\lambda$, but also on the theoretical side; see \Cref{sec:contributions} and \ref{sec:related work} for more details. In the sequel, we will not display the performance of the algorithm in \cite{kuang2015symnmf} since it has very similar performance to our SymANLS except for relatively slower convergence.}

In \Cref{fig:synthetic},  we compare  our algorithms with several state-of-the-art symmetric NMF algorithms by displaying the normalized fitting error $E^k$  versus iteration number and wall clock running time, which demonstrates the ability for solving the original symmetric NMF  \eqref{eq:SNMF}. We fix $n = 300$ and vary the factorization rank $r$ and the noise level $\sigma$.  From the top row of \Cref{fig:synthetic} (i.e., \Cref{fig:synthetic} (a1)-(a4)), it can be observed  that our proposed  SymANLS, SymHALS, and A-SymHALS outperform the others in terms of iteration number.   A-SymHALS performs the best in  the noise free settings, while SymANLS, SymHALS, and A-SymHALS have comparable performances in the noisy cases.   The bottom row of  \Cref{fig:synthetic} (i.e., \Cref{fig:synthetic} (b1)-(b4)) demonstrates the evolution of $E^k$ versus wall clock running time.  It can be observed that SymHALS and A-SymHALS have the best performances among other algorithms.  In  \Cref{fig:synthetic} (b1), where the factorization rank is small ($r = 20$) and the data is not contaminated by noise, the tSVD algorithm also performs well. However, when noise is presented and the factorization rank $r$ becomes larger, SymHALS and A-SymHALS have the best running time performance.  On the other hand, one can observe  from the experiments  where noise is added (i.e., (a2), (b2), (a4), (b4)), the  alternating-type algorithms, ADMM, and PGD can converge to solutions with almost the same fitting error $E^k$ after enough iterations, while tSVD and beta-SNMF likely get  stuck at local minima with lager fitting errors.     

\begin{figure*}[!htb]
	\centering
	\begin{minipage}{0.24\linewidth}
		\includegraphics[width=1\textwidth]{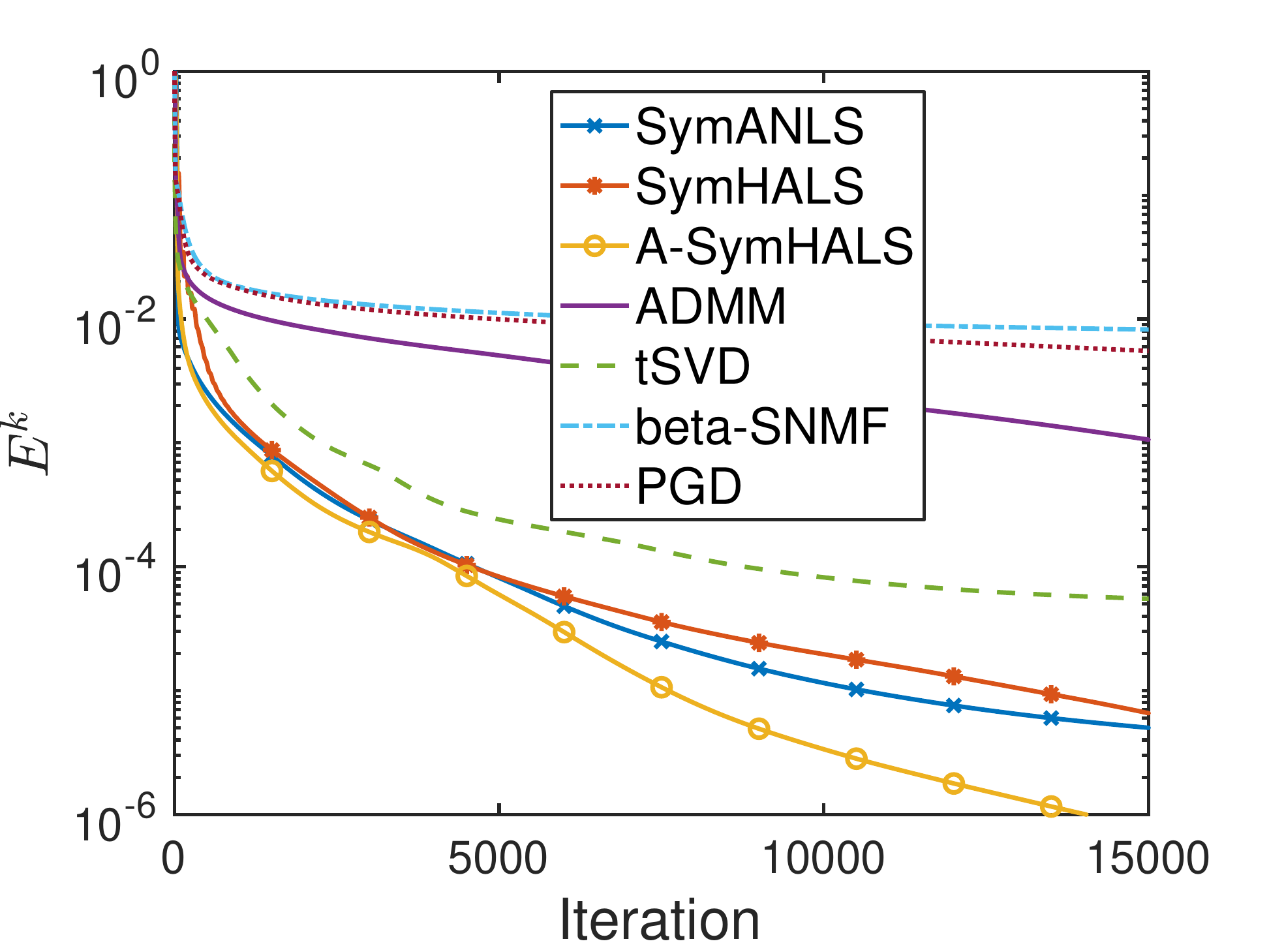}
		\centerline{(a1) $r = 20,~ \sigma = 0$}
	\end{minipage}
	\hfill
	\begin{minipage}{0.24\linewidth}
		\includegraphics[width=1\textwidth]{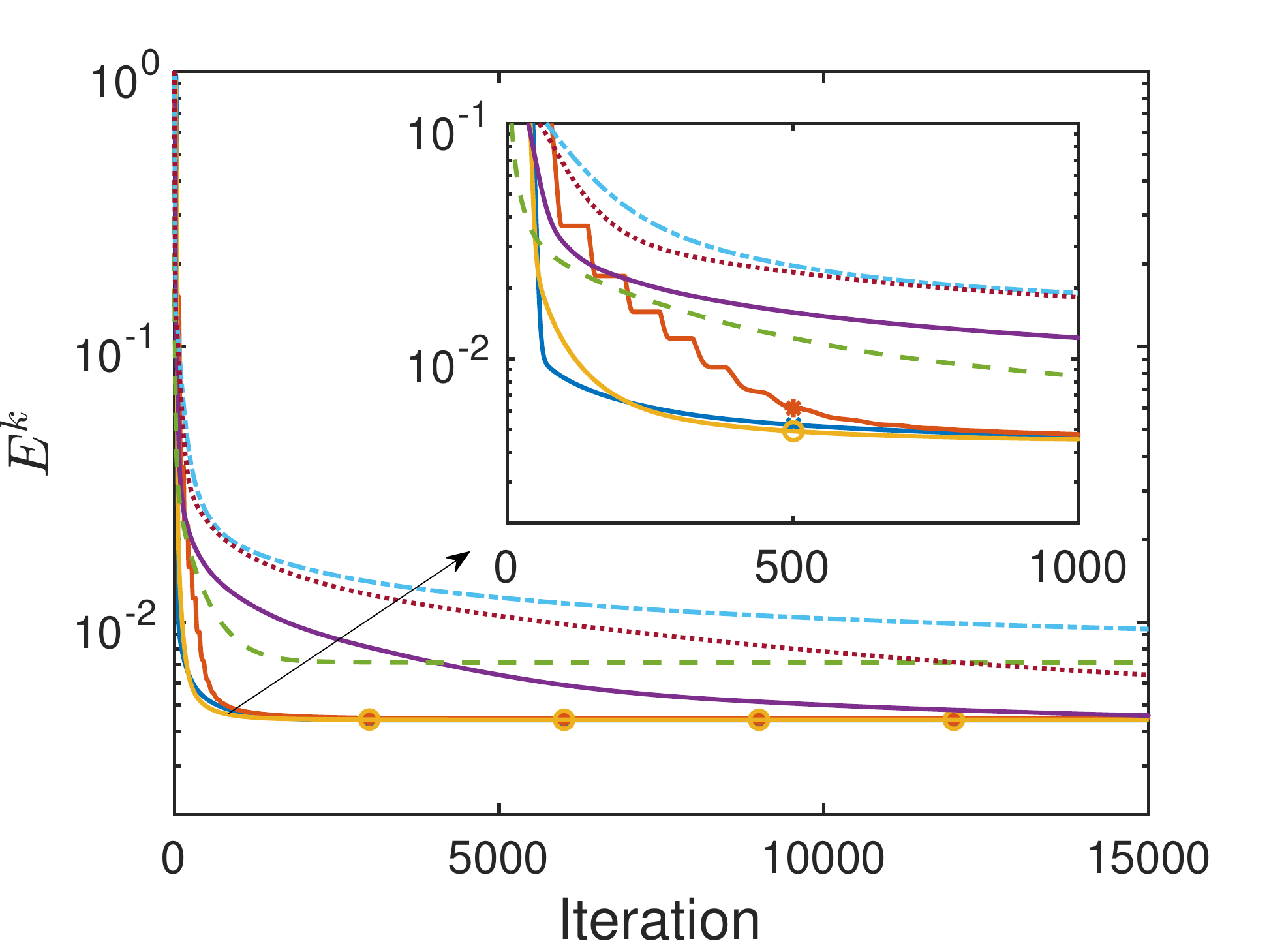}
		\centerline{(a2) $r = 20,~ \sigma = 0.1$}
	\end{minipage}
	\begin{minipage}{0.24\linewidth}
		\includegraphics[width=1\textwidth]{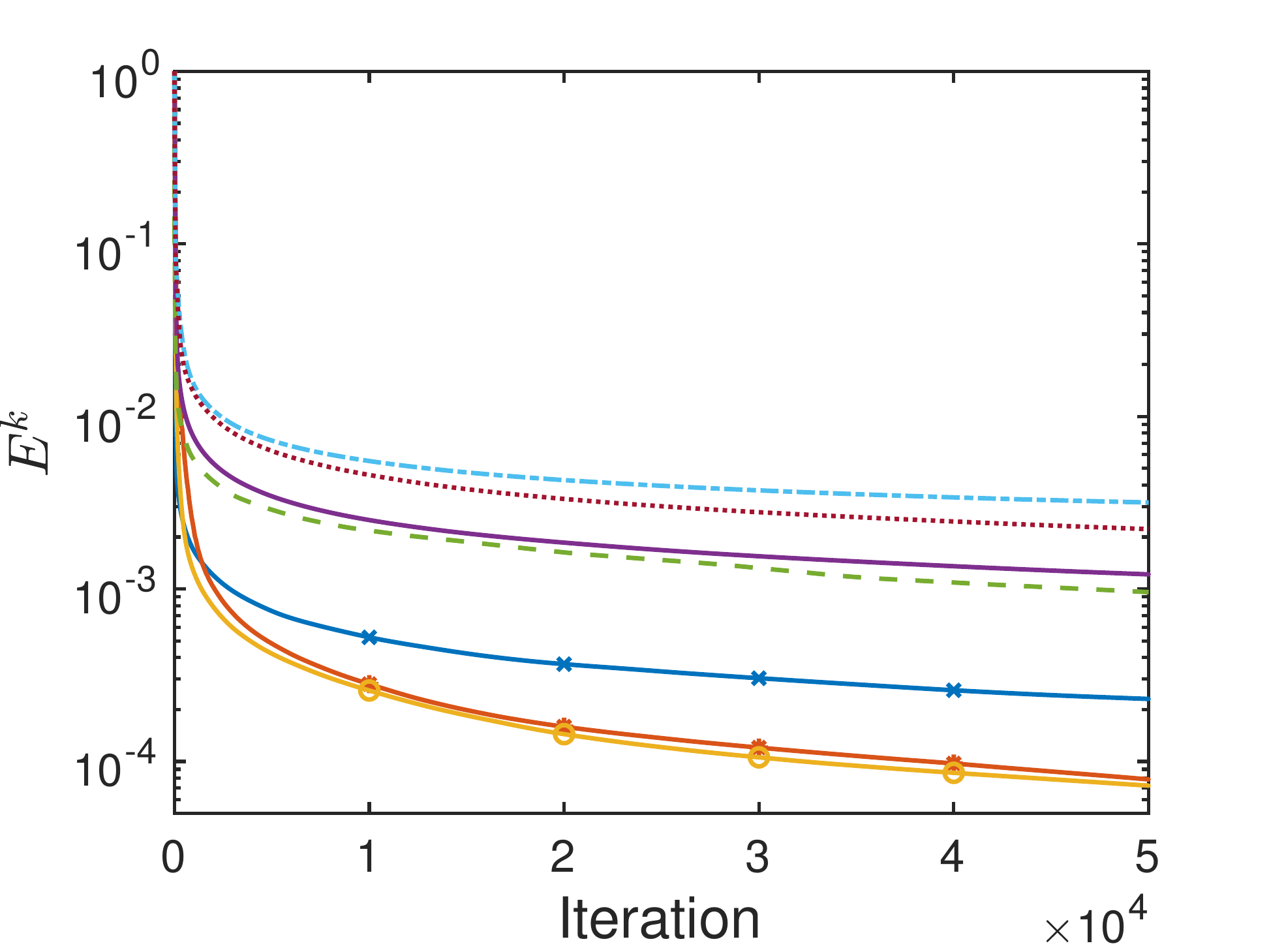}
		\centerline{(a3) $r = 40,~ \sigma = 0$}
	\end{minipage}
	\hfill
	\begin{minipage}{0.24\linewidth}
		\includegraphics[width=1\textwidth]{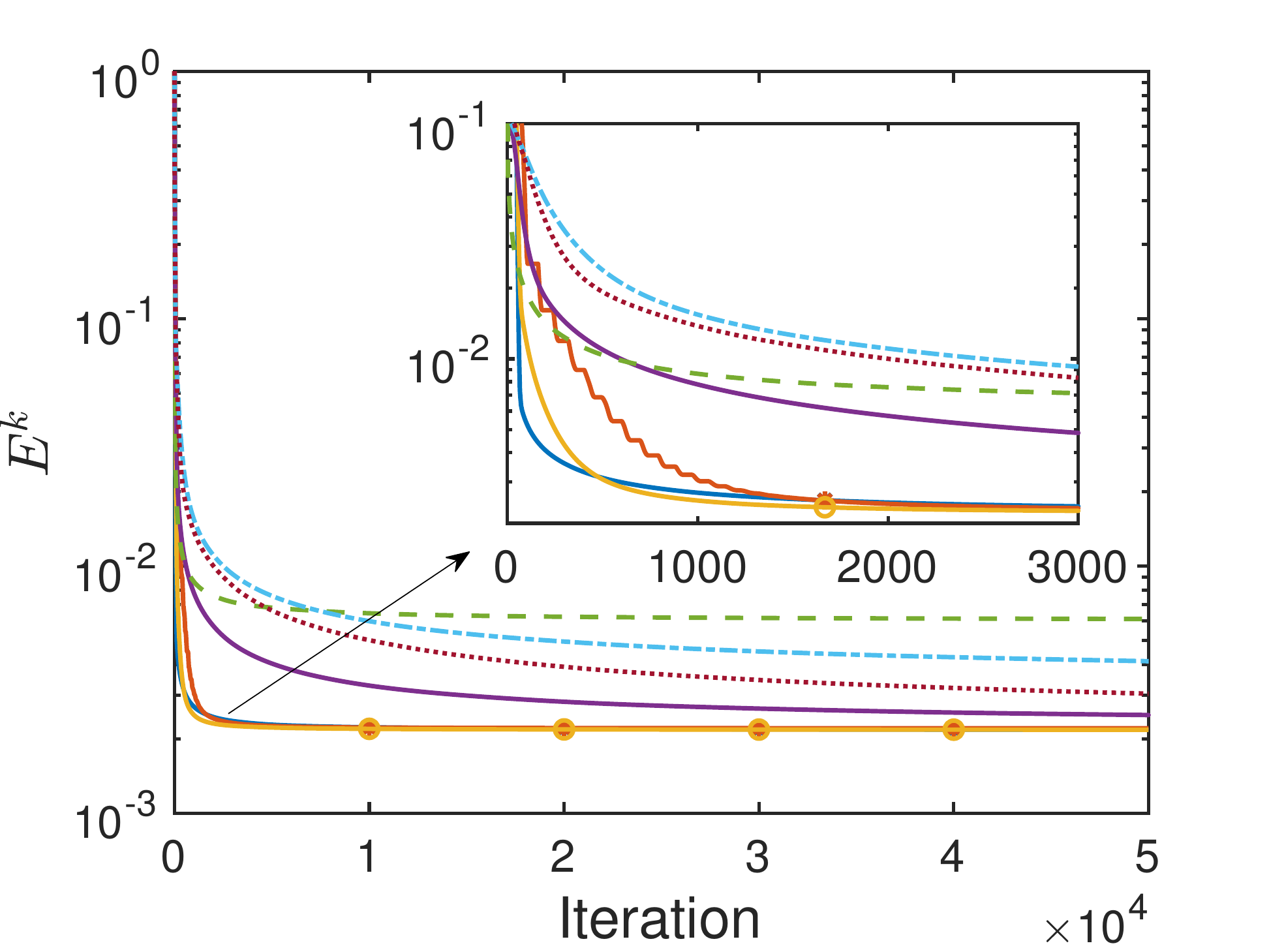}
		\centerline{(a4) $r = 40,~ \sigma = 0.1$}
	\end{minipage}
	\vfill	
	\begin{minipage}{0.24\linewidth}
		\includegraphics[width=1\textwidth]{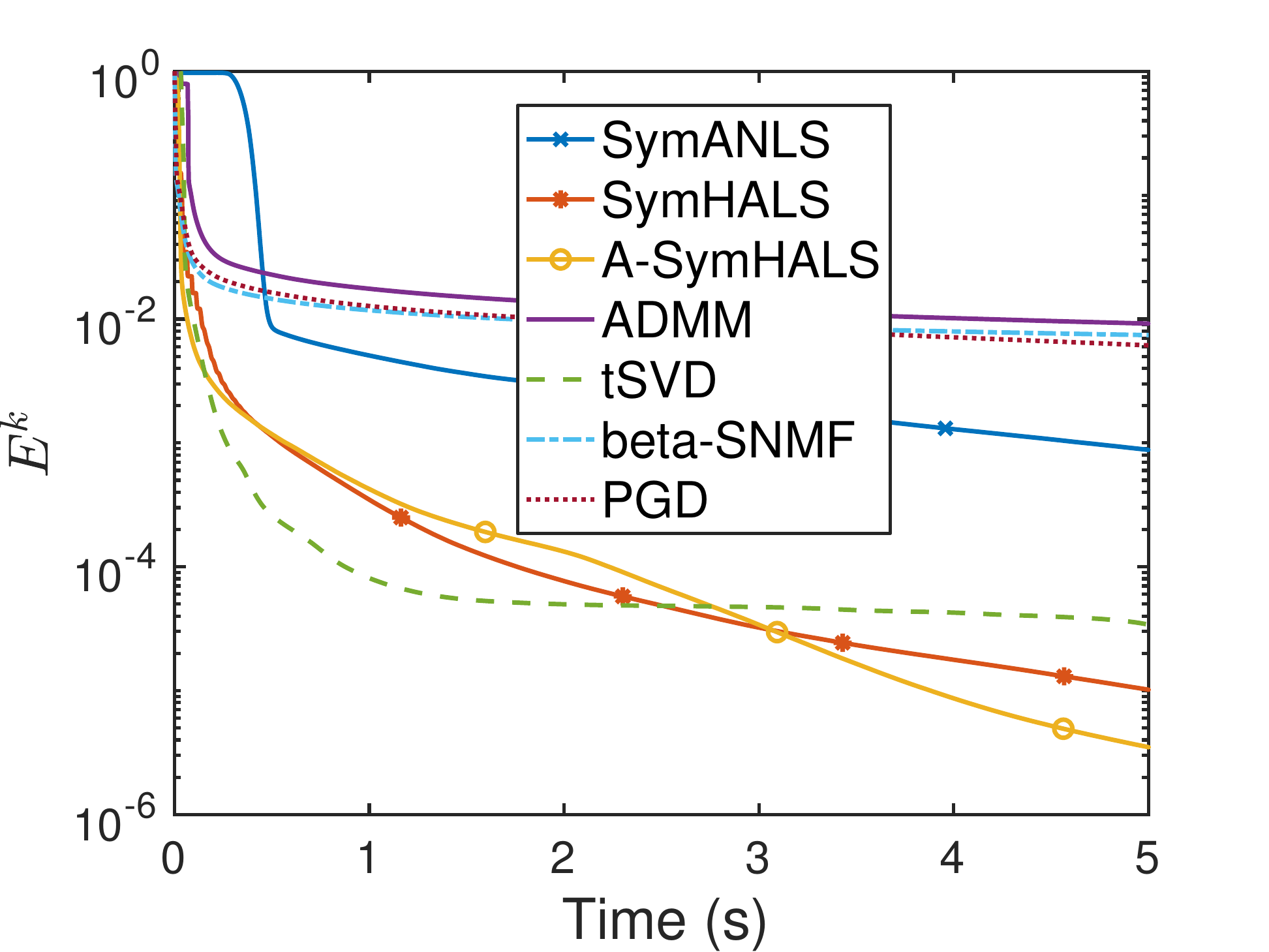}
		\centerline{(b1) $r = 20,~ \sigma = 0$}
	\end{minipage}
	\hfill
	\begin{minipage}{0.24\linewidth}
		\includegraphics[width=1\textwidth]{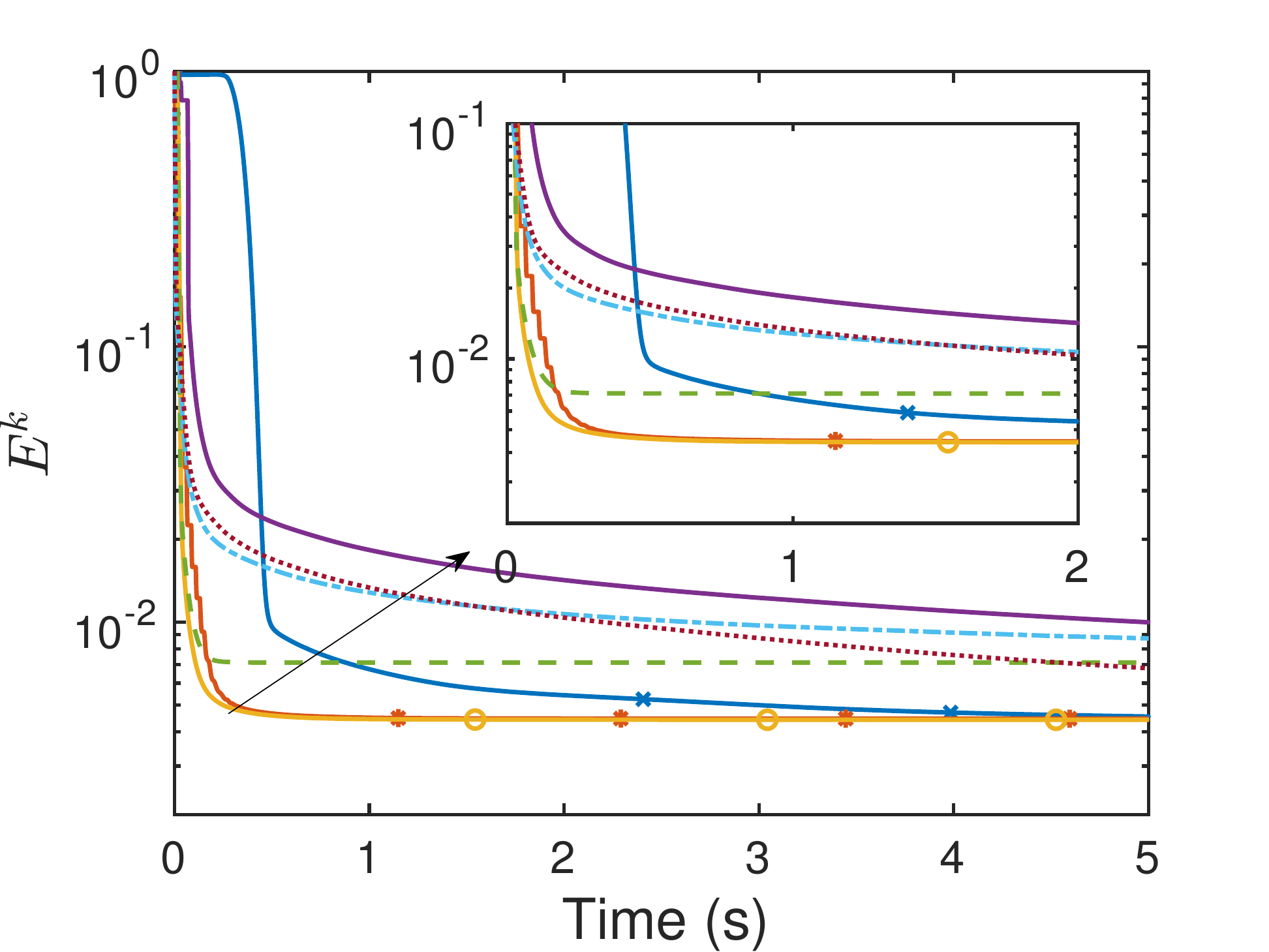}
		\centerline{(b2) $r = 20,~ \sigma = 0.1$}
	\end{minipage}
	\begin{minipage}{0.24\linewidth}
		\includegraphics[width=1\textwidth]{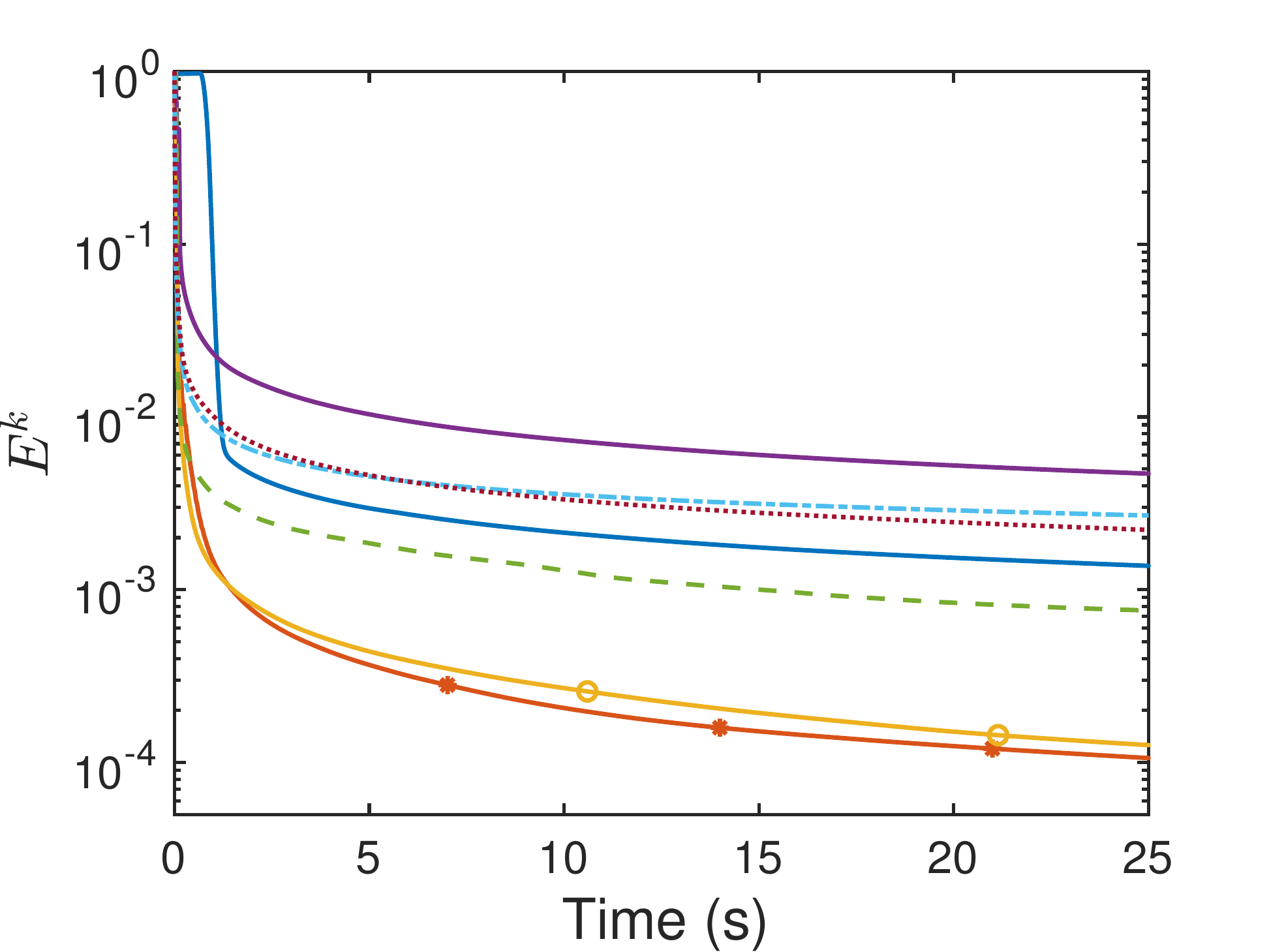}
		\centerline{(b3) $r = 40,~ \sigma = 0$}
	\end{minipage}
	\hfill
	\begin{minipage}{0.24\linewidth}
		\includegraphics[width=1\textwidth]{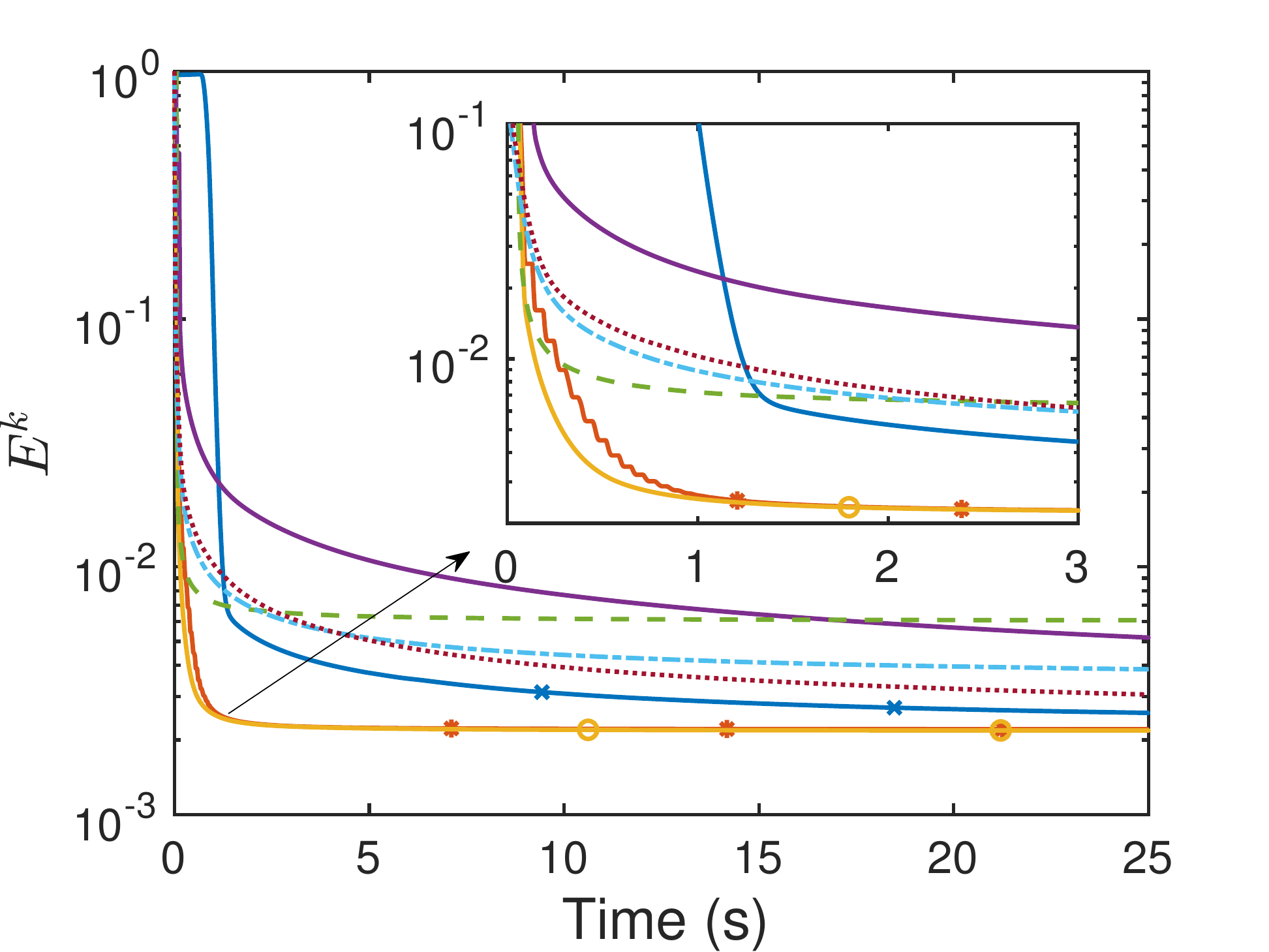}
		\centerline{(b4) $r = 40,~ \sigma = 0.1$}
	\end{minipage}
	\caption{ Normalized fitting error $E^k$ versus iteration number (top row) and wall clock running time (bottom row) on synthetic data with $n= 300$, varied factorization rank $r$ and noise level $\sigma$. } \label{fig:synthetic}
\end{figure*}

\begin{figure*}[!htb]
	\centering
	\begin{minipage}{0.24\linewidth}
		\includegraphics[width=1\textwidth]{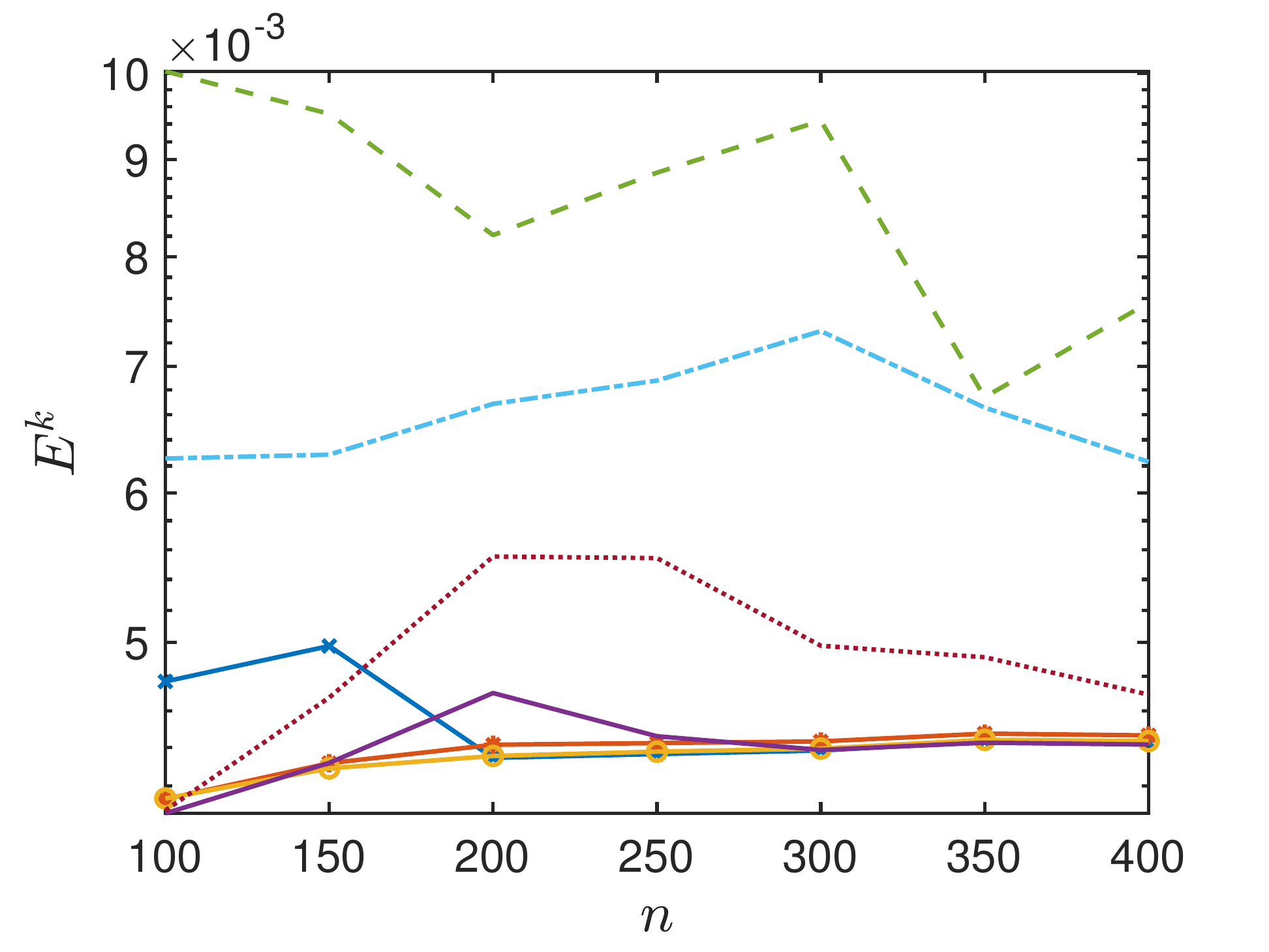}
		\centerline{(a1) $r = 20, ~ \sigma = 0.1$}
	\end{minipage}
	\hfill
	\begin{minipage}{0.24\linewidth}
		\includegraphics[width=1\textwidth]{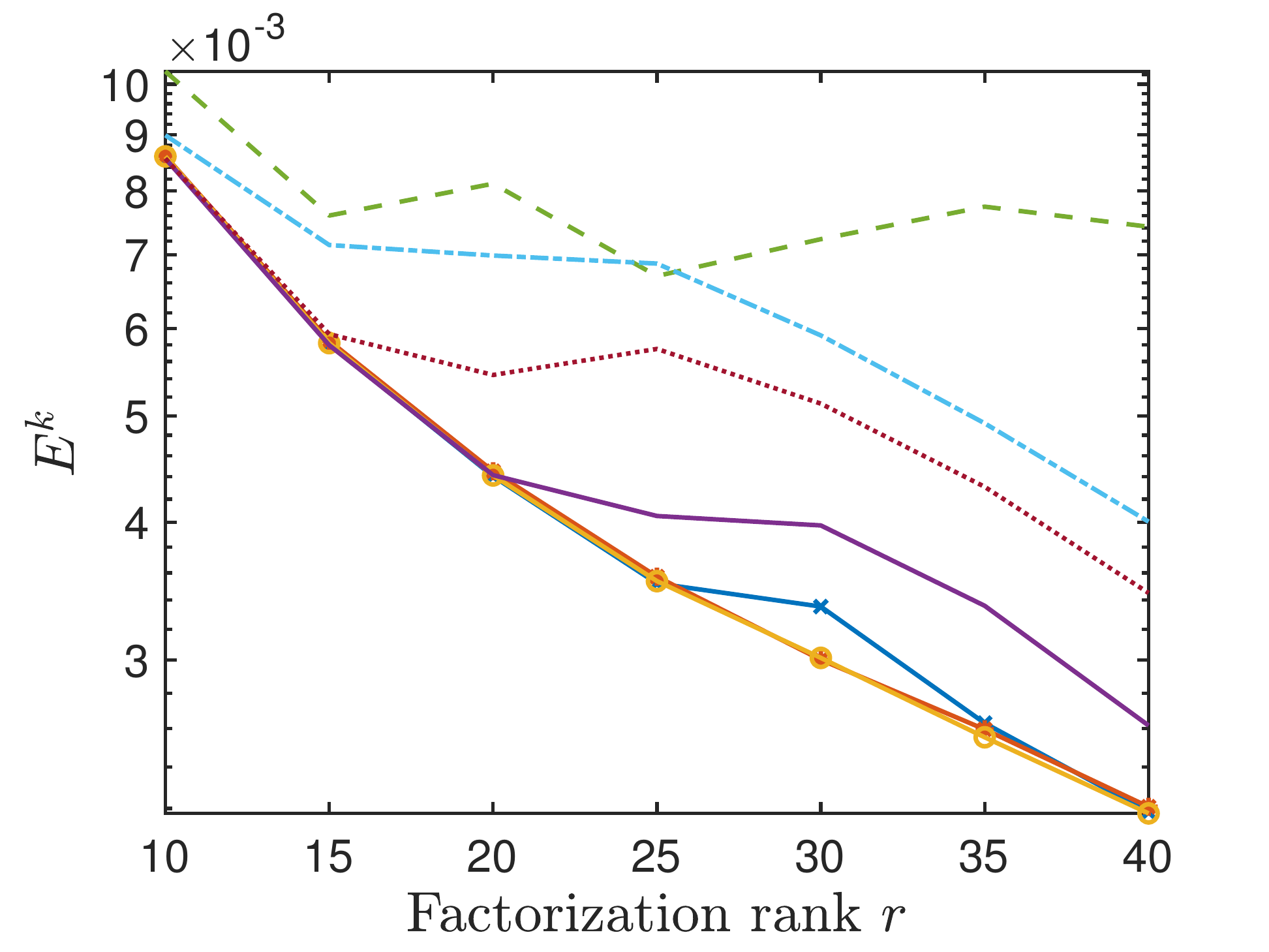}
		\centerline{(a2) $n = 300, ~\sigma = 0.1$}
	\end{minipage}
	\begin{minipage}{0.24\linewidth}
		\includegraphics[width=1\textwidth]{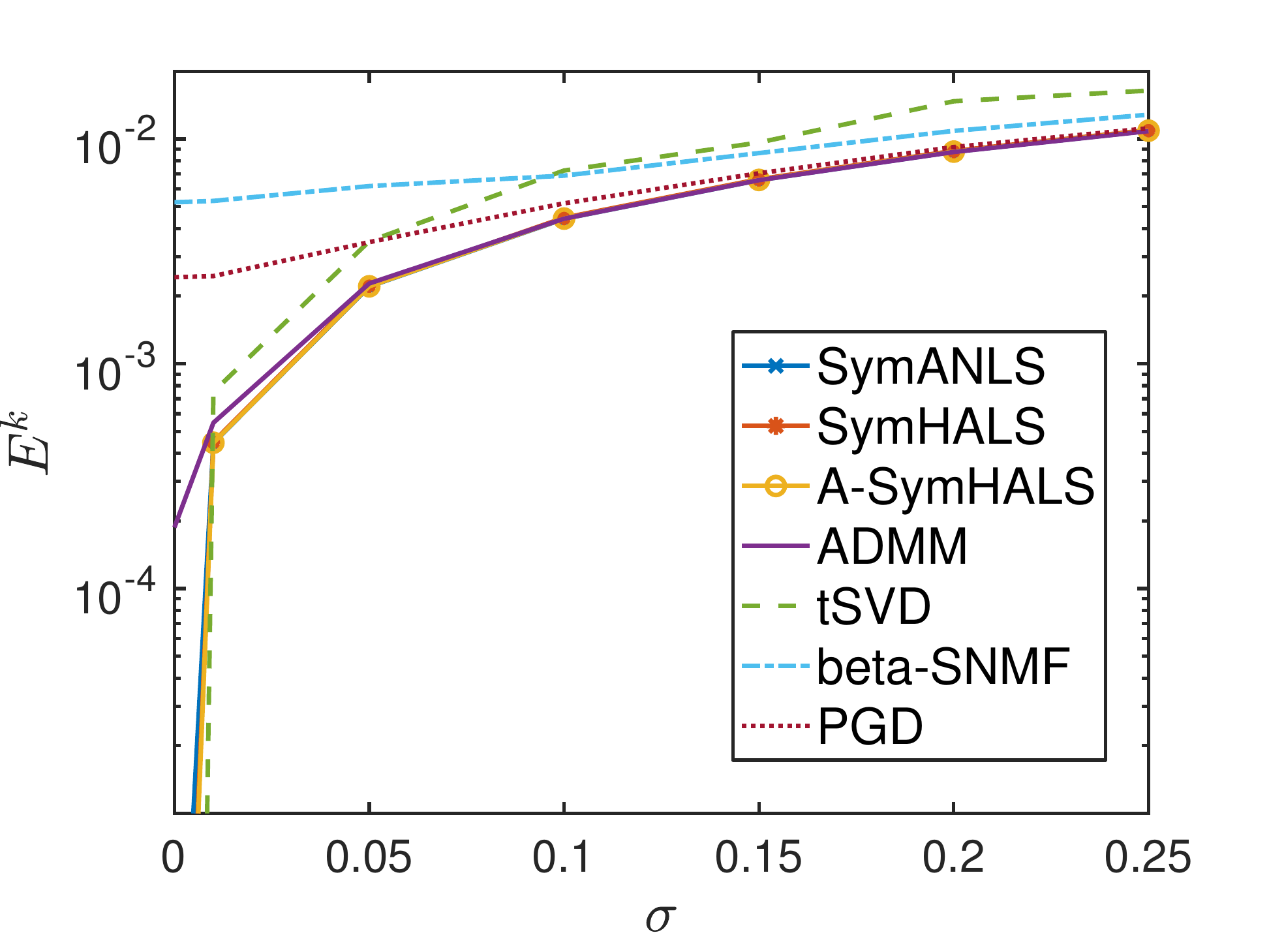}
		\centerline{(a3) $n= 300, ~r = 20$}
	\end{minipage}
	\hfill
	\begin{minipage}{0.24\linewidth}
		\includegraphics[width=1\textwidth]{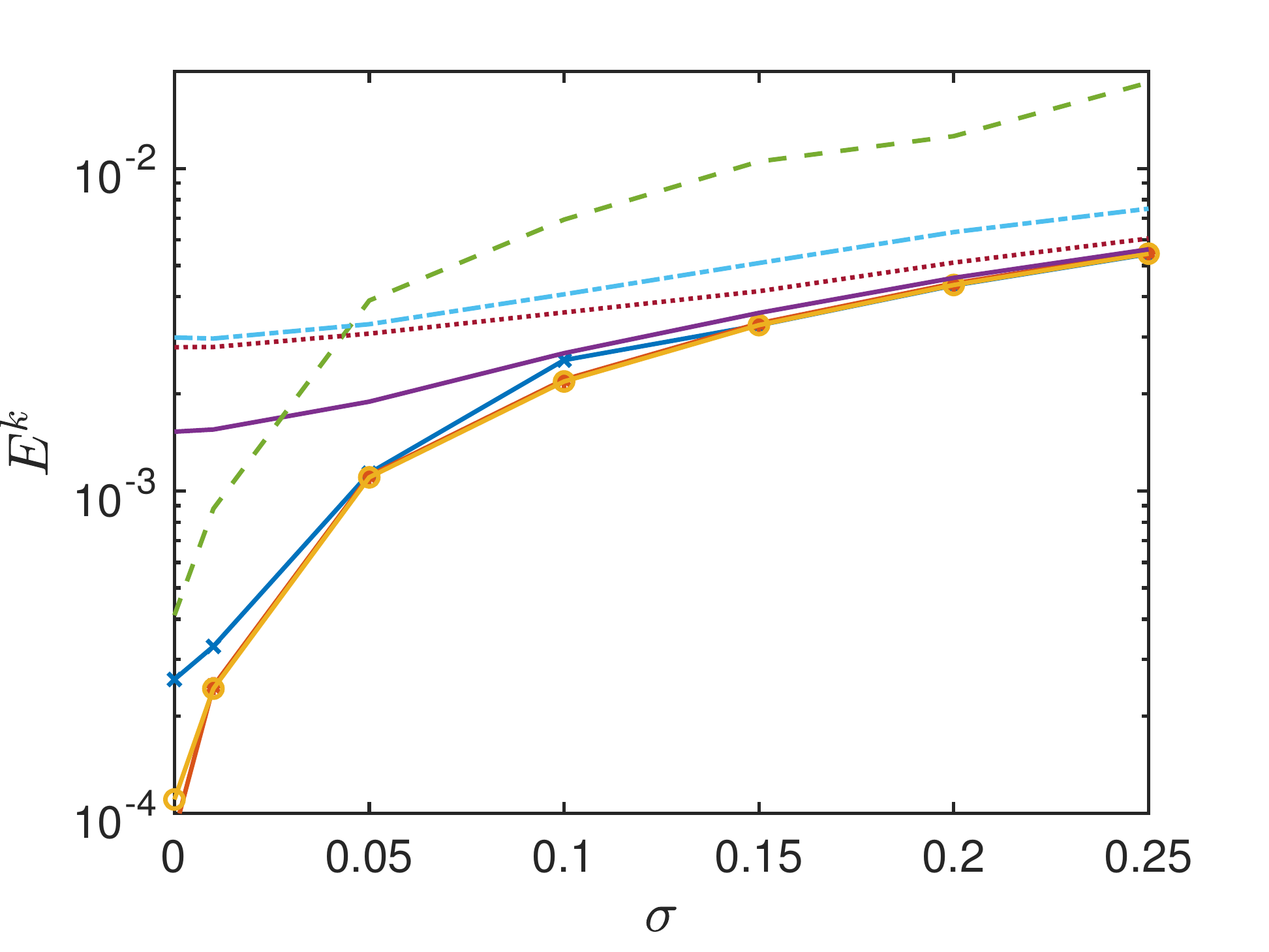}
		\centerline{(a4) $n=300, ~r = 40$}
	\end{minipage}
	\caption{  \revise{Normalized fitting error $E^k$  on synthetic data by varying experimental parameters. (a1):  data size  $n\in \{100, 150, 200, 250, 300, 350, 400\}$. (a2): Factorization rank $r\in \{ 10, 15, 20, 25, 30, 35, 40 \}$. (a3): Noise level $\sigma\in \{0, 0.01, 0.05, 0.1, 0.15, 0.2, 0.25\}$. (a4): Noise level $\sigma\in \{0, 0.01, 0.05, 0.1, 0.15, 0.2, 0.25\}$.}} \label{fig:vary}
\end{figure*}

\begin{figure*}[!htb]
	\centering
	\begin{minipage}{0.24\linewidth}
		\includegraphics[width=1\textwidth]{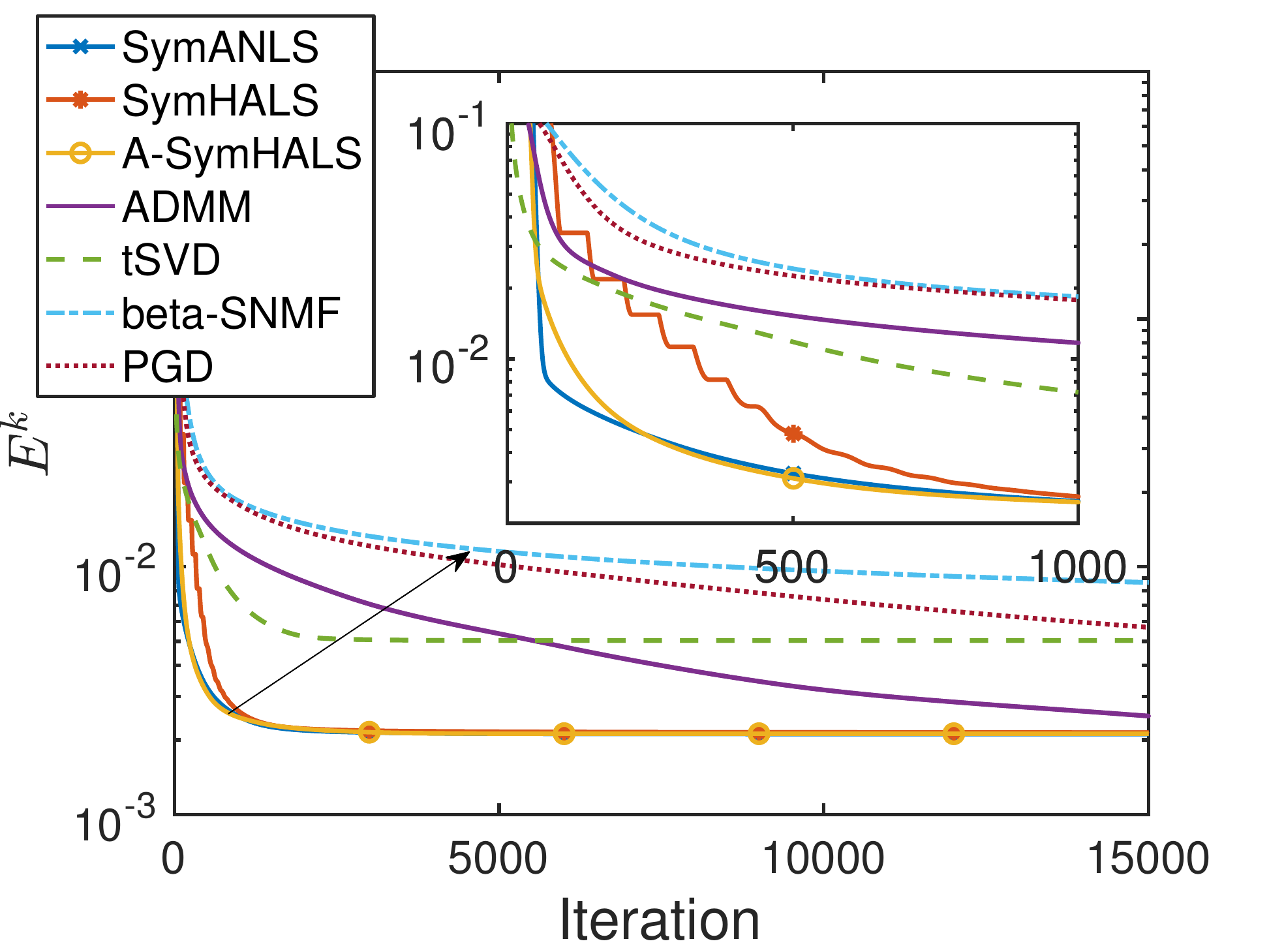}
		\centerline{(a1) $r = 20$, uniform noise}
	\end{minipage}
	\hfill
	\begin{minipage}{0.24\linewidth}
		\includegraphics[width=1\textwidth]{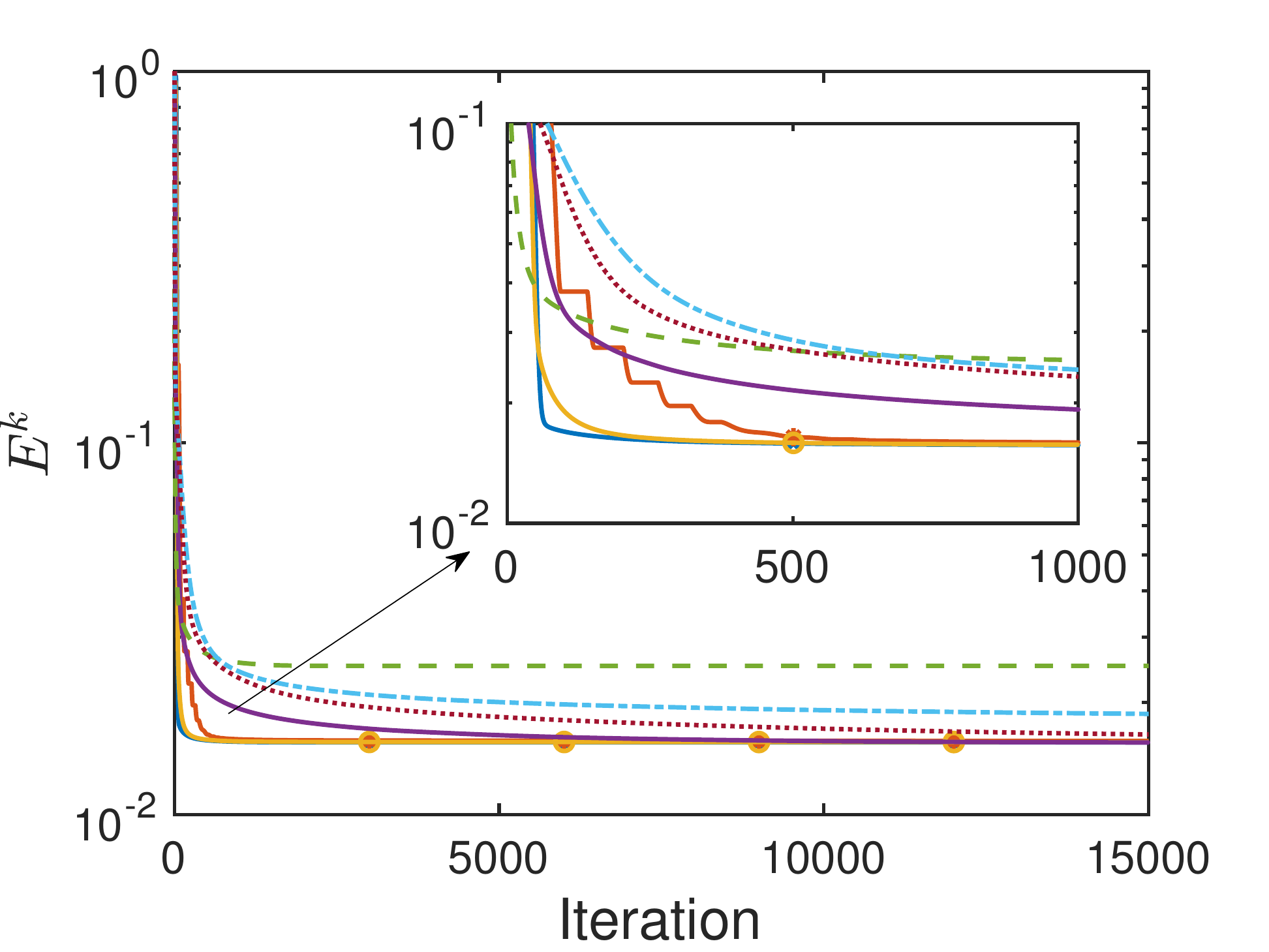}
		\centerline{(a2) $r = 20$, lognormal noise}
	\end{minipage}
	\begin{minipage}{0.24\linewidth}
		\includegraphics[width=1\textwidth]{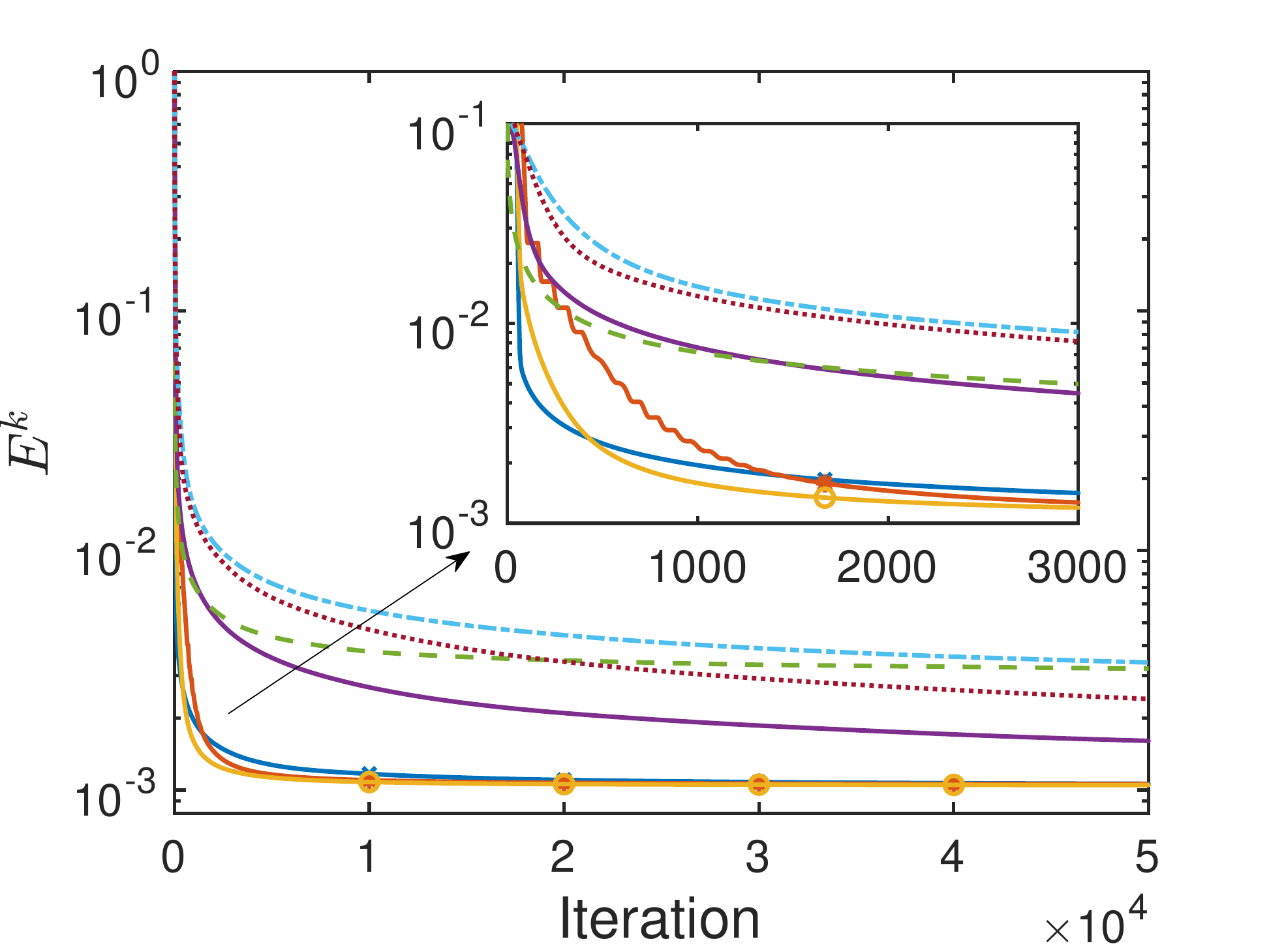}
		\centerline{(a3) $r = 40$, uniform noise}
	\end{minipage}
	\hfill
	\begin{minipage}{0.24\linewidth}
		\includegraphics[width=1\textwidth]{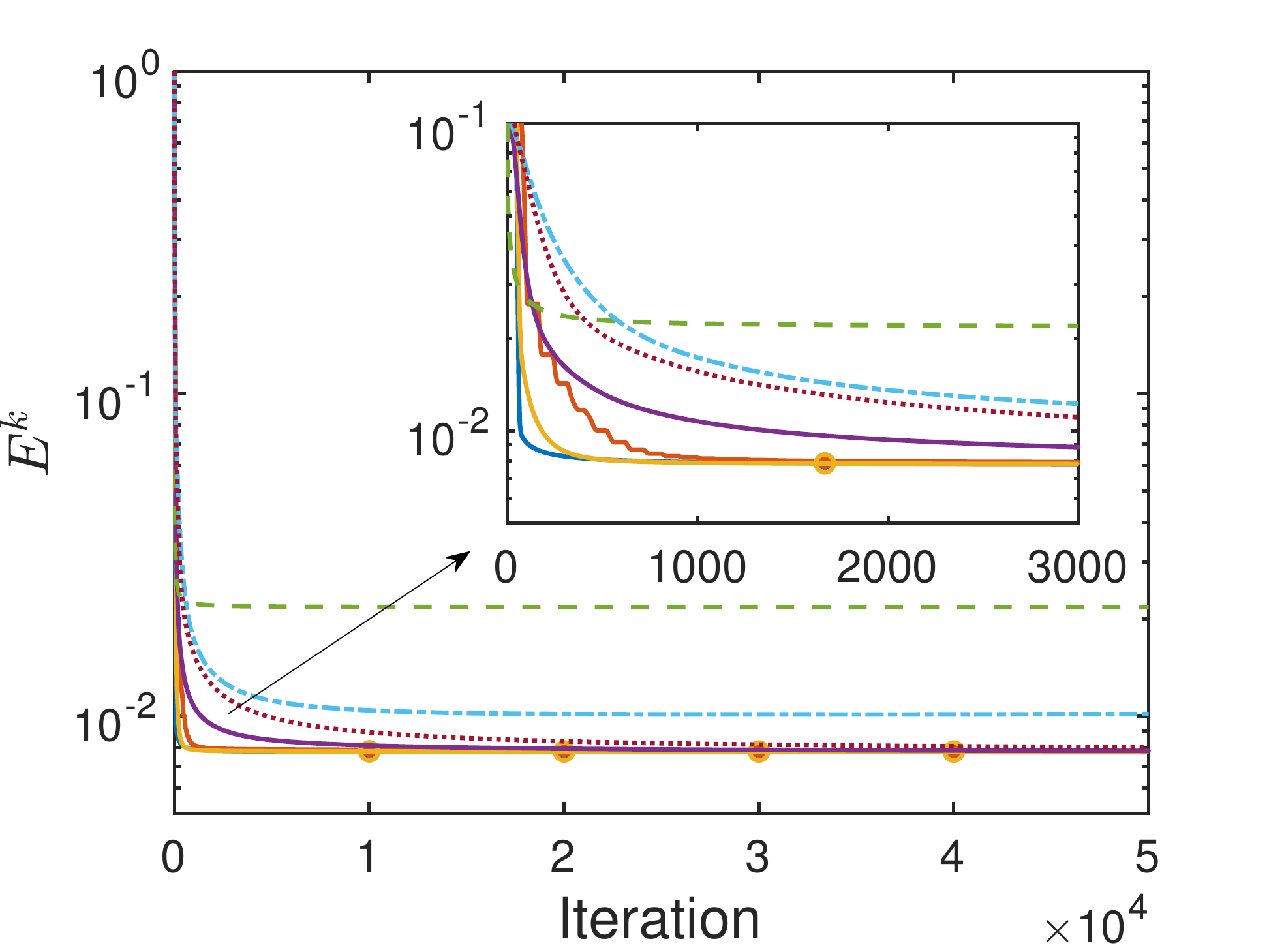}
		\centerline{(a4) $r = 40$, lognormal noise}
	\end{minipage}
	\caption{  \revise{Normalized fitting error $E^k$ versus iteration number  on synthetic data with $n= 300$, noise level $\sigma=0.1$, varied factorization rank $r$, and varied noise distributions. }} \label{fig:noise}
\end{figure*}

\revise{To further investigate the performance of our proposed algorithms, we vary several experimental parameters and show the results in \Cref{fig:vary}.  We run each algorithm  $3\times 10^4$  iterations for each parameter setting in order to ensure convergence.  In \Cref{fig:vary} (a1), we fix $r = 20, ~\sigma = 0.1$ and vary $n\in \{100, 150, 200, 250, 300, 350, 400\}$. It is observed that varying $n$  while keeping the other parameters fixed does not affect too much the performance of each algorithm. This is reasonable as varying $n$ has almost no effect to the signal-to-noise ratio. We can conclude from \Cref{fig:vary} (a1) that our algorithms and ADMM outperform others for different $n$. In \Cref{fig:vary} (a2), we fix $n = 300, ~\sigma = 0.1$ and vary $r\in \{ 10, 15, 20, 25, 30, 35, 40 \}$. It is clear that our algorithms outperform others for almost all choices of $r$. We also vary the noise level $\sigma\in \{0, 0.01, 0.05, 0.1, 0.15, 0.2, 0.25\}$ in \Cref{fig:vary} (a3) (where $n=300, ~ r= 20$) and \Cref{fig:vary} (a4)  (where $n=300, ~ r= 40$). We observe that our algorithms perform better than others for almost all choices of $\sigma$. }

\revise{From \Cref{fig:synthetic} and \ref{fig:vary}, we can observe the robustness of our proposed algorithms to Gaussian noise. We further exam the performance on other types of noise. With the same experimental settings as used to generate \Cref{fig:synthetic} (except for the noise distribution), we test all algorithms on synthetic data contaminated by noise generated according to the uniform distribution and the lognormal distribution; see \Cref{fig:noise}.  We can observe similar phenomena  that our proposed algorithms  perform better than others in these cases.}

\begin{figure*}[!htb]
	\centering
	\begin{minipage}{0.49\linewidth}
		\includegraphics[width=1\textwidth]{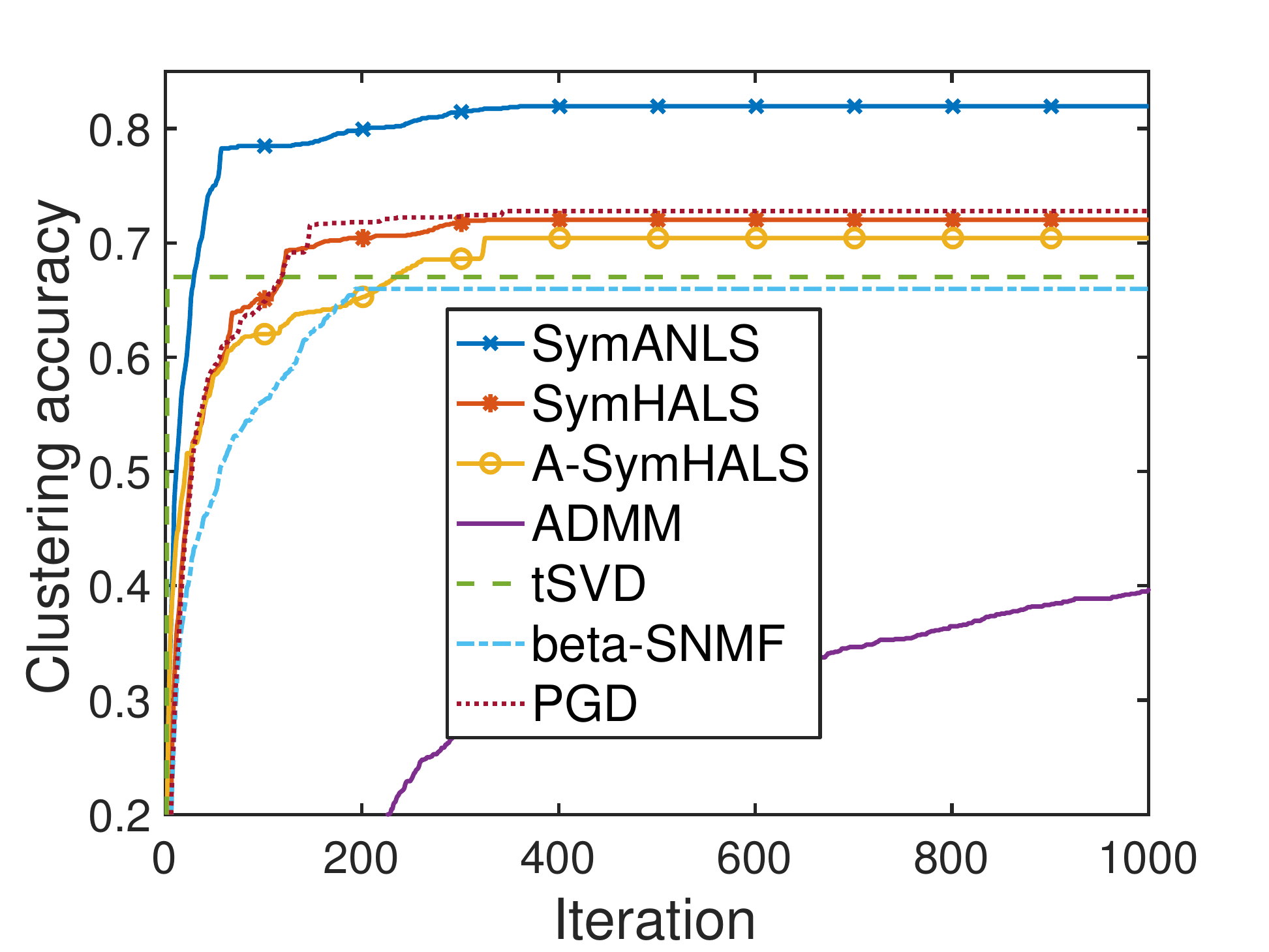}\\
		\centering{(a1) COIL-20 dataset, $n = 1440,~r = 20$}
	\end{minipage}
	\hfill
	\begin{minipage}{0.49\linewidth}
		\includegraphics[width=1\textwidth]{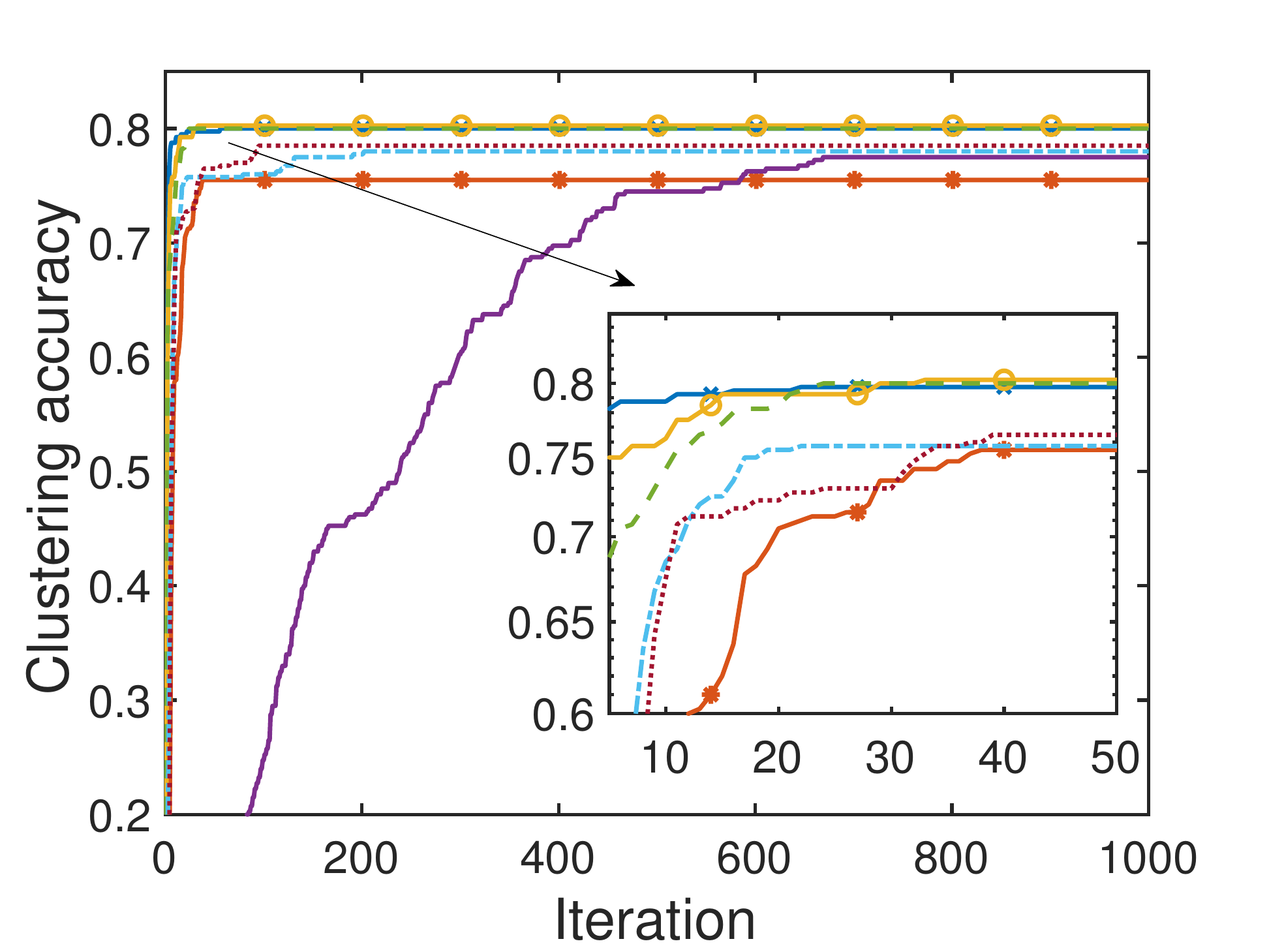}\\
		\centering{(a2) ORL dataset, $n = 400, ~r = 40$.}
	\end{minipage}
	\caption{Image clustering accuracy versus iteration number on real dataset. }  \label{fig:real date}
\end{figure*}
%

\subsection{Image Clustering}

Symmetric NMF can be used for graph clustering where each element $\mX_{ij}$ denotes the similarity between data $i$ and $j$ \cite{kuang2015symnmf,ding2005equivalence}.
In this subsection, we apply different symmetric NMF algorithms for graph clustering on real image datasets and compare the clustering accuracy~\cite{xu2003document}.\footnote{\revise{Note that there exist many other clustering methods, but comparing with them is not the focus of this paper. Instead, we only compare different symmetric NMF algorithms to demonstrate the performance of the proposed algorithms for solving symmetric NMF.}}

We first put all images to be clustered in a data matrix $\mM$, where each row is a vectorized image.  We then construct the similarity matrix following the procedures in ~\cite[section 7.1, step 1 to step 3]{kuang2015symnmf}, and utilize self-tuning method to form the similarity matrix $\mX$.
Upon deriving $\widetilde \mU$ from symmetric NMF  $\mX \approx \widetilde \mU\widetilde \mU^\T$, the label of the $i$-th image can be obtained by:
\begin{equation}\label{eq:assign_label}
label({\mM}_i) = \argmax_{j}{\widetilde \mU}_{(ij)}.
\end{equation}
We  conduct the experiments on four image datasets:

\begin{table}[!t]\caption{Summary of image clustering accuracy of different algorithms on five image datasets}\label{table:image clustering}
	\vspace{-.2in}
	\setlength{\tabcolsep}{4pt}
	\renewcommand{\arraystretch}{1}
	\begin{center}
		\resizebox{0.5\textwidth}{!}{
			\begin{tabular}{c|c|c|c|c|c}
				\hline
				&ORL&COIL-20&MNIST$_{train}$&TDT2&MNIST$_{test}$\\
				\hline
				SymANLS&{0.8000}&\textbf{0.8194}&0.6217&0.9793&0.8589\\
				\hline
				SymHALS&0.7550&0.7201&{0.6393}&{0.9800}&{0.8589}\\
				\hline
				A-SymHALS&\textbf{0.8025}&{0.7042}&\textbf{0.7043}&\textbf{0.9803}&{0.8589}\\
				\hline
				ADMM&0.7750&0.6937&0.5803&0.9800&{0.8713}\\
				\hline
				tSVD&0.8000&0.6701&{0.6653}&{0.6044}&{0.9050}\\
				\hline
				beta-SNMF&0.7800&0.6597&0.5100&0.9647&0.8513\\
				\hline
				PGD&0.7850&0.7278&0.6287&0.8313&\bf{0.9136}\\
				\hline
			\end{tabular}
		}
	\end{center}
\end{table}

\textbf{ORL}: 400 facial images from 40 different persons with each one having 10 images from different angles and emotions\footnote{http://www.cl.cam.ac.uk/research/dtg/attarchive/facedatabase}.

\textbf{COIL-20}: 1440 images from 20 objects\footnote{http://www.cs.columbia.edu/CAVE/software/softlib/coil-20.php}.

\textbf{TDT2}: 10,212 news articles from 30 categories\footnote{https://www.ldc.upenn.edu/collaborations/past-projects}. We extract the first 3147 data for experiments (containing only 2 categories).

\textbf{MNIST}: classical handwritten digits dataset\footnote{http://yann.lecun.com/exdb/mnist/}, from which we take the first 3147 images from each of the training and test sets.

In \Cref{fig:real date} (a1), we display  the  clustering accuracy on dataset \textbf{COIL-20} versus iteration number.  Similar results for dataset \textbf{ORL} are plotted in  \Cref{fig:real date} (a2).  
We observe that though tSVD and beta-SNMF have fast convergence speed, they often provide worse clustering accuracies compared to the proposed methods, which is consistent with the conclusion drawn in the noisy synthetic  experiments. We note that the performance of ADMM will increase as iteration goes and after almost 6000 iterations it reaches a comparable result to other algorithms on COIL-20 dataset. Moreover, it requires more iterations for larger dataset. This observation makes ADMM less favorable for image clustering due to its computational burden.  These results as well as the experimental results shown in the last subsection demonstrate $(i)$ the power of transferring the symmetric NMF \eqref{eq:SNMF} to a penalized nonsymmetric one \eqref{eq:SNMF by reg}; and $(ii)$ the efficiency of alternating-type algorithms for solving \eqref{eq:SNMF by reg} by exploiting the splitting property within the optimization variables in \eqref{eq:SNMF by reg}.


Finally, Table~\ref{table:image clustering} shows the clustering accuracies of the algorithms on different datasets, where  we run enough iterations for ADMM so that it obtains its best results. We observe from Table~\ref{table:image clustering} that SymANLS, SymHALS and A-SymHALS perform better than or have comparable performance to the others in most of the cases.

%

\section{ Proof of \Cref{thm:convergence}} \label{Appendix}
Since both SymANLS and SymHALS can be viewed as special cases of A-SymHALS in terms of convergence, we will only focus on the proof of the convergence for A-SymHALS. According to \Cref{thm:critical point U = V}, the remaining task is to establish the descent property as well as the convergence behavior  of A-SymHALS to a critical point of the penalized nonsymmetric NMF  \eqref{eq:SNMF by reg}. As stated in \Cref{remark: proof of A-SymHALS}, the multiple update scheme in A-SymHALS destroys the possibility to directly apply the disciplined KL convergence analysis framework.  Due to the multiple update scheme, we can only obtain a weaker version of the so-called safeguard property in the standard framework (see \Cref{lem:safeguard 2}), but a slightly strengthened sufficient decrease  property (see \Cref{lem:sufficient decrease 2}).

\subsection{Notations Used in The Proof}\label{subsec:notations in appendix}
We stack $\mU$ and $\mV$ into one variable $\mW:=(\mU,\mV)$. We may constantly change between the  notations $\mW$ and $(\mU,\mV)$. Let $\mW_{ k}^j = (\mU_{k}^j,\mV_{k}^j)$  represent the $j$-th inner iterate generated by A-SymHALS  during the  update from $\mW_{ k} = (\mU_{k},\mV_{k})$ to $\mW_{ k+1} = (\mU_{k+1},\mV_{k+1})$, with $\mW_{ k}^0 = \mW_{ k}$ and $\mW_{ k}^L = \mW_{ k+1}$. We designate $\vu_{i,k}^{j}$ and $\vv_{i,k}^{j}$ the $i$-th columns of $\mU_k^{j}$ and $\mV_k^{j}$, respectively. Correspondingly, $\vu_{i,k}$ and $\vv_{i,k}$ represent the $i$-th columns of $\mU_k$ and $\mV_k$, respectively. At the $k$-th outer iteration  and $j+1$-th inner iteration of $\vu_{i}$, we denote
\[
g_{i,k}^{j+1} (\vu_i) = g(\vu_{1,k}^{j+1},\cdots, \vu_{i-1,k}^{j+1}, \vu_i,  \vu_{i+1,k}^{j},\cdots, \vu_{r,k}^{j}, \mV_{k})
\]
as the function $g$ when restricted to block $\vu_i$.
We will also rewrite \eqref{eq:SNMF by reg} as a unconstrained optimization problem by using the function 
\[
   f(\mU,\mV) = g(\mU,\mV) + \sigma_{+}(\mU) + \sigma_{+}(\mV),
\]
with $\sigma_{+}$ being the indicator function of the nonnegative constraint.

\subsection{Definitions and Basic Ingredients}
Before going to the main proof, we  first introduce some supporting materials.

\begin{defi}[first order optimality]\label{def:optimality condition}
    A point $\mW^\star = (\mU^\star,\mV^\star)$ is called a critical point of problem \eqref{eq:SNMF by reg} if it satisfies 
    \[
    \begin{split}
           0 \in \partial f(\mW^\star) &= \left( \nabla_{\mU} g(\mU^\star,\mV^\star), \nabla_{\mV} g(\mU^\star,\mV^\star) \right) \\
           &\quad + \left(\partial \sigma_{+}(\mU^\star),  \partial \sigma_{+}(\mV^\star) \right), 
    \end{split}
    \]
    where $\partial \sigma_{+}$ represents the usual convex subdifferential; i.e., $\partial \sigma_{+}(\mU):= \{\mS \in \R^{n\times r}: \langle \mS, \mU' - \mU\rangle \leq 0, \forall \ \mU'\in \R^{n\times r} \}$.
\end{defi}

The following property states the geometry of a function $h$ around its critical points, which plays a key role in our sequel analysis.
\begin{defi}[KL property]\cite{bolte2007lojasiewicz,attouch2009convergence}\label{def:KL}
	We say a proper semi-continuous function $h(\vu)$ satisfies Kurdyka-Lojasiewicz (KL) property if for every  critical point $\overline{\vu}$ of $h(\vu)$, there exist $\delta>0,~\eta>0,~\theta\in[0,1),~C_1>0$ such that for all 
\[
   \vu \in B(\overline \vu,\delta) \cap \{ \vu: h(\overline \vu) < h(\vu) <  h(\overline \vu) + \eta\}, 
\]
one has
	\[
	\left|h(\vu) - h(\overline{\vu})\right|^{\theta} \leq C_1 \dist(\vzero, \partial h(\vu)),
	\]
where $B(\overline \vu,\delta) := \{\vu: \|\vu - \overline \vu\|_2\leq \delta\}$.	
\end{defi}
The above KL property (also known as KL inequality) states the regularity of $h$ around its critical point $\overline \vu$.     \cite[Section 4]{bolte2014proximal} shows that our function of interest   $f$ satisfies this property.   Indeed, the KL property is general enough such that a large class of functions hold such a property,   including but never limited to any polynomial, any norm, any quasi norm, $\ell_0$-norm, indicator function of smooth manifold, etc; see \cite{bolte2014proximal,attouch2013convergence} for more discussions and examples.

\subsection{Supporting Results} \label{subsec:supporting results} 

Compared with the standard sufficient decrease property  of descent algorithms \cite{bolte2014proximal,attouch2010proximal}, the following lemma states the strengthened sufficient decrease property of A-SymHALS. 
\begin{lem}\label{lem:sufficient decrease 2}
For any $k\geq 0$, we have
	\e \label{eq:sufficient decrease 2}
	\begin{split} 
		&f(\mW_k) - f(\mW_{k+1}) \\ 
		&\geq \frac{\lambda}{4L}\left[ \|\mW_{k+1}  - \mW_{k}\|_F^2 
		+ \sum_{j=0}^{L-1} \|\mW_{k}^{j+1}  - \mW_{k}^j\|_F^2 \right].
	\end{split}
	\ee
\end{lem}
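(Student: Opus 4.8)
The plan is to establish the strengthened sufficient decrease property \eqref{eq:sufficient decrease 2} by carefully exploiting the strong convexity of each single-column subproblem that A-SymHALS solves, and then summing the resulting per-column decreases. The key observation is that each update in A-SymHALS---whether of a column $\vu_i$ or $\vv_i$---is the exact minimizer of $f$ restricted to that block, and that block restriction is $\lambda$-strongly convex (indeed $f$ is $\lambda$-strongly convex in any single column of $\mU$ or $\mV$ once all other variables are fixed, since the Hessian of the column subproblem is $(\|\vv_i\|_2^2+\lambda)\mId$ or $(\|\vu_i\|_2^2+\lambda)\mId$, which is $\succeq \lambda\mId$).

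First I would record the elementary strong-convexity inequality: if $\phi$ is $\mu$-strongly convex and $z^\star = \argmin \phi$, then $\phi(z) - \phi(z^\star) \ge \tfrac{\mu}{2}\|z - z^\star\|_2^2$ for all $z$. Applying this to each column subproblem along the inner iterations: during the $(j{+}1)$-th inner sweep, updating $\vu_{i}$ from $\vu_{i,k}^{j}$ to $\vu_{i,k}^{j+1}$ (the minimizer of $g_{i,k}^{j+1}$) yields
\[
g_{i,k}^{j+1}(\vu_{i,k}^{j}) - g_{i,k}^{j+1}(\vu_{i,k}^{j+1}) \ge \frac{\lambda}{2}\|\vu_{i,k}^{j+1} - \vu_{i,k}^{j}\|_2^2,
\]
and similarly for each $\vv_i$ update. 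Summing these telescoping single-column decreases over $i = 1,\dots,r$ within one inner sweep (using that each completed column update does not change the value seen by the next column update, i.e.\ the sweep is a valid block-coordinate descent pass) gives
\[
f(\mW_k^{j}) - f(\mW_k^{j+1}) \ge \frac{\lambda}{2}\|\mU_k^{j+1} - \mU_k^{j}\|_F^2 + \frac{\lambda}{2}\|\mV_k^{j+1} - \mV_k^{j}\|_F^2 = \frac{\lambda}{2}\|\mW_k^{j+1} - \mW_k^{j}\|_F^2,
\]
where I must be slightly careful about the exact column-update schedule of A-SymHALS (it updates all of $\mU$'s columns $L$ times, then all of $\mV$'s columns $L$ times, per step~6-7 of \Cref{alg:Accelerated HALS}); the same per-column argument applies regardless of the ordering. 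Summing over $j = 0,\dots,L-1$ and using $\mW_k^0 = \mW_k$, $\mW_k^L = \mW_{k+1}$ telescopes to $f(\mW_k) - f(\mW_{k+1}) \ge \tfrac{\lambda}{2}\sum_{j=0}^{L-1}\|\mW_k^{j+1} - \mW_k^{j}\|_F^2$.

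To get the extra term $\|\mW_{k+1} - \mW_k\|_F^2$ on the right-hand side, I would invoke the triangle inequality together with Cauchy--Schwarz: $\|\mW_{k+1} - \mW_k\|_F^2 = \|\sum_{j=0}^{L-1}(\mW_k^{j+1} - \mW_k^{j})\|_F^2 \le L\sum_{j=0}^{L-1}\|\mW_k^{j+1} - \mW_k^{j}\|_F^2$, hence $\sum_{j=0}^{L-1}\|\mW_k^{j+1} - \mW_k^{j}\|_F^2 \ge \tfrac{1}{L}\|\mW_{k+1} - \mW_k\|_F^2$. Combining,
\[
f(\mW_k) - f(\mW_{k+1}) \ge \frac{\lambda}{2}\sum_{j=0}^{L-1}\|\mW_k^{j+1} - \mW_k^{j}\|_F^2 = \frac{\lambda}{4}\sum_{j=0}^{L-1}\|\cdot\|_F^2 + \frac{\lambda}{4}\sum_{j=0}^{L-1}\|\cdot\|_F^2 \ge \frac{\lambda}{4L}\|\mW_{k+1}-\mW_k\|_F^2 + \frac{\lambda}{4}\sum_{j=0}^{L-1}\|\mW_k^{j+1}-\mW_k^{j}\|_F^2,
\]
and since $\tfrac{\lambda}{4} \ge \tfrac{\lambda}{4L}$, this implies \eqref{eq:sufficient decrease 2}.

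The main obstacle I anticipate is purely bookkeeping rather than conceptual: getting the indexing of the inner column sweeps exactly right so that each single-column update really is an \emph{exact} minimizer of $f$ over that column given the current values of everything else, and confirming that after updating $\vu_i$ the partial objective value entering the $\vu_{i+1}$ subproblem equals the value that the $\vu_i$ subproblem left off at (so that the per-column decreases genuinely telescope across a sweep, and across the $\mU$-phase then the $\mV$-phase within one inner iteration $j$). Once that structural fact is pinned down, the strong-convexity inequality and the Cauchy--Schwarz step are routine.
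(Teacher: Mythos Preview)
Your proposal is correct and follows essentially the same route as the paper: per-column $\lambda$-strong convexity gives a $\tfrac{\lambda}{2}$-decrease at each inner update, these telescope over the sweep, and then the sum is split in half with one half bounded below via Cauchy--Schwarz/triangle inequality (the paper phrases this as the QM--AM inequality $\sum_j a_j^2 \ge (\sum_j a_j)^2/L$ followed by the triangle inequality, which is exactly your $\|\sum_j(\mW_k^{j+1}-\mW_k^j)\|_F^2 \le L\sum_j\|\mW_k^{j+1}-\mW_k^j\|_F^2$). The only organizational difference is that the paper carries out the argument separately for the $\mU$-phase and the $\mV$-phase and then adds, whereas you package both into $\mW$; your caveat about the update schedule correctly flags that $\mW_k^j$ is not literally a state the algorithm visits, and once the $\mU$- and $\mV$-contributions are separated the bookkeeping goes through exactly as you describe.
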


\begin{proof}[Proof of \Cref{lem:sufficient decrease 2}]
In the $j+1$-th inner iteration for updating $\mU_{k}$ to $\mU_{k+1}$,  suppose we update $\vu_i$, which amounts to solve the subproblem (see \Cref{subsec:notations in appendix} about notations)
\[
\min_{\vu_i} \ f_{i,k}^{j+1}(\vu_i)  =  g_{i,k}^{j+1}(\vu_i) + \sigma_{+}(\vu_i). 
\]
Since $f_{i,k}^{j+1}(\vu_i)$ is $\lambda$-strongly convex due to the penalty term $\frac{\lambda}{2} \|\vu_i - \vv_{i,k}\|_2^2$ of $g_{i,k}^{j+1}$, we have 
\[
    f_{i,k}^{j+1}(\vx)  \geq  f_{i,k}^{j+1}(\vy) + \langle \vs, \vx- \vy \rangle + \frac{\lambda }{2} \|\vx - \vy\|_2^2, 
\]
for all $\vx,\vy \in \R^{n\times 1}$ and $\vs \in \partial_{\vu_i} f_{i,k}^{j+1}(\vy)$. Substituting $\vx = \vu_{i,k}^j$ and $\vy = \vu_{i,k}^{j+1}$ yields
\[
   f_{i,k}^{j+1}(\vu_{i,k}^j)  \geq  f_{i,k}^{j+1}(\vu_{i,k}^{j+1}) + \frac{\lambda }{2} \|\vu_{i,k}^j - \vu_{i,k}^{j+1}\|_2^2, 
\]
where we also used  the optimality of $\vu_{i,k}^{j+1}$ in the subproblem which implies that $0\in \partial_{\vu_i} f_{i,k}^{j+1}(\vu_{i,k}^{j+1})$.  Upon summing both sides of the above inequality over $i = 1,\cdots,r$, one has
	\[
f(\mU_k^j,\mV_k) - f(\mU_{k}^{j+1},\mV_{k}) \geq \frac{\lambda}{2} \|\mU_{k}^{j+1}  - \mU_{k}^j\|_F^2,
\]
for all $j \in \{0, 1, \cdots, L-1\}$.  
Summing both sides of the above inequalities for $j$ from 0 to $L-1$ gives
	\[
	\begin{split}
	& f(\mU_k,\mV_k) - f(\mU_{k+1},\mV_{k})  = \sum_{j=0}^{L-1}  f(\mU_k^j,\mV_k) - f(\mU_{k}^{j+1},\mV_{k})\\
	&\geq \frac{\lambda}{4} \sum_{j=0}^{L-1} \|\mU_{k}^{j+1}  - \mU_{k}^j\|_F^2 + \frac{\lambda}{4} \sum_{j=0}^{L-1} \|\mU_{k}^{j+1}  - \mU_{k}^j\|_F^2\\
	&\stackrel{i}{\geq}  \frac{\lambda}{4} \frac{\left( \|\mU_{k}^{1}  - \mU_{k}^0\|_F +\cdots+\|\mU_{k}^{L}  - \mU_{k}^{L-1}\|_F   \right)^2}{L}  \\
	&\quad + \frac{\lambda}{4} \sum_{j=0}^{L-1} \|\mU_{k}^{j+1}  - \mU_{k}^j\|_F^2\\
	&\stackrel{ii}{\geq}   \frac{\lambda}{4L} \|\mU_{k+1}  - \mU_{k}\|_F^2  + \frac{\lambda}{4} \sum_{j=0}^{L-1} \|\mU_{k}^{j+1}  - \mU_{k}^j\|_F^2.
	\end{split}
	\]
where $(i)$ used the fact that  $\sqrt{\frac{a_1^2 + a_2^2 +\cdots + a_L^2}{L}}  \geq \frac{|a_1| + |a_2| +\cdots + |a_L|}{L}$, and $(ii)$ follows from the triangle inequality.  We complete the proof by using a similar argument to obtain 
 \[
 \begin{split}
 &f(\mU_{k+1},\mV_k) - f(\mU_{k+1},\mV_{k+1})  \\&\geq \frac{\lambda}{4L} \|\mV_{k+1}  - \mV_{k}\|_F^2  + \frac{\lambda}{4} \sum_{j=0}^{L-1} \|\mV_{k}^{j+1}  - \mV_{k}^j\|_F^2 .
 \end{split}
 \]
 The desired result can be obtained by summing up the above two inequalities. 
\end{proof}

\Cref{lem:sufficient decrease 2} has the following direct result, which states the descent property of A-SymHALS. This fulfills the requirement of decreasing the objective function in \Cref{thm:critical point U = V}.   
\begin{lem}\label{lem:iterates regular}
	For any $k\geq 0$, we have:
	\begin{enumerate}[label=(\alph*)]
		\item The sequence $\{f(\mU_k,\mV_k)\}_{k\geq 0}$ of function values is monotonically decreasing and it converges to some finite value $f^\star\geq 0$:
		\[\lim\limits_{k\to\infty}f(\mU_k,\mV_k)=f^\star.\]
		\item The sequence $\{f(\mU_k,\mV_k)\}_{k\geq 0}$ is regular, i.e.,
		\e
		\lim_{k\rightarrow\infty} \| \mU_{k+1} -   \mU_{k} \|_F = 0,  \ \lim_{k\rightarrow\infty}\|   \mV_{k +1  } - \mV_{k}\|_F = 0.
		\label{eq:difference converges}
		\ee
		
	\end{enumerate}
\end{lem}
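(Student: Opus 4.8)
The plan is to obtain both assertions as immediate consequences of the strengthened sufficient decrease inequality \eqref{eq:sufficient decrease 2} in \Cref{lem:sufficient decrease 2}, together with the trivial facts that $f$ is bounded below by $0$ and is finite at the initialization.

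For part (a), I would first note that the right-hand side of \eqref{eq:sufficient decrease 2} is a nonnegative multiple of a sum of squared Frobenius norms, so $f(\mW_k) - f(\mW_{k+1}) \ge 0$ for every $k$; hence $\{f(\mW_k)\}_{k\ge 0}$ is monotonically non-increasing. Since $f(\mU,\mV) = g(\mU,\mV) + \sigma_{+}(\mU) + \sigma_{+}(\mV)$ with $g$ a sum of squared norms and $\sigma_{+}$ valued in $\{0,+\infty\}$, we have $f \ge 0$ everywhere, and along the iterates $\mU_k, \mV_k \ge 0$ forces the indicator terms to vanish, so $0 \le f(\mW_k) < \infty$ for all $k$ (in particular $f(\mW_0) = \frac{1}{2}\|\mX - \mU_0\mU_0^{\T}\|_F^2$ by the initialization $\mU_0 = \mV_0 \ge 0$). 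A monotone, bounded-below sequence converges, so $f(\mW_k) \to f^\star$ for some $f^\star \ge 0$.

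For part (b), I would telescope \eqref{eq:sufficient decrease 2}: dropping the nonnegative inner-iterate terms on the right and summing over $k = 0, \dots, N-1$ gives
\[
\frac{\lambda}{4L}\sum_{k=0}^{N-1}\|\mW_{k+1} - \mW_k\|_F^2 \;\le\; f(\mW_0) - f(\mW_N) \;\le\; f(\mW_0) - f^\star \;<\; \infty .
\]
Letting $N \to \infty$ shows the series $\sum_{k \ge 0}\|\mW_{k+1} - \mW_k\|_F^2$ is summable, so its general term tends to zero. Because $\mW_k = (\mU_k,\mV_k)$ gives $\|\mW_{k+1} - \mW_k\|_F^2 = \|\mU_{k+1} - \mU_k\|_F^2 + \|\mV_{k+1} - \mV_k\|_F^2$, this yields $\|\mU_{k+1} - \mU_k\|_F \to 0$ and $\|\mV_{k+1} - \mV_k\|_F \to 0$, which is exactly \eqref{eq:difference converges}.

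There is no real obstacle in this lemma: it is the standard consequence of a sufficient-decrease estimate plus a lower bound on the objective. The only point requiring a moment's care is the finiteness of $f(\mW_0)$, which we get for free from the initialization $\mU_0 = \mV_0 \ge 0$; all the genuine work of handling the multiple inner updates of A-SymHALS has already been discharged in the proof of \Cref{lem:sufficient decrease 2}.
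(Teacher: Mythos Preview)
Your proposal is correct and matches the paper's approach: the paper explicitly states that \Cref{lem:iterates regular} is a ``direct result'' of \Cref{lem:sufficient decrease 2} and does not even write out a proof. Your argument fills in precisely the standard details---monotonicity from nonnegativity of the right-hand side, convergence from boundedness below, and regularity from summability via telescoping---exactly as intended.
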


\Cref{lem:sufficient decrease 2} together with \Cref{lem:bound:iterate} also implies the boundness of the sequence $\{(\mW_k)\}_{k\geq 0} = \{(\mU_k,\mV_k)\}_{k\geq 0}$.
\begin{lem}\label{lem:bounded iterates}
	The sequence $\{(\mW_k)\}_{k\geq 0} = \{(\mU_k,\mV_k)\}_{k\geq 0}$ lies in a bounded subset.
\end{lem}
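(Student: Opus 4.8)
The plan is to obtain \Cref{lem:bounded iterates} as a short corollary of the two results already in hand: the strengthened sufficient decrease property \eqref{eq:sufficient decrease 2} of \Cref{lem:sufficient decrease 2}, and the a priori energy bound of \Cref{lem:bound:iterate}. The first step is to observe that the right-hand side of \eqref{eq:sufficient decrease 2} is a nonnegative combination of squared Frobenius norms, so $f(\mW_{k+1}) \le f(\mW_k)$ for every $k \ge 0$; by induction $f(\mW_k) \le f(\mW_0)$ for all $k$. Since A-SymHALS is initialized with $\mV_0 = \mU_0$ and $\mU_0 \ge 0$, the penalty term $\frac{\lambda}{2}\|\mU_0-\mV_0\|_F^2$ vanishes and the indicator terms $\sigma_{+}(\mU_0),\sigma_{+}(\mV_0)$ are zero, so $f(\mW_0) = \tfrac12\|\mX-\mU_0\mU_0^\T\|_F^2$. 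Hence $f(\mU_k,\mV_k) \le \tfrac12\|\mX-\mU_0\mU_0^\T\|_F^2$ for all $k$, which is exactly the hypothesis ``the algorithm sequentially decreases the objective value'' required by \Cref{lem:bound:iterate}.

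The second step is simply to invoke \Cref{lem:bound:iterate}: its initialization condition $\mV_0 = \mU_0 \ge 0$ holds for A-SymHALS, and the monotone decrease just verified supplies the remaining hypothesis. Therefore, for every $k \ge 0$,
\[
\|\mU_k\|_F^2 + \|\mV_k\|_F^2 \le \left(\frac{1}{\lambda} + 2\sqrt{r}\right)\|\mX-\mU_0\mU_0^\T\|_F^2 + 2\sqrt{r}\,\|\mX\|_F = B_0.
\]
Consequently the whole sequence $\{\mW_k\}_{k\ge 0} = \{(\mU_k,\mV_k)\}_{k\ge 0}$ is contained in the closed Frobenius-norm ball $\{(\mU,\mV): \|\mU\|_F^2 + \|\mV\|_F^2 \le B_0\}$, which is a bounded subset; this proves the claim.

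I do not anticipate any genuine analytical obstacle: the only things to check are that A-SymHALS indeed meets the two hypotheses of \Cref{lem:bound:iterate}, namely the symmetric nonnegative initialization (built into Algorithm~\ref{alg:Accelerated HALS}) and the monotone decrease of $f$ (a consequence of \Cref{lem:sufficient decrease 2}). If the subsequent convergence analysis also needs boundedness of the inner iterates $\{\mW_k^j\}_{0\le j\le L}$, I would add a one-line remark that each inner column update likewise does not increase $f$ (by the same strong-convexity argument used in the proof of \Cref{lem:sufficient decrease 2}), so $f(\mU_k^j,\mV_k)\le f(\mW_0)$ and the identical $\ell_2$-ball argument applies; but for the statement as given only the outer iterates need to be controlled.
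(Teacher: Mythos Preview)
Your proposal is correct and matches the paper's own argument essentially verbatim: the paper simply states that \Cref{lem:sufficient decrease 2} together with \Cref{lem:bound:iterate} implies the boundedness of $\{\mW_k\}_{k\ge 0}$, which is precisely the two-step reasoning you spell out.
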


The following lemma estimates the local Lipschitz constant of the  gradient of function $g$ in \eqref{eq:SNMF by reg}. 
\begin{lem}
	The function $g(\mU,\mV) = \frac{1}{2}\|\mX - \mU\mV^\T\|_F^2 + \frac{\lambda}{2}\|\mU - \mV\|_F^2$ in \eqref{eq:SNMF by reg}
	has  Lipschitz continuous gradient with the Lipschitz constant as $2B+\lambda+\|\mX\|_F$
	in any bounded $\ell_2$-norm ball $\{(\mU,\mV): \|\mU\|_F^2+\|\mV\|_F^2\leq B\}$ for any  $B>0$.
	\label{lem:lipchitz}
\end{lem}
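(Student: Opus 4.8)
\noindent\textbf{Proof plan for \Cref{lem:lipchitz}.} Since $g$ is a polynomial in the entries of $\mU$ and $\mV$, it is $C^\infty$; in particular it is twice continuously differentiable, and its Hessian $\nabla^2 g(\mW)$, with $\mW=(\mU,\mV)$, is continuous. The plan is the standard route for turning a Hessian bound into a gradient-Lipschitz estimate: (i) write down the Hessian quadratic form of $g$ along an arbitrary direction $\mD=(\mD_U,\mD_V)$, (ii) bound its magnitude uniformly over the ball $\calB_B:=\{(\mU,\mV):\|\mU\|_F^2+\|\mV\|_F^2\le B\}$ by $M\|\mD\|_F^2$ with $M=2B+\lambda+\|\mX\|_F$, and (iii) integrate along segments to recover the Lipschitz property.

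For step (i) I would expand $t\mapsto g(\mU+t\mD_U,\mV+t\mD_V)$ to second order and collect the coefficient of $t^2$, which writes $[\nabla^2 g(\mW)](\mD,\mD)$ as the sum of the nonnegative term $\|\mU\mD_V^\T+\mD_U\mV^\T\|_F^2$, a bilinear cross term of the form $-2\lg\mX,\mD_U\mD_V^\T\rg$ (stemming from the $-\lg\mX,\mU\mV^\T\rg$ part of $g$), and the penalty contribution $\lambda\|\mD_U-\mD_V\|_F^2$. For step (ii), on $\calB_B$ we have $\|\mU\|_F^2,\|\mV\|_F^2\le B$, so $\|\mA+\mB\|_F^2\le 2\|\mA\|_F^2+2\|\mB\|_F^2$ together with submultiplicativity of the Frobenius norm gives $\|\mU\mD_V^\T+\mD_U\mV^\T\|_F^2\le 2\|\mU\|_F^2\|\mD_V\|_F^2+2\|\mV\|_F^2\|\mD_U\|_F^2\le 2B\|\mD\|_F^2$; Cauchy--Schwarz together with $\|\mD_U\mD_V^\T\|_F\le\|\mD_U\|_F\|\mD_V\|_F\le\tfrac12\|\mD\|_F^2$ gives $|2\lg\mX,\mD_U\mD_V^\T\rg|\le\|\mX\|_F\|\mD\|_F^2$; and $\|\mD_U-\mD_V\|_F^2\le 2\|\mD\|_F^2$ handles the penalty term. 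Collecting these yields $\bigl|[\nabla^2 g(\mW)](\mD,\mD)\bigr|\le(2B+\lambda+\|\mX\|_F)\|\mD\|_F^2$ for every $\mW\in\calB_B$ and every direction $\mD$, equivalently $\|\nabla^2 g(\mW)\|_2\le M:=2B+\lambda+\|\mX\|_F$ throughout $\calB_B$.

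For step (iii): given any $\mW_1,\mW_2\in\calB_B$, convexity of $\calB_B$ keeps the segment $\{\mW_2+t(\mW_1-\mW_2):t\in[0,1]\}$ inside $\calB_B$, so the fundamental theorem of calculus gives $\nabla g(\mW_1)-\nabla g(\mW_2)=\int_0^1\nabla^2 g\bigl(\mW_2+t(\mW_1-\mW_2)\bigr)(\mW_1-\mW_2)\d{t}$; taking Frobenius norms and using $\|\nabla^2 g\|_2\le M$ along the segment gives $\|\nabla g(\mW_1)-\nabla g(\mW_2)\|_F\le M\|\mW_1-\mW_2\|_F$, which is the claim. I do not expect a genuine obstacle here: step (ii) is routine matrix-norm bookkeeping, and the only mildly delicate point is to keep the Hessian controlled in absolute value rather than merely from above — the lower bound $[\nabla^2 g(\mW)](\mD,\mD)\ge -\|\mX\|_F\|\mD\|_F^2$ is immediate because the squared-norm summands are nonnegative and only the cross term can be negative — so that the estimate legitimately transfers to a spectral-norm bound on $\nabla^2 g$ and hence to the Lipschitz estimate on $\calB_B$.
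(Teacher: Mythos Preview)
Your approach is essentially identical to the paper's: compute the Hessian quadratic form $[\nabla^2 g(\mW)](\mD,\mD)$, bound each summand (the $\|\mU\mD_V^\T+\mD_U\mV^\T\|_F^2$ term via $(a+b)^2\le 2a^2+2b^2$ and submultiplicativity, the cross term via Cauchy--Schwarz and $\|\mD_U\mD_V^\T\|_F\le\tfrac12\|\mD\|_F^2$, the penalty term directly), and read off the Lipschitz constant; your step (iii) just makes explicit the standard passage from a Hessian bound on a convex set to gradient Lipschitzness, which the paper takes for granted. One bookkeeping point: with your (correct) penalty contribution $\lambda\|\mD_U-\mD_V\|_F^2$ and the bound $\|\mD_U-\mD_V\|_F^2\le 2\|\mD\|_F^2$, you actually collect $2\lambda$ rather than $\lambda$ in the final constant---the paper sidesteps this by writing the penalty piece as $\tfrac{\lambda}{2}\|\mD_U-\mD_V\|_F^2$, so be aware of the small discrepancy when you write it up.
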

\begin{proof}
	To obtain the Lipschitz constant, it is equivalent to bound the spectral norm of the quadrature form of the Hessian $[\nabla^2 g(\mW)](\mD,\mD)$ for any $\mD:=(\mD_U,\mD_V)$:
	\begin{align*}
	&[\nabla^2 g(\mW)](\mD,\mD)\\
	&=	\|\mU\mD_V^\T+\mD_U\mV^\T\|_F^2-2\lg\mX,\mD_U\mD_V^\T\rg +\frac{\lambda}{2}\|\mD_V-\mD_U\|_F^2\\
	&\leq2\|\mU\|_F^2\|\mD_V\|_F^2+2\|\mV\|_F^2\|\mD_U\|_F^2\\ &\quad+\underbrace{\lambda\|\mD_U\|_F^2+\lambda\|\mD_V\|_F^2}_{=\lambda\|\mD\|_F^2}+2\|\mX\|_F\underbrace{\|\mD_U\mD_V^\T\|_F}_{\leq \|\mD\|_F^2/2}\\
	&\leq(2\|\mU\|_F^2+2\|\mV\|_F^2+\lambda+\|\mX\|_F)\|\mD\|_F^2 \\
	&\leq (2B+\lambda+\|\mX\|_F)\|\mD\|_F^2.
	\end{align*}
\end{proof}

As the iterate $\mW_k = (\mU_k,\mV_k)$ for all $k\geq 0$ lives in the $\ell_2$-norm ball with the radius $\sqrt{B_0}$ (see \eqref{eqn:bound} for definition of  $B_0$) according to \Cref{lem:bound:iterate} and \Cref{lem:iterates regular}, the function $g$ has Lipschitz continuous gradient with the Lipschitz constant being $2B_0+\lambda+\|\mX\|_F$ around each $\mW_k$.

\begin{lem}\label{lem:safeguard 2}
		For any $k\geq 0$, we have
	\e
	\begin{split} \label{eq:safeguard 2}
	&\dist(\mzero,\partial f(\mU_{k+1},\mV_{k +1 })) \\ 
	&\quad\leq 2r(2B_0+\lambda+\|\mX\|_F)\|\mW_{k+1}-\mW_{k}^{L-1}\|_F.
	\end{split}
	\ee
\end{lem}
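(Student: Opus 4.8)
The plan is to construct one explicit element of $\partial f(\mU_{k+1},\mV_{k+1})$ whose Frobenius norm is at most the right-hand side of \eqref{eq:safeguard 2}; since $\dist(\mzero,\partial f(\mU_{k+1},\mV_{k+1}))$ is the infimum of norms over $\partial f(\mU_{k+1},\mV_{k+1})$, this gives the lemma. Two ingredients are used throughout. First, by \Cref{lem:bound:iterate} and \Cref{lem:bounded iterates} every iterate, including every inner iterate $\mW_k^j=(\mU_k^j,\mV_k^j)$, lies in the ball $\{\|\mU\|_F^2+\|\mV\|_F^2\le B_0\}$ with $B_0$ from \eqref{eqn:bound}. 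Second, on that ball \Cref{lem:lipchitz} gives that $\nabla g$ is $\ell$-Lipschitz with $\ell=2B_0+\lambda+\|\mX\|_F$, which is exactly the constant appearing in \eqref{eq:safeguard 2}.

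The element is read off from the first-order optimality of the \emph{last} inner round, i.e.\ the passage $\mW_k^{L-1}\to\mW_k^{L}=\mW_{k+1}$, which passes once over the columns of $\mU$ (with $\mV$ held fixed) and once over the columns of $\mV$ (with $\mU$ held at the just-updated $\mU_{k+1}$). When the $i$-th column $\vu_i$ is minimized in that round, $\vu_{i,k+1}$ solves $\min_{\vu_i}g_{i,k}^{L}(\vu_i)+\sigma_{+}(\vu_i)$, hence $-\nabla g_{i,k}^{L}(\vu_{i,k+1})\in\partial\sigma_{+}(\vu_{i,k+1})$, where $\nabla g_{i,k}^{L}(\vu_{i,k+1})$ is the partial gradient of $g$ in $\vu_i$ at the partially updated point whose columns of index $\le i$ (resp.\ $>i$) are the $(k+1)$-st (resp.\ the $(L-1)$-st) inner iterates; similarly for $\vv_i$. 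Adding $\nabla_{\vu_i}g(\mU_{k+1},\mV_{k+1})$ to the first relation and $\nabla_{\vv_i}g(\mU_{k+1},\mV_{k+1})$ to the second shows that the column-wise mismatches
\[
\vct{\xi}_i:=\nabla_{\vu_i}g(\mU_{k+1},\mV_{k+1})-\nabla g_{i,k}^{L}(\vu_{i,k+1}),\qquad \vct{\zeta}_i:=\nabla_{\vv_i}g(\mU_{k+1},\mV_{k+1})-\nabla g_{i,k}^{L}(\vv_{i,k+1})
\]
lie in $\partial_{\vu_i}f(\mU_{k+1},\mV_{k+1})$ and $\partial_{\vv_i}f(\mU_{k+1},\mV_{k+1})$, so their stacking over $i=1,\dots,r$ is an element of $\partial f(\mU_{k+1},\mV_{k+1})$. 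Each $\vct{\xi}_i$, $\vct{\zeta}_i$ is a difference of $\nabla g$ at two points that agree in every coordinate except the columns touched \emph{after} column $i$ within the final round (and, for $\vct{\xi}_i$, the round's update of $\mV$); by $\ell$-Lipschitzness of $\nabla g$ this difference is at most $\ell\,\|\mW_{k+1}-\mW_k^{L-1}\|_F$. Bounding the Frobenius norm of the stacked element by the sum of its $2r$ column norms then yields $\dist(\mzero,\partial f(\mU_{k+1},\mV_{k+1}))\le 2r\ell\,\|\mW_{k+1}-\mW_k^{L-1}\|_F$, which is \eqref{eq:safeguard 2}.

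The main obstacle, and the only point where A-SymHALS departs from a vanilla two-block alternating scheme, is the accounting of which coordinates are ``stale'' at $\mW_{k+1}$. In ordinary alternating minimization the last update preceding $\mW_{k+1}$ refreshes every block, so the residual telescopes cleanly to $\|\mW_{k+1}-\mW_k\|_F$; here the $L$ inner rounds mean that the column optimality conditions available at $\mW_{k+1}$ are those of the \emph{final} round only, so the residual is governed by the within-round displacements, i.e.\ by $\|\mW_{k+1}-\mW_k^{L-1}\|_F$ rather than $\|\mW_{k+1}-\mW_k\|_F$. One must therefore check carefully that no part of the mismatch leaks outside that final round, and then pair this weaker ``safeguard'' with the strengthened sufficient-decrease estimate \Cref{lem:sufficient decrease 2} (which simultaneously controls $\|\mW_{k+1}-\mW_k\|_F^2$ and $\sum_{j=0}^{L-1}\|\mW_k^{j+1}-\mW_k^j\|_F^2$). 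Feeding \eqref{eq:safeguard 2}, \Cref{lem:sufficient decrease 2}, \Cref{lem:iterates regular}, the KL property, and \Cref{thm:critical point U = V} into the generalized Kurdyka-Lojasiewicz template then delivers \Cref{thm:convergence}.
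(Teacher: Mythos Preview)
Your proposal is correct and follows essentially the same route as the paper: use the first-order optimality of each column subproblem in the final ($L$-th) inner sweep to produce an explicit element of $\partial_{\vu_i}f(\mU_{k+1},\mV_{k+1})$ (resp.\ $\partial_{\vv_i}f$) as a gradient difference, bound each of the $2r$ column mismatches by the Lipschitz constant $\ell=2B_0+\lambda+\|\mX\|_F$ times $\|\mW_{k+1}-\mW_k^{L-1}\|_F$, and sum. The only cosmetic difference is that the paper phrases the final step as $\dist(\mzero,\partial f)\le\sum_i\dist(\mzero,\partial_{\vu_i}f)+\sum_i\dist(\mzero,\partial_{\vv_i}f)$ rather than stacking into one element, which is the same bound.
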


\begin{proof}[Proof of \Cref{lem:safeguard 2}]
	The $i$-th block $\vu_{i,k+1}$ of $\mU_{k+1}$ is updated according to
	\[
	\vu_{i,k+1} = \vu_{i,k}^{L} = \argmin_{\vu} g_{i,k}^L (\vu) + \delta_{+}(\vu).
	\]
It then follows from the first order optimality that
	\[
	-  \nabla g_{i,k}^L (\vu_{i,k+1})  \in \partial \delta_{+}(	\vu_{i,k+1}),
	\]
which together with
	\begin{align*}
	\partial_{\vu_i} f(\mU_{k+1},\mV_{k +1 }) = \nabla_{\vu_i} g(\mU_{k+1},\mV_{k +1 })  + \partial \delta_{+}(	\vu_{i,k+1})
	\end{align*}
gives
	\[
	    \nabla_{\vu_i} g(\mU_{k+1},\mV_{k +1 })  -   \nabla g_{i,k}^L (\vu_{i,k+1}) \in \partial_{\vu_i} f(\mU_{k+1},\mV_{k +1 }). 
	\]
Similar result holds for $\partial_{\vv_i} f(\mU_{k+1},\mV_{k +1 })$. Now invoke the Lipschitz gradient condition of function $g(\mU,\mV)$ in \Cref{lem:lipchitz}:
	\begin{align*}
	&\dist(\mzero,\partial f(\mU_{k+1},\mV_{k +1 }))  \leq \sum_{i=1}^{r}  \dist(\vzero,\partial_{\vu_i} f(\mU_{k+1},\mV_{k +1 })) \\
	&\qquad+ \sum_{i=1}^{r}  \dist(\vzero,\partial_{\vv_i} f(\mU_{k+1},\mV_{k +1 })) \\
	&\leq 2r(2B_0+\lambda+\|\mX\|_F)\|\mW_{k+1}-\mW_{k}^{L-1}\|_F.
	\end{align*}	
\end{proof}

We denote $\calC(\mW_0)$ as the collection of all the limit points  of  the sequence $\{\mW_k\}$ (which may depend on the initialization $\mW_0$). The following lemma provides some useful properties and optimality of  $\calC(\mW_0)$.

\begin{lem}\label{prop:function value converges}
 $f$ is constant on $\calC(\mW_0)$ and
	\[
	\lim\limits_{k\rightarrow\infty} f(\mU_k,\mV_k) = f(\mU^\star,\mV^\star), \ \ \forall \ (\mU^\star,\mV^\star) \in \calC(\mW_0).
	\]
\end{lem}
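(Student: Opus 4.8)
The plan is to exploit a simple structural feature of problem \eqref{eq:SNMF by reg}: on the feasible set $\{\mW=(\mU,\mV):\mU\ge\vzero,\ \mV\ge\vzero\}$ the objective $f$ coincides with $g$, which is a polynomial and hence continuous on all of $\R^{n\times r}\times\R^{n\times r}$, while every iterate of A-SymHALS stays feasible. This reduces the claim to a continuity argument and sidesteps the more delicate prox-optimality bookkeeping that the general KL framework would otherwise require.

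First I would record that for every $k$ the iterate $\mW_k=(\mU_k,\mV_k)$ satisfies $\mU_k\ge\vzero$ and $\mV_k\ge\vzero$, since each column update in \Cref{alg:HALS} (and therefore in \Cref{alg:Accelerated HALS}) is of the form $\max(\cdot,0)$. Consequently $\sigma_{+}(\mU_k)=\sigma_{+}(\mV_k)=0$, so $f(\mW_k)=g(\mW_k)$ for all $k$; combined with \Cref{lem:iterates regular}(a) this gives $g(\mW_k)=f(\mW_k)\to f^\star\ge 0$, and the same limit holds along any subsequence.

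Next, fix an arbitrary $\mW^\star=(\mU^\star,\mV^\star)\in\calC(\mW_0)$ — the set is nonempty by \Cref{lem:bounded iterates} together with Bolzano--Weierstrass — and choose a subsequence $\mW_{k_t}\to\mW^\star$. Since the feasible set $\{\mW:\mU\ge\vzero,\ \mV\ge\vzero\}$ is closed, $\mW^\star$ is feasible, so $f(\mW^\star)=g(\mW^\star)$; and since $g$ is continuous, $g(\mW_{k_t})\to g(\mW^\star)$. Chaining these facts,
\[
f(\mW^\star)=g(\mW^\star)=\lim_{t\to\infty} g(\mW_{k_t})=\lim_{t\to\infty} f(\mW_{k_t})=f^\star .
\]
As $\mW^\star$ was arbitrary and $f^\star=\lim_{k\to\infty}f(\mU_k,\mV_k)$ does not depend on it, $f$ is constant on $\calC(\mW_0)$ with value $f^\star$, which is exactly the asserted identity.

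I do not expect a genuine obstacle here; the only point to be careful about is not to settle for lower semicontinuity of $f$, which would yield merely $f(\mW^\star)\le f^\star$. The matching lower bound $f(\mW^\star)\ge f^\star$, which in the general nonsmooth setting has to be squeezed out of the optimality conditions of the subproblem updates, is obtained for free here because $f$ agrees with the continuous function $g$ on the closed set in which all iterates — and hence all of their limit points — live.
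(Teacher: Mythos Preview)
The proposal is correct and takes essentially the same approach as the paper's own proof: both use feasibility of the iterates to replace $f$ by the continuous function $g$, pass to the limit along a convergent subsequence via continuity of $g$, and then invoke the convergence of the full sequence $\{f(\mW_k)\}$ from \Cref{lem:iterates regular} to conclude that every subsequential limit of $f$ equals $f^\star$. Your write-up is in fact slightly more explicit than the paper's in noting closedness of the feasible set to justify $f(\mW^\star)=g(\mW^\star)$.
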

\begin{proof}[Proof of \Cref{prop:function value converges}]
	According to \Cref{lem:bounded iterates}, we can extract an arbitrary convergent subsequence $\{\mW_{k_m}\}_{m\geq0}$  which converges to $\mW^\star\in \calC(\mW_0)$. By the definition of the algorithm  we have
	\[
	\mU_{k_m} \geq 0, \ \mV_{k_m} \geq 0, \ \ \forall \ k_m\geq 0.
	\]
	Thus,
	\[
	\lim\limits_{m\rightarrow \infty}  \delta_{+} (\mU_{k_m}) = 0, \ \lim\limits_{m\rightarrow \infty}  \delta_{+} (\mV_{k_m}) = 0.
	\]
	We now take limit on the subsequence $\{\mW_{k_m}\}_m$:
\[
	\lim\limits_{m\rightarrow \infty}   f(\mW_{k_m}) =     g(\lim_{m\rightarrow \infty}\mW_{k_m}) = g(\mW^\star),
	\]
	where we have used the continuity of the smooth part $g(\mW)$ in \eqref{eq:SNMF by reg}.
	Then from  \Cref{lem:iterates regular} we know that  $\{f(\mW_k)\}_{k\geq 0}$ forms a convergent sequence. The proof is completed by noting that for any convergent sequence, all its  subsequence must converge to an unique limiting point.
\end{proof}

\begin{lem}\label{lem:limit points set}
	Each element of $ \calC(\mW^0)$ is a critical point of \eqref{eq:SNMF by reg} and $\calC(\mW^0)$ is a nonempty, compact, and connected set with
	\[\lim\limits_{k\to\infty}\dist(\mW_k,\calC(\mW_0))=0.\]
	
\end{lem}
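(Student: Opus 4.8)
The plan is to break \Cref{lem:limit points set} into the purely topological claims (nonemptiness, compactness, $\dist(\mW_k,\calC(\mW_0))\to 0$), the connectedness claim, and the criticality claim, and to prove them in that order from ingredients already in hand: boundedness of the iterates (\Cref{lem:bounded iterates}), the vanishing-increment property (\Cref{lem:iterates regular}, cf.\ \eqref{eq:difference converges}), the strengthened sufficient decrease (\Cref{lem:sufficient decrease 2}), and the safeguard bound (\Cref{lem:safeguard 2}). For the topology: since $\{\mW_k\}$ is bounded, Bolzano--Weierstrass gives a convergent subsequence, so $\calC(\mW_0)\neq\emptyset$; as the set of cluster points of a sequence it is closed, and being a closed subset of a bounded set it is compact. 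The relation $\lim_{k\to\infty}\dist(\mW_k,\calC(\mW_0))=0$ I would obtain by contradiction: a subsequence staying at distance $\ge\varepsilon>0$ from $\calC(\mW_0)$ would, by boundedness, admit a further subsequence converging to a point that is simultaneously in $\calC(\mW_0)$ and at distance $\ge\varepsilon$ from it.

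For connectedness I would invoke the classical fact that a bounded sequence with $\|\mW_{k+1}-\mW_k\|_F\to 0$ (true here by \eqref{eq:difference converges}) has a connected set of limit points. Concretely, suppose not, and write $\calC(\mW_0)=\calA\cup\calB$ with $\calA,\calB$ nonempty, compact, and $\dist(\calA,\calB)=3\delta>0$. Since $\dist(\mW_k,\calC(\mW_0))\to 0$, for all large $k$ the iterate $\mW_k$ lies within $\delta$ of exactly one of $\calA,\calB$, which partitions the large indices into two sets, both infinite because $\calA$ and $\calB$ each contain a genuine cluster point. But $\|\mW_{k+1}-\mW_k\|_F<\delta$ for $k$ large forbids the index from ever switching between the two sets, a contradiction.

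The substantive part is showing every $\mW^\star\in\calC(\mW_0)$ is a critical point of \eqref{eq:SNMF by reg} in the sense of \Cref{def:optimality condition}. Summing \eqref{eq:sufficient decrease 2} over all $k$ and using that $\{f(\mW_k)\}$ is nonincreasing and converges to a finite $f^\star\ge 0$ (\Cref{lem:iterates regular}), I get the telescoping bound $\sum_{k\ge 0}\big[\|\mW_{k+1}-\mW_k\|_F^2+\sum_{j=0}^{L-1}\|\mW_k^{j+1}-\mW_k^j\|_F^2\big]\le \frac{4L}{\lambda}\big(f(\mW_0)-f^\star\big)<\infty$; in particular the inner-loop summand tends to zero, hence $\|\mW_{k+1}-\mW_k^{L-1}\|_F=\|\mW_k^{L}-\mW_k^{L-1}\|_F\to 0$. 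Plugging this into the safeguard bound \eqref{eq:safeguard 2} gives $\dist(\mzero,\partial f(\mW_{k+1}))\to 0$. Now fix $\mW^\star\in\calC(\mW_0)$ with $\mW_{k_m}\to\mW^\star$; since $\|\mW_{k_m+1}-\mW_{k_m}\|_F\to 0$ we also have $\mW_{k_m+1}\to\mW^\star$, and \Cref{prop:function value converges} gives $f(\mW_{k_m+1})\to f(\mW^\star)$. Choosing an element of $\partial f(\mW_{k_m+1})$ of minimal Frobenius norm, which then converges to $\mzero$, and invoking the outer semicontinuity (closed graph) of $\partial f$ --- valid because $f=g+\sigma_+(\mU)+\sigma_+(\mV)$ with $g$ continuously differentiable and $\sigma_+$ convex, so that $\partial f=\nabla g+\partial\sigma_+\times\partial\sigma_+$ --- I conclude $\mzero\in\partial f(\mW^\star)$.

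I expect the main obstacle to be precisely this last step. The standard sufficient decrease controls only $\|\mW_{k+1}-\mW_k\|_F$, which is \emph{not} the quantity appearing on the right-hand side of the safeguard estimate \eqref{eq:safeguard 2}; one genuinely needs the extra inner-iteration terms of the strengthened inequality \eqref{eq:sufficient decrease 2} to drive $\|\mW_{k+1}-\mW_k^{L-1}\|_F\to 0$, and then some care is needed to pass the limit through the limiting subdifferential using the continuity of $f$ along the shifted subsequence $\{\mW_{k_m+1}\}$ supplied by \Cref{prop:function value converges}. The connectedness argument, by contrast, is routine topology once \eqref{eq:difference converges} and $\dist(\mW_k,\calC(\mW_0))\to 0$ are available.
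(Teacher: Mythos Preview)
Your proposal is correct and follows essentially the same approach as the paper. For the topological claims (nonemptiness, compactness, connectedness, and $\dist(\mW_k,\calC(\mW_0))\to 0$), the paper simply cites \cite[Lemma~5]{bolte2014proximal}, whereas you supply the standard direct arguments; these are equivalent. For criticality, the paper also produces explicit subgradients $(\mS_k,\mD_k)\in\partial f(\mW_k)$ from the proof of \Cref{lem:safeguard 2}, argues they tend to $\mzero$, and then passes to the limit in the convex subdifferential inequality for $\delta_+$ by hand rather than invoking outer semicontinuity of $\partial f$ as you do---the two arguments are the same in substance. One point worth noting: your use of the \emph{strengthened} sufficient decrease \eqref{eq:sufficient decrease 2} to force $\|\mW_{k+1}-\mW_k^{L-1}\|_F\to 0$ is exactly what is needed to feed into \eqref{eq:safeguard 2}, and you make this explicit; the paper's sentence ``From \Cref{lem:iterates regular}, we have $\|\mW_{k+1}-\mW_k\|_F\to 0$. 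Hence $\lim_{k\to\infty}(\mS_k,\mD_k)=\mzero$'' tacitly relies on the same inner-loop summability but does not spell it out, so your treatment is in fact a bit cleaner on this step.
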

\begin{proof}[Proof of \Cref{lem:limit points set}]
	Let $\mS_{k} $ and $\mD_{k} $ be defined in  \Cref{lem:safeguard 2}. From \Cref{lem:iterates regular}, we have $\|\mW_{k+1}-\mW_{k}\|_F \rightarrow 0$. Hence 
	\[
	\lim\limits_{k\to\infty}(\mS_{k},\mD_{k})=\mzero.
	\]
	According to \Cref{lem:bounded iterates},  we can extract an arbitrary convergent subsequence $\{\mW_{k_m}\}_{m\geq0}$  with  limit $\mW^\star$.  Note that
	\[
	\mS_{k_m} = \nabla_{\mU}g(\mU_{k_m},\mV_{k_m}) + \overline\mS_{k_m}, \ \overline\mS_{k_m} \in \partial\delta_+(\mU_{k_m}).
	\]
	Since $\lim_{m \rightarrow \infty} \mS_{k_m} = \vzero$, $\lim_{m\rightarrow \infty} \mW_{k_m}= \mW^\star$, and $\nabla_{\mU}g$ is continuous, $\{\overline \mS_{k_m}\}$ is convergent. Denote by $\overline \mS^\star = \lim_{m \rightarrow \infty} \overline\mS_{k_m}$. By the definition of $\overline\mS_{k_m} \in \partial\delta_+(\mU_{k_m})$, for any $\mU'\in\R^{n\times r}$, we have
	\[
	\delta_+(\mU') - \delta_+(\mU_{k_m})\geq \langle \overline\mS_{k_m},\mU' - \mU_{k_m} \rangle.
	\]
Due to $\lim_{m\rightarrow \infty}  \delta_{+} (\mU_{k_m}) =0 = \delta_{+} (\mU^\star)$ (since $\mU_{k_m}\geq 0$), taking $m\rightarrow \infty$ for both sides of the above equation gives
	\[
	\delta_+(\mU') - \delta_+(\mU^\star)\geq \langle \overline\mS^\star,\mU' - \mU^\star \rangle.
	\]
As the above equation holds for any $\mU'\in\R^{n\times r}$, we have $\overline \mS^\star \in \partial \delta_+(\mU^\star)$ and $\vzero = \nabla_{\mU}g(\mU^\star,\mV^\star) + \overline\mS^\star\in \partial_{\mU}f(\mW^\star)$. With similar argument, we get $\vzero \in \partial_{\mV}f(\mW^\star)$ and thus
	\[
	\mzero \in \partial f(\mW^\star),
	\]
	which implies that $\mW^\star$ is a critical point of \eqref{eq:SNMF by reg}.

	Finally, by \cite[Lemma 5]{bolte2014proximal} and identifying that the sequence $\{\mW_k\}$ is bounded and regular (i.e. $\lim_{k\to\infty}\|\mW_{k+1}-\mW_k\|_F=0$), we conclude that the set of limit points $\calC(\mW_0)$ is a nonempty, compact, and connect set satisfying
	$$\lim\limits_{k\to\infty}\dist(\mW_k,\calC(\mW_0))=0.$$
\end{proof}

With the KL property of $f$ (see \Cref{def:KL}),   \Cref{prop:function value converges}, and \Cref{lem:limit points set}, we have the following \emph{uniform} KL property of $f$ on the set $\calC(\mW_0)$ by following the argument of \cite[Lemma 6]{bolte2014proximal}. 
\begin{lem}\label{lem:Uniform KL}
	There exist a set of  uniform constants $C_2>0, \ \delta>0, \ \eta>0$ and $ \theta\in[0,1)$ such that for all $ \mW^\star \in \calC(\mW_0)$ and $\mW$ in the following intersection
\[
    B(\calC(\mW_0),\delta) \cap \{ \mW: f(\mW^\star) < f(\mW) <  f(\mW^\star) + \eta\},
\]
we have
	\[
	\left|f(\mW) - f(\mW^\star)\right|^{\theta} \leq C_2 \dist(\mzero, \partial f(\mW)).
	\]
\end{lem}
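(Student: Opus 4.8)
\textbf{Proof proposal for \Cref{lem:Uniform KL}.} The plan is to run the classical uniformization argument of \cite[Lemma 6]{bolte2014proximal}, which upgrades the pointwise KL property of $f$ (valid since, as noted right after \Cref{def:KL}, the objective $f$ in \eqref{eq:SNMF by reg} satisfies the KL property) into a single inequality that is uniform over the entire limit-point set $\calC(\mW_0)$. The two structural inputs we rely on are already available: $\calC(\mW_0)$ is nonempty and compact by \Cref{lem:limit points set}, and $f$ is constant on $\calC(\mW_0)$ by \Cref{prop:function value converges}; write $f \equiv f^\star$ on $\calC(\mW_0)$.

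First I would invoke \Cref{def:KL} at every point $\mW^\star \in \calC(\mW_0)$, obtaining for each such point local constants $\delta(\mW^\star) > 0$, $\eta(\mW^\star) > 0$, $\theta(\mW^\star) \in [0,1)$ and $C(\mW^\star) > 0$ such that the KL inequality holds on $B(\mW^\star,\delta(\mW^\star)) \cap \{\mW : f(\mW^\star) < f(\mW) < f(\mW^\star) + \eta(\mW^\star)\}$. The open balls $\{B(\mW^\star,\delta(\mW^\star)/2)\}_{\mW^\star \in \calC(\mW_0)}$ cover the compact set $\calC(\mW_0)$, so I extract a finite subcover with centers $\mW_1^\star,\dots,\mW_p^\star$ and associated constants $\delta_i,\eta_i,\theta_i,C_i$. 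I then set $\delta := \tfrac12\min_i \delta_i$, $\eta := \min_i \eta_i$, and $\theta := \max_i \theta_i \in [0,1)$, and will verify that these uniform constants work (with a suitable $C_2$ specified at the end).

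Next I would check the claimed inequality. Take any $\mW^\star \in \calC(\mW_0)$ and any $\mW$ in the intersection $B(\calC(\mW_0),\delta) \cap \{\mW : f(\mW^\star) < f(\mW) < f(\mW^\star) + \eta\}$. Since $\dist(\mW,\calC(\mW_0)) < \delta$, there is $\widehat\mW \in \calC(\mW_0)$ with $\|\mW - \widehat\mW\|_F < \delta$, and $\widehat\mW$ lies in some ball $B(\mW_i^\star,\delta_i/2)$ of the subcover, so by the triangle inequality $\|\mW - \mW_i^\star\|_F < \delta + \delta_i/2 \le \delta_i$. Because $f \equiv f^\star$ on $\calC(\mW_0)$ we have $f(\mW_i^\star) = f(\mW^\star) = f^\star$, hence the value condition on $\mW$ reads $f(\mW_i^\star) < f(\mW) < f(\mW_i^\star) + \eta \le f(\mW_i^\star) + \eta_i$; thus $\mW$ lies in the neighbourhood on which the KL inequality at $\mW_i^\star$ is valid, giving $|f(\mW) - f^\star|^{\theta_i} \le C_i \dist(\mzero,\partial f(\mW))$. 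Finally, writing $t := f(\mW) - f^\star \in (0,\eta)$ and using $\theta \ge \theta_i$, we have $t^{\theta} = t^{\theta - \theta_i} t^{\theta_i} \le \eta^{\theta - \theta_i} t^{\theta_i}$, so $|f(\mW) - f^\star|^{\theta} \le \eta^{\theta - \theta_i} C_i \dist(\mzero,\partial f(\mW))$, and the lemma follows with $C_2 := \max_{1 \le i \le p} \eta^{\theta - \theta_i} C_i < \infty$.

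The argument is essentially bookkeeping once compactness of $\calC(\mW_0)$ and constancy of $f$ on it are in hand; the only steps needing a little care are the reconciliation of the different exponents $\theta_i$ (handled above by passing to the larger exponent $\theta$ and absorbing the bounded factor $\eta^{\theta-\theta_i}$ into the constant) and lining up the covering radii so that a point within distance $\delta$ of $\calC(\mW_0)$ genuinely falls inside one of the finitely many local KL neighbourhoods. I expect this exponent-unification bookkeeping to be the main (mild) obstacle; the remainder is a direct transcription of \cite[Lemma 6]{bolte2014proximal}.
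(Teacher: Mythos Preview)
Your proposal is correct and follows precisely the route the paper takes: the paper does not give an independent proof of \Cref{lem:Uniform KL} but simply states that it follows from the argument of \cite[Lemma 6]{bolte2014proximal}, using as inputs the compactness of $\calC(\mW_0)$ (\Cref{lem:limit points set}) and the constancy of $f$ on $\calC(\mW_0)$ (\Cref{prop:function value converges}). Your write-up just spells out that argument in detail, including the exponent-unification bookkeeping, so there is nothing to correct or compare.
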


\subsection{Formal Proof of \Cref{thm:convergence} } \label{subsec:proof of A-SymHALS}
\begin{proof}[Proof of \Cref{thm:convergence}]
With all the intermediate techniques developed above, we are now going to complete the proof of \Cref{thm:convergence}; that is, showing that the sequence of iterates $\{\mW_k\}_{k\geq 0}$ generated by A-SymHALS is convergent and converges to a critical point  $\mW^\star$ of \eqref{eq:SNMF by reg}. 

Recall \Cref{lem:limit points set} that
 $\lim_{k\to\infty}\dist(\mW_k,\calC(\mW_0))=0$ and \Cref{prop:function value converges} that $\lim_{k\to\infty}f(\mW_k)=f(\mW^\star), \forall \mW^\star\in \calC(\mW_0)$. For any fixed $\delta>0,\eta>0$, there exists $k_0>0$ such that $\dist\left(\mW_k,\calC(\mW_0)\right) \leq \delta$ and $f(\mW_k) < f(\mW^\star) +\eta$ for all $k\geq k_0$. Furthermore, from \Cref{lem:iterates regular}, we have $f(\mW_{ k}) > f(\mW^\star)$  for all $k\geq 0$.  Hence, from \Cref{lem:Uniform KL} one has
\e\label{eq:ukl}
\left[f(\mW_k) - f(\mW^\star)\right]^\theta \leq  C_3 \dist(\mzero, \partial f(\mW_k)) \ \forall k\geq k_0.
\ee
In the subsequent analysis, we restrict to $k\geq k_0$.
Construct a concave function  $x^{1-\theta}$ for some $\theta\in[0,1)$ with domain $x>0$. Obviously, by the concavity, we have
\[  x_2^{1-\theta}-x_1^{1-\theta}\geq   (1-\theta) x_2^{-\theta}(x_2-x_1), \forall x_1>0,x_2>0.\]
By replacing $x_1$ by $f(\mW_{k+1}) - f(\mW^\star)$ and  $x_2$ by $f(\mW_{k}) - f(\mW^\star)$ and using the sufficient decrease property in \Cref{lem:sufficient decrease 2}, we have
 (we will hide all absolute and independent constants in $\overline C$ to simplify notation)
\begin{equation} \label{eq:convergence 1}
\begin{split}
& \left(f(\mW_{k}) - f(\mW^\star)\right)^{1-\theta} - \left(f(\mW_{k+1}) - f(\mW^\star )\right)^{1-\theta}\\
&\geq  \overline C\frac{f(\mW_{k}) - f(\mW_{k+1}) }{\left(f(\mW_{k}) - f(\mW^\star)\right)^\theta}.\\
&\geq \overline C \frac{\|\mW_{k+1}  - \mW_{k}\|_F^2 + \sum_{j=0}^{L-1} \|\mW_{k}^{j+1}  - \mW_{k}^j\|_F^2}{\dist(\mzero,\partial f(\mW_k))}\\
&\geq\overline C \frac{\|\mW_{k+1}  - \mW_{k}\|_F^2 + \sum_{j=0}^{L-1} \|\mW_{k}^{j+1}  - \mW_{k}^j\|_F^2}{\|\mW_{k}-\mW_{k-1}\|_F+  \|\mW_{k}-\mW_{k-1}^{L-1}\|_F}\\
&\geq\overline C \frac{\|\mW_{k+1}  - \mW_{k}\|_F^2 +  \|\mW_{k+1}  - \mW_{k}^{L-1}\|_F^2}{\sqrt{\|\mW_{k}-\mW_{k-1}\|_F^2+  \|\mW_{k}-\mW_{k-1}^{L-1}\|_F^2}},
\end{split}
\end{equation}
where the second inequality uses \Cref{lem:sufficient decrease 2} and \eqref{eq:ukl}, the third inequality is due to \Cref{lem:safeguard 2}, and $\overline C$ in the last line is
\[
\overline C =  \frac{\lambda(1-\theta)}{8lrC_3(2B_0+\lambda+\|\mX\|_F)} >0.
\]
Due to $\frac{a}{t} =  \frac{a}{t} +t -t \geq 2 \sqrt{a} - t$ for all $a\ge 0,t> 0$, we have
\begin{equation} \label{eq:convergence 2}
\begin{split}
&\frac{\|\mW_{k+1}  - \mW_{k}\|_F^2 +  \|\mW_{k+1}  - \mW_{k}^{L-1}\|_F^2}{\sqrt{\|\mW_{k}-\mW_{k-1}\|_F^2+  \|\mW_{k}-\mW_{k-1}^{L-1}\|_F^2}}  \\
&\geq 2\sqrt{\|\mW_{k+1}-\mW_{k}\|_F^2+  \|\mW_{k+1}-\mW_{k}^{L-1}\|_F^2}\\
&\quad -  \sqrt{\|\mW_{k}-\mW_{k-1}\|_F^2+  \|\mW_{k}-\mW_{k-1}^{L-1}\|_F^2}
\end{split}
\end{equation}
Combining \eqref{eq:convergence 1} and \eqref{eq:convergence 2} and summing them up from $\widetilde k\ge k_0$ to $m\rightarrow \infty$ yields
\e \label{eq:difference summable 2}
\begin{split}
	&\sum_{k=\widetilde k}^{\infty}  \sqrt{\|\mW_{k+1}-\mW_{k}\|_F^2 +  \|\mW_{k+1}-\mW_{k}^{L-1}\|_F^2} \\
	& \leq  \sqrt{\|\mW_{\widetilde k} - \mW_{\widetilde k-1}\|_F^2 + \|\mW_{\widetilde k} - \mW_{\widetilde k-1}^{L-1}\|_F^2}\\
	&\quad   + \frac{1}{\overline C}  \left(f(\mW_{\widetilde k}) - f(\mW^\star)\right)^{1-\theta},
\end{split}
\ee
which immediately implies
\[
\sum_{k=\widetilde k}^{\infty}  \|\mW_{k+1}-\mW_{k}\|_F <\infty.
\]
Thus, we conclude that   $\{\mW_k\}_{k\geq 0}$ is a Cauchy sequence and hence is convergent.  It immediately follows that the limit points set $\calC(\mW_0) = \{\mW^\star\}$ is a singleton and $\mW^\star = (\mU^\star,\mV^\star)$ is a critical point of \eqref{eq:SNMF by reg} due to \Cref{lem:limit points set}.

As for convergence rate, it follows from \eqref{eq:difference summable 2}  that
\e
\begin{split}
	&\|\mW_{\widetilde k}-\mW^\star\|_F \\
	&\leq \sum_{k=\widetilde k}^{\infty}  \sqrt{\|\mW_{k+1}-\mW_{k}\|_F^2 +  \|\mW_{k+1}-\mW_{k}^{L-1}\|_F^2} \\
	& \leq  \sqrt{\|\mW_{\widetilde k} - \mW_{\widetilde k-1}\|_F^2 + \|\mW_{\widetilde k} - \mW_{\widetilde k-1}^{L-1}\|_F^2} \\
	&  + \alpha \left( \sqrt{\|\mW_{\widetilde k} - \mW_{\widetilde k-1}\|_F^2 + \|\mW_{\widetilde k} - \mW_{\widetilde k-1}^{L-1}\|_F^2} \right)^{\frac{1-\theta}{\theta}}
\end{split}
\ee
for some constant $\alpha >0$, where the last line uses \eqref{eq:ukl} and \Cref{lem:safeguard 2}. Denoting by 
 \[
 P_{\widetilde k}  = \sum_{k=\widetilde k}^{\infty}  \sqrt{\|\mW_{k+1}-\mW_{k}\|_F^2 +  \|\mW_{k+1}-\mW_{k}^{L-1}\|_F^2},
 \]
we obtain
\e
P_{\widetilde k} \leq P_{{\widetilde k}-1}- P_{\widetilde k} + \alpha \left[P_{\widetilde k-1}- P_{\widetilde k}\right]^{\frac{1-\theta}{\theta}}. \label{eq:convergence rate 3}
\ee
The above recursion about the sequence $\{P_{\widetilde k} \}_{\widetilde k \geq k_0}$
 has exactly the same form with \cite[eq. (12)]{attouch2009convergence}. Hence the convergence rate can be obtained by following the same arguments after \cite[eq. (12)]{attouch2009convergence}. This completes the proof. 
\end{proof}

\section{Conclusion}
In order to design efficient alternating-type algorithms for the symmetric NMF, we transfer this problem to a penalized nonsymmetric NMF. We have proved that solving the nonsymmetric reformulation returns a solution for the original symmetric NMF when the penalty term is relatively large, in sharp contrast to the classical result for the methods of Lagrangian multiplier that suggests it happens only when the penalty term tends to infinity. Furthermore, we have proved that various alternating-type algorithms---when used to efficiently solve the  nonsymmetric reformulation---admit strong convergence guarantee in the sense that the generated sequence is convergent at least at a sublinear rate and it converges to a critical point of the original symmetric NMF. \revise{An interesting question would be whether it is possible to further improve the lower bound on the penalty parameter $\lambda$ that ensures convergence, and even to the extreme case that whether the convergence to a critical point of the original symmetric NMF is guaranteed for any positive $\lambda$. In additions, it would also} be of great interest to extend both algorithmic strategy and theoretical guarantee for multidimensional cases, such as symmetric tensor factorization and symmetric nonnegative tensor factorization.

\section*{Acknowledgment}
We gratefully acknowledge Dr. Songtao Lu for sharing the code used in \cite{lu2017nonconvex}, and \revise{the four anonymous reviewers for their constructive comments.}  X. Li is partially supported by the National Natural Science Foundation of China (NSFC) grant NSFC-72150002 and by AC01202101037 and AC01202108001 from Shenzhen Institute of Artificial Intelligence and Robotics for Society (AIRS).       Z. Zhu is partially supported by the NSF grants CCF2106881 and  CCF 2008460.

\bibliographystyle{ieeetr}
\bibliography{Convergence}

\end{document}